\newcommand{\reda}[1]{\marginpar{\textcolor{blue}{Reda: #1}}}
\tikzset{
  rect/.style={
    draw=black!60, fill=blue!20, rectangle,
    minimum width=35mm, minimum height=12mm,
    font=\small\bfseries, rounded corners=2pt
  },
  circ/.style={
    draw=black!60, fill=orange!30, circle,
    minimum size=10mm, font=\small\bfseries
  },
  leg/.style={draw=black!70, thin},
  freeleg/.style={
    draw=black!70, thin,
    postaction={decorate,decoration={
      markings, mark=at position 1 with {
        \draw[fill=black!60] circle (1.2pt);
      }
    }}
  }
}
\newtheorem{lemma}{Lemma}
\newtheorem{theorem}{Theorem}
\newtheorem{proposition}{Proposition}
\newtheorem*{theorem*}{Theorem}
\newtheorem*{lemma*}{Lemma}
\newtheorem*{proposition*}{Proposition}
\newcommand\myeq{\mathrel{\stackrel{\makebox[0pt]{\mbox{\normalfont\tiny def}}}{=}}}
\algrenewcommand\algorithmicrequire{\textbf{Input:}}
\algrenewcommand\algorithmicensure{\textbf{Output:}}
\title{SHAP Meets Tensor Networks: \\ Provably Tractable Explanations with Parallelism}
\author{%
  Reda Marzouk\thanks{Equal Contribution} \\
  LIRMM, UMR 5506, University of Montpellier, CNRS \\
  \texttt{mohamed-reda.marzouk@umontpellier.fr} 
  \And
  Shahaf Bassan* \\
  The Hebrew University of Jeursalem \\
\texttt{shahaf.bassan@mail.huji.ac.il}
  \And
  Guy Katz \\
  The Hebrew University of Jeursalem \\
  \texttt{g.katz@mail.huji.ac.il} \\
}
\begin{document}

\maketitle

\begin{abstract}
Although Shapley additive explanations (SHAP) can be computed in polynomial time for simple models like decision trees, they unfortunately become NP-hard to compute for more expressive black-box models like neural networks --- where generating explanations is
often most critical. In this work, we analyze the problem of computing SHAP explanations for \emph{Tensor Networks (TNs)}, a broader and more expressive class of models than those for which current exact SHAP algorithms are known to hold, and which is widely used for neural network abstraction and compression. First, we introduce a general framework for computing provably exact SHAP explanations for general TNs with arbitrary structures. Interestingly, we show that, when TNs are restricted to a \emph{Tensor Train (TT)} structure, SHAP computation can be performed in \emph{poly-logarithmic} time using \emph{parallel} computation. Thanks to the expressiveness power of TTs, this complexity result can be generalized to many other popular ML models such as decision trees, tree ensembles, linear models, and linear RNNs, therefore tightening previously reported complexity results for these families of models. Finally, by leveraging reductions of binarized neural networks to Tensor Network representations, we demonstrate that SHAP computation can become \emph{efficiently tractable} when the network’s \emph{width} is fixed, while it remains computationally hard even with constant \emph{depth}. This highlights an important insight: for this class of models, width --- rather than depth --- emerges as the primary computational bottleneck in SHAP computation.

    \end{abstract}

\section{Introduction}
Shapley additive explanations (SHAP)~\cite{lundberg2017unified} represent a widely adopted method for obtaining post-hoc explanations for decisions made by ML models. However, one of its main limitations lies in its \emph{computational intractability}~\cite{bertossi20, VanDenBroeck2021, marzouk2025computationaltractabilitymanyshapley}. While some frameworks mitigate this intractability using sampling heuristics or approximations, such as KernelSHAP~\cite{lundberg2017unified,covert2021improving}, FastSHAP~\cite{jethani2021fastshap}, DeepSHAP~\cite{Chen2021}, Monte Carlo-based methods~\cite{Castro2009, strumbelj2014explaining, mitchell2022sampling, fumagalli2023shap, fumagalli2024kernelshap, chau2022rkhs}, or leverage score sampling~\citep{musco2025provably}, others focus on \emph{exact} SHAP computations, by leveraging structural properties of certain model families to derive polynomial-time algorithms, such as the popular TreeSHAP~\cite{lundberg2020explainable} algorithm, and its variants \cite{yu2022linear, mitchell2020gputreeshap, yang2021fast, muschalik2024beyond}.

However, exact SHAP frameworks are confined to relatively simple models, such as tree-based models. For more expressive models like neural networks, where explanations are often most critical, NP-Hardness results have unfortunately been shown to hold~\cite{VanDenBroeck2021, marzouk2025computationaltractabilitymanyshapley}.  In this work, we conduct a complexity-theoretic analysis of the problem of computing exact SHAP explanations for the class of \textit{Tensor Networks (TNs)}, a broader and more expressive class of models compared to previously studied models such as tree-based models or linear models.


\textbf{Tensor Networks.} Originally introduced in the field of Quantum physics \cite{biamonte2020lecturesquantumtensornetworks}, TNs have gradually garnered attention of the ML community where they were employed to model various ML tasks, ranging from classification and regression tasks \cite{novikov2015tensorizing,stoudenmire, selvan2020tensor, CHEN2022249, chenhao, Harvey2025} to probabilistic modeling~\cite{miller2021tensor, pestun2017tensornetworklanguagemodel, su2024languagemodelingusingtensor, PhysRevResearch.6.033261, glasser2019expressive, novikov2021tensor}, dimensionality reduction~\citep{ma2022cost} and model compression~\citep{hayashi2019exploring}. From the perspective of explainable AI, the study of TNs is interesting in two ways. First, TNs offer a powerful modeling framework with function approximation capabilities comparable to certain families of neural networks \cite{ali2021approximation, rabusseau19a}. Second, the structured physics-inspired architecture of TNs enables the derivation of efficient, theoretically grounded solutions of XAI-related problems \cite{Liu2018EntanglementBasedFE, Ran:2023vhi, AIZPURUA2025130211}. These two properties make TNs an interesting family of models that enjoy two arguably sought-after desiderata of ML models in the field of explainability: \begin{inparaenum}[(i)] \item high expressiveness; and \item enhanced transparency \cite{rudin2019stop, kruschel2025challenging}.\end{inparaenum}

\textbf{Contributions.} 
In this work, we provide a computational complexity-theoretic analysis of the problem of SHAP computation for the family of TNs. Following the line of work in~\citep{lundberg2020explainable,  arenas2023complexity, Zern2023Interventional, kara2024shapley, marzouk2024tractability, marzouk2025computationaltractabilitymanyshapley, huang2024updates} on tractable SHAP computation for simpler models, our main technical contributions are as follows:
\begin{enumerate}
    \item \textbf{SHAP for general TNs.}  We introduce a framework for computing provably exact SHAP explanations for general TNs with arbitrary structure, offering the first exact algorithm for generating SHAP values in this model class.   
    \item \textbf{SHAP for Tensor Trains (TTs).} We provide a deeper computational study of the SHAP problem for the particular class of \emph{Tensor Trains (TTs)}, a popular subfamily of TNs that exhibits better tractability properties than general TNs. Interestingly, we show that computing SHAP for the family of TTs is not only computable in polynomial-time, but also belongs to the complexity class \textsf{NC}, i.e. it can be computed in \emph{poly-logarithmic} time using \emph{parallel} computation. 
    This complexity result bridges a significant expressivity gap by establishing the tractability of SHAP computation for a model family that is significantly more expressive than those previously known to admit exact SHAP algorithms, while also demonstrating its efficient parallelizability.
   
   \item \textbf{From TTs to improved complexity bounds for SHAP across additional ML models.} Via reduction to TTs, we show that the \textsf{NC} complexity result can be extended to the problem of SHAP computation for a wide range of additional popular ML models, including tree ensembles, decision trees, linear RNNs, and linear models, across various distributions, thus tightening previously known complexity results. 
    This advancement benefits SHAP computation for these models in two key ways: \begin{inparaenum}[(i)] \item it enables substantially more efficient computation through poly-logarithmic parallelism, and \item it broadens the class of distributions used to compute SHAP’s expected value, which captures more complex feature dependencies than those employed in current implementations.\end{inparaenum}
    
    \item \textbf{From TTs to SHAP for Binarized Neural Networks (BNNs).} Finally, through reductions from TTs, we reveal new complexity results for computing SHAP for BNNs via \emph{parameterized complexity}~\cite{flum2006parameterized, downey2012parameterized}, a framework for assessing how structural parameters impact computational hardness. We find that while SHAP remains hard when \emph{depth} is fixed, it becomes polynomial-time computable when \emph{width} is bounded --- highlighting width as the main computational bottleneck. We then further strengthen this insight by proving that fixing both width and \emph{sparsity} (via the reified cardinality) renders SHAP \emph{efficiently tractable}, even for arbitrarily large networks. This opens the door to a new, relaxed class of neural networks that permit efficient SHAP computation.

\end{enumerate}

Beyond these core complexity results, which form the central focus of our work, our contributions also shed light into two novel complexity-theoretic aspects of SHAP computation that, to the best of our knowledge, have not been explored in prior literature:


\textbf{When is computing SHAP efficiently parallelizable?} To the best of our knowledge, this is the first work to provide a complexity-theoretic analysis of the computational \emph{parallelizability} of SHAP computation. Our work provides a tighter complexity bound for many previously known tractable SHAP configurations by investigating conditions under which SHAP computation can be parallelized to achieve polylogarithmic-time complexity. Notably, we show that this is achievable for several classical ML models, including decision trees, tree ensembles, linear RNNs, and linear models --- paving the way for a line of future research in this direction.


 
\textbf{What is the computational bottleneck of SHAP computation for neural networks?} To the best of our knowledge, while prior work has established that computing SHAP for general neural networks is computationally hard~\cite{VanDenBroeck2021, marzouk2025computationaltractabilitymanyshapley}, our work presents the first \emph{fine-grained analysis} of how different structural parameters influence this complexity. Focusing on binarized neural networks, we show that SHAP becomes efficiently computable when both the network’s width and sparsity are fixed, whereas it remains hard even with constant depth. We believe this insight paves the way for further exploration of neural network relaxations that enable efficient SHAP computation and invites broader theoretical investigation into other structural parameters and architectures where SHAP may be tractable.


Due to space constraints, we include only brief outlines of some proofs in the main text, with complete proofs provided in the appendix.



\section{Preliminaries}
\label{background}
\textbf{Notation.} For integers $(i,n)$ with $i \leq n$, let $e_i^n$ denote the one-hot vector of length $n$ with a 1 in the $i$-th position and 0 elsewhere. The vector $1_{n} \in \mathbb{R}^{n}$ denotes the vector equal to 1 everywhere. For integers $m$ and $n$, we use the notation $m^{\otimes n} \myeq \underbrace{[m] \times \ldots \times [m]}_{n \text{ times}}$ where $[m] = \{1,2,\ldots, m\}$.



\textbf{Complexity classes.} In this work, we will assume familiarity with standard complexity classes such as polynomial time (\textsf{P}), and nondeterministic polynomial time (\textsf{NP} and \textsf{coNP}). We further analyze the complexity class \#\textsf{P}, which captures the number of accepting paths of a nondeterministic polynomial-time Turing machine, and is generally regarded as significantly ``harder'' than NP~\citep{arora2009computational}. Moreover, we analyze the complexity class \textsf{NC}, which includes problems solvable in \emph{poly-logarithmic} time using a polynomial number of \emph{parallel} processors, typically on a Parallel Random Access Machine (PRAM)~\cite{cook2023towards} (see the appendix for a full formalization). Intuitively, a problem is in \textsf{NC} if it can be \emph{efficiently solved in parallel}. While $\textsf{NC} \subseteq \textsf{P}$ is known, it is widely believed that $\textsf{NC} \subsetneq \textsf{P}$~\cite{arora2009computational}. The class \textsf{NC} is further divided into subclasses $\textsf{NC}^{k}$ for some integer $k$. The parameter $k$ designates the logarithmic order of circuits that can solve computational problems in $\textsf{NC}^{k}$. For example, $\textsf{NC}^{1}$ contains problems solvable with circuits of logarithmic depth, while $\textsf{NC}^{2}$ allows circuits of quadratic logarithmic depth, capturing slightly more complex parallel computations.

Finally, our work also draws on concepts from \emph{parameterized complexity} theory~\citep{downey2012parameterized, flum2006parameterized}.
In this framework, problems are evaluated based on two inputs: the main input size $n$ and an additional measure known as the \emph{parameter} $k$, with the aim of confining the combinatorial explosion to $k$ rather than $n$. We focus on the three most commonly studied parameterized complexity classes: \begin{inparaenum}[(i)] \item \textsf{FPT} (Fixed-Parameter Tractable), comprising problems solvable in time $g(k) \cdot n^{O(1)}$ for some computable function $g$, implying tractability when $k$ is small; \item \textsf{XP} (slice-wise Polynomial), where problems can be solved in time $n^{g(k)}$, with a polynomial degree that may grow with $k$, thus offering weaker tractability guarantees than \textsf{FPT}; and \item \textsf{para-NP}, which captures problems with the highest sensitivity to $k$, where a problem is para-NP-hard if it remains NP-hard even when $k$ is fixed to a \emph{constant}. It is widely believed that $\textsf{FPT} \subsetneq \textsf{XP} \subsetneq \textsf{para-NP}$~\citep{downey2012parameterized}.\end{inparaenum}

\textbf{Shapley values.} \label{subsec:ISHAP}
Let $n_{in}, n_{out} \geq 1$ be two integers. Fix a discrete input space $\mathbb{D} = [N_1] \times [N_2] \times \cdots \times [N_{n_{in}}]$, a model 
$
M : \mathbb{D} \to \mathbb{R}^{n_{\mathrm{out}}}
$, and a probability distribution $P$ over $\mathbb{D}$. For an input instance
$
x = (x_1, x_2, \dots, x_{n_{\mathrm{in}}}) \in \mathbb{D}
$, the SHAP attribution vector assigned to the feature $i \in [n_{in}]$ is defined as:
\begin{equation} \label{eq:attribution}
\mathbf{\phi}_i(M,x,P) = \sum_{S \subseteq [n_{in}] \setminus \{i\}} W(S) \cdot \,\Bigl(  V_{M}(x, S \cup \{i\}; P)  - V_{M}(x, S; P) \Bigr) \in \mathbb{R}^{n_{out}} \end{equation}

where $W(S) \myeq \frac{|S|! (n_{in} - |S| - 1)!}{n_{in}!}$. We assume the common \emph{marginal} (or \emph{interventional}) value function: $V_M(x,S; P) := \mathbb{E}_{x' \sim P}\Bigl[ M\bigl(x_S, x'_{\bar{S}}\bigr) \Bigr]$~\citep{janzing2020feature, Sundararajan2019}. For $j \in [n_{\text{out}}]$, the $j$-th component of $\phi_{i}(M, x, P)$ represents the attribution of feature $i$ to the $j$-th output of the model on input $x$.


\section{Tensors, Tensor Networks and Binarized Neural Networks}

\subsection{Tensors} \label{subsec:TN}

A tensor is a multi-dimensional array that generalizes vectors and matrices to higher dimensions, referred to as \textit{indices}. The dimensionality of a tensor, i.e. the number of its indices, defines its \textit{order}. Elements of a tensor $\mathcal{T} \in \mathbb{R}^{d_1 \times \ldots \times d_n}$ are denoted $\mathcal{T}_{i_{1}, \ldots, i_{n}}$. For $j \in [n]$, the slice $\mathcal{T}_{i_{1},\ldots,i_{j-1},:,i_{j+1}, \ldots, i_{n}}$ is a vector in $\mathbb{R}^{d_j}$ whose $k$-th entry is equal to $\mathcal{T}_{i_1, \ldots, i_{j-1},k,i_{j+1}, \ldots, i_{n}}$ for $k \in [d_j]$. A key operation over tensors is \textit{the contraction operation}, which generalizes matrix multiplication to high-order tensors. Formally, given two tensors $\mathcal{T}^{(1)} \in \mathbb{R}^{d_{1}  \times \ldots \times d_{n}}$ and $\mathcal{T}^{(2)} \in \mathbb{R}^{d'_{1} \times \ldots \times d'_{m}}$, and two indices $(i,j) \in  [n] \times [m]$, such that $d_{i} = d'_{j}$ 
The contraction operation between $\mathcal{T}^{(1)}$ and $\mathcal{T}^{(2)}$ over their respective indices $i$ and $j$ produces another tensor, denoted $\mathcal{T}^{(1)} \times_{(i,j)} \mathcal{T}^{(2)}$, over $\mathbb{R}^{d_{1} \times \ldots d_{i-1} \times d_{i+1} \times \ldots \times d_{n} \times d'_{1} \times \ldots \times d'_{j-1} \times d'_{j+1} \times \ldots \times  d'_{m}}$ such that:
    \begin{align*}
\left( \mathcal{T}^{(1)} \times_{(i,j)} \mathcal{T}^{(2)} \right)_{a_1,\dots,a_{i-1},a_{i+1}, \dots ,a_n, b_1,\dots,b_{j-1},b_{j+1},\dots,b_m} &\myeq   \\ & \hspace{-6em} \sum_{k=1}^{d_i} \mathcal{T}^{(1)}_{a_1,\dots,a_{i-1},k,a_{i+1},\dots,a_m} \cdot \mathcal{T}^{(2)}_{b_1,\dots,b_{j-1},k,b_{j+1},\dots,b_n}
\end{align*}

The contraction operation over a single pair of shared indices as defined above can be  generalized to sets of shared indices in a natural fashion: For a set $S \subseteq [n] \times [m]$, the \textit{multi-leg contraction operation} between $\mathcal{T}^{(1)}$ and $\mathcal{T}^{(2)}$ over $S$, denoted $\mathcal{T}^{(1)} \times_{S} \mathcal{T}^{(2)}$ applies leg contraction to all pairs of indices in $S$. Note that this operation is commutative --- the result of the operation doesn't depend on the contraction order.



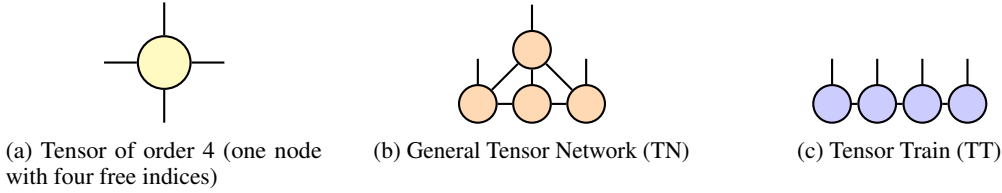
\begin{figure}[h!]
\centering
\begin{subfigure}[t]{0.30\textwidth}
\centering
\begin{tikzpicture}[thick, scale=0.8]
  \node[draw, circle, minimum size=0.7cm, fill=yellow!30] (A) at (0,0) {};
  \draw (A) -- ++(0,1.0);
  \draw (A) -- ++(1.0,0);
  \draw (A) -- ++(0,-1.0);
  \draw (A) -- ++(-1.0,0);
\end{tikzpicture}
\caption{Tensor of order 4 (one node with four free indices)}
\label{fig:order4_tensor}
\end{subfigure}%
\hfill
\begin{subfigure}[t]{0.30\textwidth}
\centering
\begin{tikzpicture}[thick, scale=0.6]
  \node[draw, circle, minimum size=0.5cm, fill=orange!30] (A) at (0,0) {};
  \node[draw, circle, minimum size=0.5cm, fill=orange!30] (B) at (1.2,0) {};
  \node[draw, circle, minimum size=0.5cm, fill=orange!30] (C) at (2.4,0) {};
  \node[draw, circle, minimum size=0.5cm, fill=orange!30] (D) at (1.2,1.2) {};
  \draw (A) -- (B);
  \draw (B) -- (C);
  \draw (B) -- (D);
  \draw (A) -- (D);
  \draw (C) -- (D);
  \draw (D) -- ++(0,1);
  \draw (A) -- ++(0,1);
  \draw (C) -- ++(0,1);
\end{tikzpicture}
\caption{General Tensor Network (TN)}
\label{fig:general_TN}
\end{subfigure}%
\hfill
\begin{subfigure}[t]{0.30\textwidth}
\centering
\begin{tikzpicture}[thick, scale=0.6]
  \node[draw, circle, minimum size=0.5cm, fill=blue!20] (A) at (0,0) {};
  \node[draw, circle, minimum size=0.5cm, fill=blue!20] (B) at (1.0,0) {};
  \node[draw, circle, minimum size=0.5cm, fill=blue!20] (C) at (2.0,0) {};
  \node[draw, circle, minimum size=0.5cm, fill=blue!20] (D) at (3.0,0) {};
  \draw (A) -- ++(0,1);
  \draw (B) -- ++(0,1);
  \draw (C) -- ++(0,1);
  \draw (D) -- ++(0,1);
  \draw (A) -- (B);
  \draw (B) -- (C);
  \draw (C) -- (D);
\end{tikzpicture}
\caption{Tensor Train (TT)}
\label{fig:tensor_train}
\end{subfigure}

\caption{Illustrations of (1) A tensor of order 4, (2) A general TN comprising 4 tensors of order 4 and 3 free indices, and (3) A tensor train (TT).}
\label{fig:tn_overview}
\end{figure}

\subsection{Tensor Networks (TNs)} 
TNs provide a structured representation of high-order tensors by decomposing them into interconnected collections of lower-order tensors bound by the contraction operation. TNs can be modeled as a graph in which each node corresponds to a tensor and each edge corresponds to an index. Edges that are incident to only one node represent free indices of the overall tensor, while edges shared between two nodes represent contracted indices (see Figure \ref{fig:general_TN}). The tensor encoded by the network is obtained by carrying out all contractions prescribed by the graph structure. This representation of high-order tensors can be advantageous because the storage requirements and computational cost scale with the ranks of the intermediate tensors and the network topology, rather than with the full dimensionality of the original tensor.

\textbf{Tensor Trains (TTs).} General TNs with arbitrary topologies can be computationally challenging to handle~\cite{xu2023complexity, stoian2022np}. TTs form a subclass of TNs with a one-dimensional topology that exhibits better tractability properties~\cite{oseledets2011tensortrain}. A TT decomposes a high-order tensor into a linear chain of third-order ``cores'' (see Figure~\ref{fig:tensor_train}). Formally, a TT $\mathcal{T}$ is a TN parametrized as $\llbracket \mathcal{T}^{(1)}, \ldots, \mathcal{T}^{(n)} \rrbracket$, where each core $\mathcal{T}^{(i)} \in \mathbb{R}^{d_{i-1} \times N_i \times d_i}$. The TT $\mathcal{T}$ corresponds to a tensor in $\mathbb{R}^{d_0 \times N_1 \times \ldots \times N_n \times d_n}$ obtained by contraction.

$$\mathcal{T} = \mathcal{T}^{(1)} \times_{(3,1)} \mathcal{T}^{(2)}\times_{(3,1)} \ldots \times_{(3,1)} \mathcal{T}^{(n)}$$

By convention, when $d_{0}$ (resp. $d_{n}$) is equal to $1$, the tensor core $\mathcal{T}^{(1)}$ (resp. $\mathcal{T}^{(n)}$) is a matrix (i.e., a Tensor of order 2).




\subsection{Binarized Neural Networks (BNNs)}
A \emph{Binarized Neural Network} (BNN) is a neural network in which both the weights and activations are constrained to binary values, typically $\{-1, +1\}$. In a BNN, each layer performs a sequence of operations: \begin{inparaenum}[(i)] \item a linear transformation, \item batch normalization, and \item binarization\end{inparaenum}. Formally, for an input vector $\mathbf{x} \in \{-1, +1\}^n$, the output $\mathbf{y} \in \{-1, +1\}^m$ of a layer is computed as follows \cite{courbariaux2016binarized}:
\[
\mathbf{z} = \mathbf{W} \cdot \mathbf{x} + \mathbf{b}, \quad
\mathbf{z}' = \gamma \cdot \frac{\mathbf{z} - \mu}{\sigma} + \beta, \quad
\mathbf{y} = \text{sign}(\mathbf{z}'),
\]

$\mathbf{W} \in \{-1, +1\}^{m \times n}$, $\mathbf{b} \in \mathbb{R}^m$, and $\gamma, \beta, \mu, \sigma \in \mathbb{R}^m$ are the batch normalization parameters. The sign function is applied element-wise, mapping positive inputs to $+1$ and non-positive inputs to $-1$.

\textbf{Reified Cardinality Representation of BNNs.} In Section~\ref{sec:reduction}, we analyze how network sparsity affects the complexity of computing SHAP for BNNs, using the \textit{reified cardinality parameter} from~\cite{jia2020efficient}. A reified cardinality constraint is a binary condition of the form $\left( \sum_{i=1}^{n} x_i \geq R \right)$, where $x_i$ are Boolean variables and $R$ is the reified cardinality parameter. In BNNs, each neuron is encoded with such a constraint: for binary inputs $x_j$ and weights $w_{ij} \in \{-1, +1\}$, define $l_{ij} = x_j$ if $w_{ij}=+1$, and $l_{ij} = \neg x_j$ otherwise. The output $y_i$ is then: $
y_i \leftarrow \left( \sum_{j=1}^{n} l_{ij} \geq R_i \right)
$ where $R_i$ is derived from the neuron's bias and normalization. Jia and Rinard~\cite{jia2020efficient} showed that all BNNs can be expressed this way, and that constraining $R_i$ during training improves formal verification efficiency.

\section{Provably Exact SHAP Explanations for TNs: A General Framework} \label{sec:tensorshap}
In this section, we introduce a general framework for computing provably exact SHAP explanations for TNs with arbitrary structures, which provides the backbone of more tractable SHAP computational solutions. We begin by formalizing the problem and transitioning from the classic SHAP formulation to an equivalent \emph{tensorized} representation that facilitates our proofs.

\textbf{A tensorized representation of Shapley values.} We adopt an equivalent \emph{tensorized} form of the SHAP formula in Equation~\eqref{eq:attribution} by defining the Marginal SHAP Tensor $\mathcal{T}^{(M,P)} \in \mathbb{R}^{\mathbb{D} \times n_{in} \times n_{out}}$ as:

    \begin{equation} \label{eq:marginalshaptensor}
       \forall (x,i) \in \mathbb{D} \times [n_{in}]:~~ \mathcal{T}^{(M,P)}_{x,i,:} \myeq \phi_{i}(\mathcal{T}^{M},x,\mathcal{T}^{P})
    \end{equation}

The Marginal SHAP Tensor summarizes the full SHAP information of the model $M$, and will play a crucial role in the technical development of this paper. Given $\mathcal{V}^{(M,P)}$, The SHAP Matrix of the input $x = (x_1, \ldots, x_{n_{in}}) \in \mathbb{D}$ is obtained using the straightforward contraction operation:   
\begin{equation} \label{eq:marginalshap_as_tensor}
     \Phi(\mathcal{T}^{M},x,\mathcal{T}^{P}) = \mathcal{T}^{(M,P)} \times_{S} \Bigl[ e_{x_{1}}^{N_{1}} \circ \ldots \circ e_{x_{n_{in}}}^{N_{n_{in}}}  \Bigr]
\end{equation}
where $S \myeq \Bigl \{(k,k): k \in [n_{in}+1] \Bigr \}$.

\vspace{0.5em} 

\noindent\fbox{%
    \parbox{\columnwidth}{%
\textbf{SHAP Computation (Problem Statement).}

Given two integers $n_{in},~n_{out} \geq 1$, a finite domain $\mathbb{D}$, a TN model $\mathcal{T}^{M} \in \mathbb{R}^{N_{1} \times \ldots \times N_{n_{in}} \times n_{out}}$ mapping $\mathbb{D}$ to $\mathbb{R}^{n_{out}}$, a TN $\mathcal{T}^{P} \in \mathbb{R}^{N_{1} \times \dots \times N_{n_{in}}}$ implementing a probability distribution over $\mathbb{D}$, and an instance $x \in \mathbb{D}$. The objective is to construct the SHAP Matrix $\Phi(\mathcal{T}^{M},x, \mathcal{T}^{P}) \in \mathbb{R}^{n_{in} \times n_{out}}$, where: $\Phi(\mathcal{T}^{M},x,\mathcal{T}^{P})_{i,:} \myeq \phi_{i}(\mathcal{T}^{M},x,\mathcal{T}^{P})$ (Equation \eqref{eq:attribution}). 
    }%
}


The complexity is measured in terms of the input and output dimensions $n_{in},~n_{out}$, the volume of the input space  $\max\limits_{i \in [n_{in}]} N_{i}$, and the total number of parameters of $\mathcal{T}^{M}$ and $\mathcal{T}^{P}$.

The first step toward the  derivation of exact SHAP explanations for general TNs builds on a reformulation of the Marginal SHAP Tensor (Equation \eqref{eq:marginalshaptensor}) highlighted in the following proposition: 
 \begin{proposition} \label{prop:marginal_shap_tensor}
Define the \emph{modified Weighted Coalitional tensor} $\Tilde{\mathcal{W}} \in \mathbb{R}^{n_{in} \times 2^{\otimes n_{in}}}$ such that $\forall(i,s_{1}, \ldots, s_{n_{in}}) \in [n_{in}] \times [2]^{\otimes n_{in}}$ it holds that $\Tilde{\mathcal{W}}_{i,s_{1},\ldots, s_{n_{in}}}\myeq - W(s - 1_{n_{in}})$ if $s_{i} = 1$, and $\Tilde{\mathcal{W}}_{i,s_{1},\ldots, s_{n_{in}}} \myeq W(s - 1_{n_{in}})$ otherwise. Moreover, define the \emph{marginal value tensor} $\mathcal{V}^{(M,P)}\in \mathbb{R}^{\mathbb{D} \times 2^{\otimes n_{in}} \times n_{out}}$, such that $\forall (x,s) \in \mathbb{D} \times [2]^{\otimes n}$ it holds that $\mathcal{V}^{(M,P)}_{x,s,:} \myeq 
V_{M}(x,s;P)$. Then we have that:
\begin{equation} \label{eq:marginalshaptensor_valuetensor}
\mathcal{T}^{(M,P)} = \Tilde{\mathcal{W}} \times_{S} \mathcal{V}^{(M,P)} \end{equation} where $S \myeq  \Bigl \{(k+1, k + n_{in}+1):~ k \in [n_{in}] \Bigr \}$.
\end{proposition}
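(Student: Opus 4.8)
The plan is to reduce the claimed tensor identity to the scalar ``single-sum'' reformulation of the Shapley value, and then verify that reformulation by a routine reindexing of the coalition sum.

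First I would unfold the multi-leg contraction on the right-hand side of Equation~\eqref{eq:marginalshaptensor_valuetensor}. The only contracted indices in $\Tilde{\mathcal{W}} \times_S \mathcal{V}^{(M,P)}$ are the $n_{in}$ coalition indices (the set $S$ pairs each coalition leg $s_k$ of $\Tilde{\mathcal{W}}$ with the corresponding coalition leg of $\mathcal{V}^{(M,P)}$), so the surviving free indices are the feature index $i$ coming from $\Tilde{\mathcal{W}}$ together with the $\mathbb{D}$- and $n_{out}$-indices coming from $\mathcal{V}^{(M,P)}$ (matching $\mathcal{T}^{(M,P)}$ up to a reordering of free indices). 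Fixing $(x,i) \in \mathbb{D} \times [n_{in}]$ and reading off the corresponding slice, the identity is therefore equivalent to the vector equation
\[\phi_i(M,x,P) \;=\; \sum_{s \in [2]^{\otimes n_{in}}} \Tilde{\mathcal{W}}_{i,s}\, \mathcal{V}^{(M,P)}_{x,s,:} \;=\; \sum_{s \in [2]^{\otimes n_{in}}} \Tilde{\mathcal{W}}_{i,s}\, V_M(x,s;P),\]
where the second equality is just the definition of $\mathcal{V}^{(M,P)}$. Thus it suffices to prove this single-sum form of Equation~\eqref{eq:attribution}.

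Next I would establish that single-sum form. Identifying each $s \in [2]^{\otimes n_{in}}$ with the coalition $b = s - 1_{n_{in}} \in \{0,1\}^{n_{in}}$, equivalently the subset $\{k : s_k = 2\}$, I split the difference in \eqref{eq:attribution} into the positive family $\sum_{S \subseteq [n_{in}]\setminus\{i\}} W(S)\,V_M(x,S\cup\{i\};P)$ and the negative family $-\sum_{S \subseteq [n_{in}]\setminus\{i\}} W(S)\,V_M(x,S;P)$. The map $S \mapsto S \cup \{i\}$ is a bijection from the subsets of $[n_{in}]\setminus\{i\}$ onto the subsets of $[n_{in}]$ that contain $i$, while the negative family already ranges over all subsets not containing $i$; together these index every $s \in [2]^{\otimes n_{in}}$ exactly once, with $s_i = 2$ for the positive family and $s_i = 1$ for the negative one. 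Collecting the coefficient of each $V_M(x,s;P)$ then reproduces $\Tilde{\mathcal{W}}_{i,s}$, with exactly the prescribed sign convention: a positive weight when $i$ lies in the coalition and a negative one otherwise.

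The one step that requires genuine care --- and the place where I expect the only real subtlety --- is the weight bookkeeping in the positive family. There the coalition is $T = S \cup \{i\}$, but the coefficient inherited from \eqref{eq:attribution} is $W(S)$, i.e.\ the Shapley weight evaluated at $|S| = |T|-1$ rather than at $|T|$. This is precisely why $\Tilde{\mathcal{W}}_{i,s}$ must be read as the weight determined by the coalition with feature $i$ excluded (which also keeps the factorial $(n_{in}-|S|-1)!$ well defined, as it would otherwise break for the full coalition). Verifying that, under this reading, both families yield exactly the magnitudes $W(s-1_{n_{in}})$ of the statement completes the argument; everything else is the mechanical substitution of the definitions of $\mathcal{V}^{(M,P)}$ and of the contraction operator $\times_S$.
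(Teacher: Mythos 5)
Your proposal is correct and follows essentially the same route as the paper's own proof: both reduce the tensor identity to the single-sum form $\phi_i(M,x,P) = \sum_{s} \Tilde{\mathcal{W}}_{i,s}\, V_M(x,s;P)$ and establish it by the reindexing bijection $S \mapsto S \cup \{i\}$ (the paper phrases this via a \texttt{swap} operation), after which the contraction over $S$ is immediate. If anything, you are more explicit than the paper about the one genuine subtlety --- that the positive-family coefficient is the Shapley weight of the coalition with feature $i$ \emph{excluded}, a convention the paper's proof adopts silently when it writes $W(s)$ for coalitions with $s_i = 1$.
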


Proposition \ref{prop:marginal_shap_tensor} expresses the Marginal SHAP Tensor $\mathcal{T}^{(M,P)}$ as a contraction of two tensors: the \textit{modified Weighted Coalitional Tensor} and the \textit{Marginal Value Tensor}. This leads to a natural algorithm to construct $\mathcal{V}^{(M,P)}$: First construct $\Tilde{\mathcal{W}}$ and $\mathcal{V}^{(M,P)}$, then contract them as in Equation \eqref{eq:marginalshaptensor_valuetensor} (Algorithm \ref{alg:shap_tn}). Figure \ref{fig:tensorshap} illustrates the resulting TN of this process.


\begin{algorithm}[t]
\caption{The construction of the Marginal SHAP Tensor}\label{alg:shap_tn}
\begin{algorithmic}[1]
\Require Two TNs $\mathcal{T}^{M}$ and $\mathcal{T}^{P}$
\Ensure The Marginal SHAP Tensor $\mathcal{T}^{(M,P)}$
\State Construct the modified Weighted Coalitional Tensor $\Tilde{\mathcal{W}}$ (Lemma \ref{lemma:weightshapfunction})
\State Construct the Marginal Value Tensor $\mathcal{V}^{(M,P)}$ (Lemma \ref{lemma:valuefunctiontn})
\State \Return $\Tilde{\mathcal{W}} \times_{S} \mathcal{V}^{(M,P)}$ (Equation \eqref{eq:marginalshaptensor_valuetensor})
\end{algorithmic}
\end{algorithm}

To complete the picture, we need to show how to construct both tensors $\Tilde{\mathcal{W}}$ (Step 1) and $\mathcal{V}^{(M,P)}$ (Step 2). We split the remainder of this section into two segments, each of which is dedicated to outlining their structure and the running time of their construction. 

\textbf{Step 1: Constructing the modified Weighted Coalitional Tensor.} The Tensor $\Tilde{\mathcal{W}}$ simulates the computation of weights associated with each subset of the input features in the SHAP formulation (Equation \eqref{eq:attribution}). A key observation consists of noting that this tensor admits an efficient representation as a (sparse) TT constructible in $O(1)$ time using parallel processors:

\begin{lemma} \label{lemma:weightshapfunction}
    The modified Weighted Coalitional Tensor $\Tilde{\mathcal{W}}$ admits a TT representation: $\llbracket  \mathcal{G}^{(1)}, \ldots, \mathcal{G}^{(n_{in})} \rrbracket$, where $\mathcal{G}^{(1)} \in \mathbb{R}^{n_{in} \times 2 \times n_{in}^{2}}$, $\mathcal{G}^{(i)} \in \mathbb{R}^{n_{i}^{2} \times 2 \times n_{i}^{2}}$ for any $i \in [2,n_{in}-1]$, and $\mathcal{G}^{(n_{in})} \in \mathbb{R}^{n_{in}^{2} \times 2}$. Moreover, this TT representation is constructible in $O(\log(n_{in}))$ time using $O\left(n_{in}^{3}  \right)$ parallel processors.   
\end{lemma}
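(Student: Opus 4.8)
The plan is to exhibit an explicit TT whose cores implement a small deterministic automaton that sweeps across the coalition indices $s_1, \ldots, s_{n_{in}}$, simultaneously accumulating the coalition cardinality and localizing the sign contributed by the target feature. The starting observation is that $\Tilde{\mathcal{W}}_{i,s}$ factorizes as $\epsilon(s_i)\cdot W(c)$, where $\epsilon(1)=-1$, $\epsilon(2)=+1$, and $W$ is a Shapley weight that depends only on the cardinality $c$ of the coalition encoded by $s-1_{n_{in}}$ on the non-target coordinates. Since $W$ depends on nothing but $c$, the only two pieces of state that must travel along the chain are the running cardinality and the identity of the target feature whose sign has not yet been applied, and both fit into a bond of dimension $n_{in}^2$.

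Concretely, I would index each bond by a pair $(i', c)$ with $i' \in [n_{in}]$ and $c \in \{0, \ldots, n_{in}-1\}$, reading $i'$ as the (fixed) target feature and $c$ as the partial cardinality seen so far. The first core $\mathcal{G}^{(1)}$ uses its open mode $i$ to commit to the target: on input $s_1$ it emits the bond state $(i, c_1)$ with $c_1 = s_1-1$ if $i \neq 1$ and $c_1 = 0$ otherwise, multiplying by $\epsilon(s_1)$ exactly when $i=1$. Each interior core $\mathcal{G}^{(k)}$ is a routing map that forwards $i'$ unchanged, updates $c \mapsto c + (s_k-1)$ when $k \neq i'$, and leaves $c$ unchanged while multiplying by $\epsilon(s_k)$ when $k = i'$. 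The final core $\mathcal{G}^{(n_{in})}$ performs the same update and then collapses the bond to the scalar $W(c_{\mathrm{final}})$. Because every transition is deterministic, the TT contraction reduces to a single product along the unique surviving bond path, which equals $\epsilon(s_i)\cdot W(c_{\mathrm{final}}) = \Tilde{\mathcal{W}}_{i,s}$; not incrementing $c$ at the target position keeps the argument of $W$ in $\{0,\ldots,n_{in}-1\}$, so $W$ is always well-defined. This yields exactly the claimed shapes $\mathcal{G}^{(1)} \in \mathbb{R}^{n_{in} \times 2 \times n_{in}^2}$, $\mathcal{G}^{(k)} \in \mathbb{R}^{n_{in}^2 \times 2 \times n_{in}^2}$, and $\mathcal{G}^{(n_{in})} \in \mathbb{R}^{n_{in}^2 \times 2}$.

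For the resource bounds I would first establish sparsity: determinism means each core has at most one nonzero output per (incoming bond state, physical value) pair, so the first core has $O(n_{in})$ nonzeros and every other core has $O(n_{in}^2)$, for a total of $O(n_{in}^3)$ across the $n_{in}$ cores. Assigning one processor per potential nonzero entry, each entry is a constant-time local computation (a comparison, an increment, and a sign) once the weight values are available. The only nontrivial arithmetic is the weight table $W(c) = c!\,(n_{in}-c-1)!/n_{in}!$ for $c \in \{0,\ldots,n_{in}-1\}$, which I would precompute from the factorials $0!,\ldots,n_{in}!$ via a parallel prefix-product (scan) in $O(\log n_{in})$ depth with $O(n_{in})$ processors. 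Combining, the construction runs in $O(\log n_{in})$ parallel time using $O(n_{in}^3)$ processors.

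The main obstacle is the joint design of the bond: it must accumulate a \emph{global} quantity (the cardinality, needed only at the very end to evaluate $W$) while also applying a sign that is \emph{localized} at the position of the open index $i$, which itself ranges over all features, and it must do so with deterministic, hence sparse, transitions and a bond of only $n_{in}^2$. Carrying the absolute target index $i'$ next to the partial cardinality $c$ is precisely what makes these three requirements compatible; by comparison the factorial computation is a standard parallel-prefix step and is secondary.
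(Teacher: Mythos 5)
Your proposal is correct and follows essentially the same strategy as the paper's proof: a left-to-right deterministic ``state machine'' realized as a TT whose bond carries the pair (target feature, running cardinality), sparsity of the cores giving $O(1)$-time construction per core with $O(n_{in}^{3})$ processors overall, and a parallel prefix product over factorials accounting for the $O(\log(n_{in}))$ depth. Two differences are worth recording. First, you apply the entire weight $W(c)$ once, at the last core, from a precomputed table, whereas the paper distributes it multiplicatively along the chain (its first core carries $(n_{in}-1)!$ or $(n_{in}-2)!$, and each interior core multiplies by $1/j$ or $\tfrac{k+1}{j\,(n_{in}-k-1)}$); both variants meet the stated bounds --- yours keeps all interior entries in $\{0,\pm 1\}$, while the paper's needs only two factorials rather than a full table. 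Second, and more substantively, your cores do not increment the cardinality counter at the target position, so the weight is evaluated at $c=|s\setminus\{i\}|$, whereas the paper's cores increment at every position carrying a coalition member, target included, thus evaluating the weight at the full cardinality of $s-1_{n_{in}}$ (the literal reading of Proposition~\ref{prop:marginal_shap_tensor}). Your bookkeeping is the consistent one: reindexing the first sum of Equation~\eqref{eq:attribution} shows that the coefficient of $V_{M}(x,T;P)$ for a coalition $T\ni i$ is $\tfrac{(|T|-1)!\,(n_{in}-|T|)!}{n_{in}!}=W\bigl(|T\setminus\{i\}|\bigr)$, not $W(|T|)$. Moreover, your convention keeps the weight argument in $\{0,\dots,n_{in}-1\}$, where $W$ is always defined, whereas on the all-features coalition the paper's interior factor $\tfrac{k+1}{j\,(n_{in}-k-1)}$ acquires a vanishing denominator (equivalently, $W$ would have to be evaluated at cardinality $n_{in}$). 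So where your construction and the paper's disagree, yours is the sound version.
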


\paragraph{Step 2: Constructing the Marginal Value Tensor.} The goal of the second step is to construct the Marginal Value Tensor $\mathcal{V}^{(M,P)}$ from the TNs $\mathcal{T}^{M}$ and $\mathcal{T}^{P}$. The following lemma shows how this Tensor can be constructed by means of suitable TN contractions: 
\begin{lemma} \label{lemma:valuefunctiontn}
Let $\mathcal{T}^{M}$ and $\mathcal{T}^{P}$ be TNs implementing the model $M$ and the probability distribution $P$, respectively. Then, the marginal value tensor $\mathcal{V}^{(M,P)}$ can be computed as: $$[\mathcal{M}^{(1)} \circ \ldots\! \circ \mathcal{M}^{(n_{in})} \Bigr] \times_{S_{1}} \mathcal{T}^{M} \times_{S_{2}} \mathcal{T}^{P}$$
where $S_{1}$ and $S_{2}$ are instantiated such that for all $k \in \{1,2\}$ it holds that: $S_{k} \myeq \{(5 -k) \cdot i,i): i \in [n_{in}] \}$, and for any $i \in [n_{in}]$, the (sparse) tensor $\mathcal{M}^{(i)} \in \mathbb{R}^{N_{i} \times 2 \times N_{i}^{\otimes 2}}$ is constructible in $O(1)$ time using $O(N_{i}^{2})$ parallel processors.
\end{lemma}


The collection of tensors $\{\mathcal{M}^{(i)}\}_{i \in [n_{in}]}$ can be interpreted as  \textit{routers} simulating the interventional mechanism in the Marginal SHAP formulation: Depending on the value assigned to its third index (which is binary), it routes the feature value of either its value in the input instance to explain $x$ or from one drawn from the data generating distribution $\mathcal{T}^{P}$ to feed the model's input.

\begin{figure}[t]
  \centering
  \begin{subfigure}[b]{0.45\textwidth}
    \centering
\includegraphics[width=\linewidth]{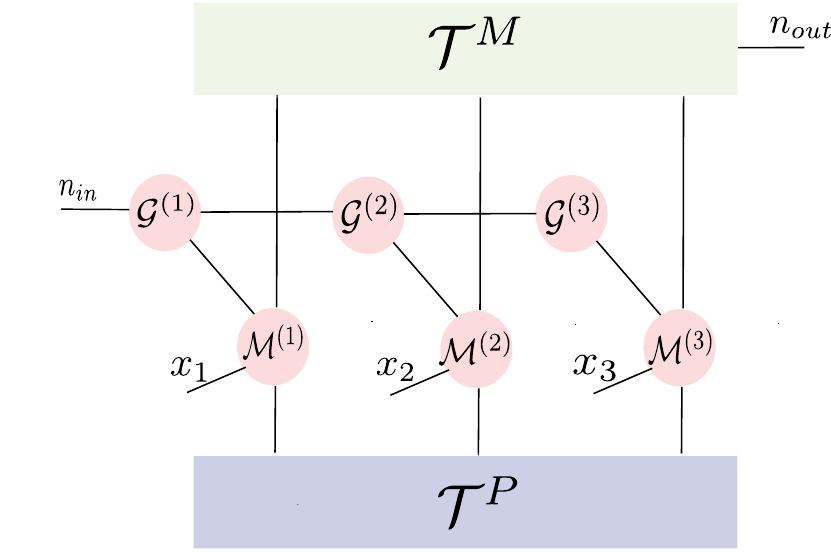}
    \caption{General TNs}
    \label{fig:tensorshap}
  \end{subfigure}
  \hfill
  \begin{subfigure}[b]{0.45\textwidth}
    \centering
\includegraphics[width=\linewidth]{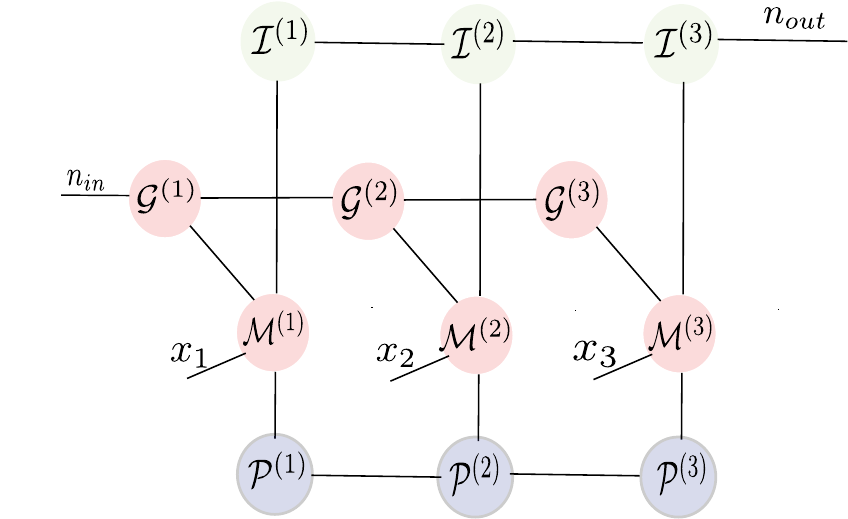}
    \caption{Tensor Trains (TT)}
    \label{fig:tensortrainshap}
  \end{subfigure}
  \caption{The construction of the $\mathcal{T}^{(M,P)}$ TN for a model of 3 features. \emph{The general case:} Both $\mathcal{T}^{M}$ and $\mathcal{T}^{P}$ are general TNs with arbitrary structures; \emph{The TT case:} Both $\mathcal{T}^{M}$ and $\mathcal{T}^{P}$  are TTs.}
  \label{fig:bothtensorshap}
\end{figure}
\section{Provably Exact and \emph{Tractable} SHAP for TTs, and Other ML Models}\label{sec:tensortrainshap}

In this section, we show that SHAP values can be computed in polynomial time for the specific case of Tensor Train (TT) models. Interestingly, we prove that the problem also lies in \textsf{NC}, meaning it can also be solved in \emph{polylogarithmic} time using parallel computation. This result is established in Subsection~\ref{TT_proof_sec}. Then, in Subsection~\ref{other_ml_models_sec}, we demonstrate how this finding can be leveraged --- via reduction --- to tighten complexity bounds for several other popular ML models.

\subsection{Provably exact and \emph{tractable} SHAP explanations for TTs}
\label{TT_proof_sec}


The computational complexity of computing SHAP for general TNs, as discussed in the previous section, naturally depends on the structural properties of the TNs $\mathcal{T}^{M}$ and $\mathcal{T}^{P}$ implementing the model to explain and the data generating distribution, respectively. We begin by presenting a general worst-case negative result, showing that the problem is \#P-Hard when \emph{no structural constraints} are imposed on the tensor networks.

\begin{proposition} \label{prop:sharphard}
    Computing Marginal \text{SHAP} values for general TNs is \#\emph{P-Hard}.
\end{proposition}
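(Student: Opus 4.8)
The plan is to reduce from \#SAT, the canonical \#\textsf{P}-complete problem of counting the satisfying assignments of a CNF formula, exploiting the \emph{efficiency} (completeness) axiom of Shapley values to extract a hard marginal-value quantity from the SHAP output. The starting observation is that, summing Equation~\eqref{eq:attribution} over all features, the contributions telescope (the efficiency axiom) to give $\sum_{i=1}^{n_{in}} \phi_i(M,x,P) = V_M(x,[n_{in}];P) - V_M(x,\emptyset;P) = M(x) - \mathbb{E}_{x' \sim P}[M(x')]$. Since the problem statement asks for the \emph{entire} SHAP matrix $\Phi(\mathcal{T}^M,x,\mathcal{T}^P)$, a single invocation of any SHAP routine returns all $\phi_i$ at once, and hence lets us recover $\mathbb{E}_{x' \sim P}[M(x')]$ whenever the point evaluation $M(x)$ is polynomial-time computable. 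I would then design $\mathcal{T}^M$ and $\mathcal{T}^P$ so that this expectation encodes a model count.

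Concretely, given a CNF $\varphi$ over $n$ Boolean variables (so $N_i = 2$, $\mathbb{D} = [2]^{\otimes n}$, and $n_{out}=1$), I would let the model be the indicator $M(z) = \mathbb{1}[\varphi(z)=1]$ and represent it as a polynomial-size TN $\mathcal{T}^M$: introduce one order-$2$ index per variable, attach to each clause a small tensor that equals $1$ exactly on the local assignments satisfying that clause, and use COPY/$\delta$ tensors to broadcast each variable's value to every clause in which it occurs. For the distribution I would take $P$ uniform, represented by the trivial rank-one TN $\mathcal{T}^P = (\tfrac12,\tfrac12) \circ \cdots \circ (\tfrac12,\tfrac12)$. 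Under these choices $\mathbb{E}_{x' \sim P}[M(x')] = 2^{-n}\sum_{z \in [2]^{\otimes n}}\mathbb{1}[\varphi(z)=1] = 2^{-n}\cdot \#\varphi$, so recovering the expectation recovers the count $\#\varphi$ exactly.

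Putting the pieces together: on input $\varphi$ I build $\mathcal{T}^M,\mathcal{T}^P$ in polynomial time, fix any instance $x$ (e.g. the all-zeros assignment), query the SHAP routine once to obtain $\phi_1,\dots,\phi_{n_{in}}$, and compute $M(x) - \sum_i \phi_i = \mathbb{E}_{x'\sim P}[M(x')] = 2^{-n}\#\varphi$. Because $M(x)$ is a single CNF evaluation and hence linear-time, this yields $\#\varphi$ and solves \#SAT, establishing \#\textsf{P}-hardness. Equivalently, one may phrase the hard core directly in tensor language: $\mathbb{E}_{x'\sim P}[M(x')]$ is precisely the full scalar contraction $\mathcal{T}^M \times_S \mathcal{T}^P$ of two general TNs, and such unrestricted contractions are already \#\textsf{P}-hard.

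The step requiring the most care is the TN encoding of $M$: I must verify that the clause tensors together with the $\delta$-broadcasting gadgets genuinely compute the $\{0,1\}$-valued indicator (and not merely something proportional to it), that the whole construction stays polynomial in $|\varphi|$, and that it conforms to the paper's notion of a TN-representable model mapping $\mathbb{D}\to\mathbb{R}$. The only other point to confirm is that point evaluation $M(x)$ is genuinely cheap, which it is, so that all of the computational difficulty is isolated in the expectation term extracted from the SHAP output.
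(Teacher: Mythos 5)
Your proposal is correct and takes essentially the same route as the paper: a reduction from \#CNF-SAT that encodes the formula as a polynomial-size TN (you glue clause tensors with copy/$\delta$ tensors; the paper builds a per-clause layered automaton and forms their product with copy tensors) and pairs it with the uniform distribution written as a rank-one TN. The step where you recover $2^{-n}\#\varphi$ from the SHAP output via the efficiency axiom is exactly the content of the model-counting-to-SHAP lemma the paper cites from prior work, so the two arguments coincide in substance.
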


\textit{Proof Sketch.} The result is obtained by reduction from the \#CNF-SAT problem (The problem of counting the number of satisfying assignments of a CNF boolean formula ). Essentially, it leverages two facts: \textit{(i)} the model counting problem is polynomially reducible to the SHAP problem \cite{kara2024shapley}. \textit{(ii)} A polynomial-time algorithm to construct an equivalent TN to a given CNF boolean formula. The full proof can be found in Appendix \ref{app:sec:tensortrain}.

Interestingly, the problem however, becomes tractable when we restrict both $\mathcal{T}^{M}$ and $\mathcal{T}^{P}$ to lie within the class of \emph{Tensor Trains} (TTs). To establish this result, we begin with a key observation: the marginal SHAP tensor $\mathcal{V}^{(M,P)}$ itself admits a representation as a TT. Formally:

\begin{theorem} \label{thm:tensortrainshap}
    Let $\mathcal{T}^{M} = \llbracket \mathcal{I}^{(1)}, \ldots , \mathcal{I}^{(n_{in})} \rrbracket$ and  $\mathcal{T}^{P} = \llbracket \mathcal{P}^{(1)}, \ldots, \mathcal{P}^{(n_{in})} \rrbracket$ be two TTs corresponding to the model to interpret and the data-generating distribution, respectively. 
     Then, the Marginal SHAP Tensor $\mathcal{T}^{(M,P)}$ can be represented by a TT parametrized as:
      \begin{equation} \label{eq:tensortrainshapley}
           \llbracket \mathcal{M}^{(1)} \times_{(4,2)} \mathcal{I}^{(1)} \times_{(3,2)} \mathcal{P}^{(1)} \times_{(2,2)} \mathcal{G}^{(1)} , \ldots\! \ldots\!  ,\mathcal{M}^{(n_{in})} \times_{(4,2)} \mathcal{I}^{(n_{in})} \times_{(3,2)} \mathcal{P}^{(n_{in})} \times_{(2,2)} \mathcal{G}^{(n_{in})} \rrbracket
      \end{equation}
      where the collection of Tensors $\{\mathcal{G}^{(i)}\}_{i \in [n_{in}]}$ and $\{\mathcal{M}^{(i)}\}_{i \in [n_{in}]}$ are implicitly defined in Lemma \ref{lemma:weightshapfunction} and Lemma \ref{lemma:valuefunctiontn}, respectively.
\end{theorem}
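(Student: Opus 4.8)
The plan is to derive the claimed factorization by substituting the known structured representations of each ingredient of $\mathcal{T}^{(M,P)}$ into the identity of Proposition~\ref{prop:marginal_shap_tensor}, and then regrouping the resulting contraction network position by position. Concretely, I would start from $\mathcal{T}^{(M,P)} = \widetilde{\mathcal{W}} \times_{S} \mathcal{V}^{(M,P)}$ (Proposition~\ref{prop:marginal_shap_tensor}) and expand the two factors: the weight factor via the TT form $\widetilde{\mathcal{W}} = \llbracket \mathcal{G}^{(1)}, \ldots, \mathcal{G}^{(n_{in})} \rrbracket$ of Lemma~\ref{lemma:weightshapfunction}, and the value factor via the contraction formula $\mathcal{V}^{(M,P)} = [\mathcal{M}^{(1)} \circ \cdots \circ \mathcal{M}^{(n_{in})}] \times_{S_1} \mathcal{T}^{M} \times_{S_2} \mathcal{T}^{P}$ of Lemma~\ref{lemma:valuefunctiontn}. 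Finally I would insert the standing hypothesis that model and distribution are TTs, $\mathcal{T}^{M} = \llbracket \mathcal{I}^{(1)}, \ldots, \mathcal{I}^{(n_{in})} \rrbracket$ and $\mathcal{T}^{P} = \llbracket \mathcal{P}^{(1)}, \ldots, \mathcal{P}^{(n_{in})} \rrbracket$. After these substitutions, $\mathcal{T}^{(M,P)}$ is expressed as a single contraction network over the four core families $\{\mathcal{G}^{(i)}\}$, $\{\mathcal{M}^{(i)}\}$, $\{\mathcal{I}^{(i)}\}$, $\{\mathcal{P}^{(i)}\}$.

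The heart of the argument is a connectivity (locality) analysis of this network. I would verify that every contraction edge is of one of two kinds: an intra-family bond connecting consecutive positions — $\mathcal{G}^{(i)}$ to $\mathcal{G}^{(i+1)}$, $\mathcal{I}^{(i)}$ to $\mathcal{I}^{(i+1)}$, $\mathcal{P}^{(i)}$ to $\mathcal{P}^{(i+1)}$, all adjacent by the TT structure — or an inter-family edge that couples only cores sharing the same position $i$: namely $\mathcal{M}^{(i)}$ to $\mathcal{I}^{(i)}$ (the pair $(4,2)$ contributed by $S_1$), $\mathcal{M}^{(i)}$ to $\mathcal{P}^{(i)}$ (the pair $(3,2)$ contributed by $S_2$), and $\mathcal{M}^{(i)}$ to $\mathcal{G}^{(i)}$ (the binary coalition selector, contracted through $S$). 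I would track the index sets $S$, $S_1$, and $S_2$ through the substitution to confirm that no contraction links cores at distinct positions; the feature-attribution index of $\widetilde{\mathcal{W}}$ and the interventional selector are both position-local, so each router $\mathcal{M}^{(i)}$ acts purely as a same-position hub binding the model, distribution, and weight cores at site $i$.

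Given this locality, I would invoke the order-independence of the multi-leg contraction operation noted in Section~\ref{subsec:TN} to reorder the contractions so that all edges incident to position $i$ are performed first. This produces, for each $i$, the combined core $\mathcal{M}^{(i)} \times_{(4,2)} \mathcal{I}^{(i)} \times_{(3,2)} \mathcal{P}^{(i)} \times_{(2,2)} \mathcal{G}^{(i)}$, whose only remaining open indices are the free physical slot $N_i$ inherited from $\mathcal{M}^{(i)}$ (the one later contracted against $e_{x_i}^{N_i}$), the $n_{out}$ index carried by the model cores together with the $n_{in}$ feature-attribution index carried by the weight cores, and three surviving bond indices toward positions $i-1$ and $i+1$ which merge into a single left bond and a single right bond. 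These combined cores form exactly the chain of Equation~\eqref{eq:tensortrainshapley}, establishing the claim; as a byproduct the combined bond dimension equals the product of the weight, model, and distribution bond dimensions, which remains polynomial and underlies the tractability results to follow.

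The step I expect to be the main obstacle is precisely this locality/connectivity bookkeeping: matching the global index positions appearing in $S$, $S_1$, and $S_2$ — which are written relative to the outer product $\mathcal{M}^{(1)} \circ \cdots \circ \mathcal{M}^{(n_{in})}$ and to the flattened indices of $\widetilde{\mathcal{W}}$ and $\mathcal{V}^{(M,P)}$ — against the per-core indices of $\mathcal{I}^{(i)}$, $\mathcal{P}^{(i)}$, and $\mathcal{G}^{(i)}$, and certifying that every index pairing stays within a single position. Once this alignment is pinned down, the regrouping into the stated cores is a routine consequence of the associativity of contraction, and the rest of the proof is essentially bookkeeping.
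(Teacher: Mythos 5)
Your proposal is correct and follows essentially the same route as the paper's own proof: both substitute the TT parametrization of $\Tilde{\mathcal{W}}$ (Lemma~\ref{lemma:weightshapfunction}) and the router-based formula for $\mathcal{V}^{(M,P)}$ (Lemma~\ref{lemma:valuefunctiontn}), with $\mathcal{T}^{M}$ and $\mathcal{T}^{P}$ in TT form, into Equation~\eqref{eq:marginalshaptensor_valuetensor} and then reorder the contractions so that all same-position edges are performed first, yielding the per-site cores of Equation~\eqref{eq:tensortrainshapley}. Your explicit locality bookkeeping of $S$, $S_{1}$, $S_{2}$ is precisely the ``suitable arrangement of contraction ordering'' that the paper leaves implicit, so you have simply spelled out the same argument in more detail.
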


\textit{Proof Sketch.} The result is obtained by plugging the TT corresponding to $\Tilde{\mathcal{W}}$ (Lemma \ref{lemma:weightshapfunction}) and $\mathcal{V}^{(M,P)}$ (Lemma \ref{lemma:valuefunctiontn}) into Equation \ref{eq:marginalshaptensor_valuetensor}, and performing a suitable arrangement of contraction ordering of the resulting TN. Figure \ref{fig:tensortrainshap} provides a visual description of how the TT structure of the SHAP Value Tensor $\mathcal{V}^{(M,P)}$ emerges when both $\mathcal{T}^{M}$ and $\mathcal{T}^{P}$ are TTs.

\textbf{Efficient parallel computation of SHAP for TTs.} Theorem~\ref{thm:tensortrainshap} shows that computing exact SHAP values for TTs reduces to contracting the Marginal SHAP Tensor --- a TT --- with a rank-1 tensor representing the input (Equation~\ref{eq:marginalshap_as_tensor}). TT contraction is a well-studied problem with efficient parallel algorithms that run in poly-logarithmic time~\cite{miller2021tensor, martin2018parallelizing}, typically using a parallel scan over adjacent tensors. Leveraging this and the fact that matrix multiplication is in \textsc{NC}~\cite{borodin1982fast}, we can prove the following:

\begin{proposition} \label{prop:ttisinnc}
Computing Marginal SHAP for the family of TTs lies in \emph{$\textsc{NC}^{2}$}.
\end{proposition}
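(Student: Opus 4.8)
The plan is to build directly on the TT representation of the Marginal SHAP Tensor furnished by Theorem~\ref{thm:tensortrainshap} and to reduce the entire SHAP computation to an iterated product of polynomially-sized matrices, which is the canonical $\textsf{NC}$ primitive. First I would argue that the $n_{in}$ cores of this TT can all be assembled in parallel in poly-logarithmic depth. By Theorem~\ref{thm:tensortrainshap}, each core is the contraction of the four tensors $\mathcal{M}^{(i)}$, $\mathcal{I}^{(i)}$, $\mathcal{P}^{(i)}$ and $\mathcal{G}^{(i)}$; the building blocks $\mathcal{G}^{(i)}$ and $\mathcal{M}^{(i)}$ are constructible in parallel with polynomially many processors by Lemma~\ref{lemma:weightshapfunction} and Lemma~\ref{lemma:valuefunctiontn}, while $\mathcal{I}^{(i)}$ and $\mathcal{P}^{(i)}$ are part of the input. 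Since contracting a \emph{constant} number of poly-sized tensors is itself an $\textsf{NC}$ operation, and since the bond dimension of the combined TT is the product of the (polynomial) bond dimensions of the constituent TTs and hence still polynomial, this step stays within the target resource bounds.

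Next I would invoke Equation~\eqref{eq:marginalshap_as_tensor}: the SHAP matrix for the instance $x$ is obtained by contracting the Marginal SHAP Tensor with the rank-$1$ tensor $e_{x_{1}}^{N_{1}} \circ \ldots \circ e_{x_{n_{in}}}^{N_{n_{in}}}$. Contracting this rank-$1$ tensor amounts to selecting, in the $i$-th core, the slice indexed by the physical value $x_{i}$, which collapses that core into a matrix acting on the bond (virtual) indices. The only subtlety is that two boundary cores additionally carry the free feature index (of size $n_{in}$, originating from $\mathcal{G}^{(1)}$) and the output index (of size $n_{out}$); these can be absorbed into the boundary matrices — enlarging them by only a polynomial factor — or, equivalently, one can fix a pair $(i,j)\in[n_{in}]\times[n_{out}]$ and run the resulting plain matrix chain for all such pairs in parallel. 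Either way the SHAP computation reduces to evaluating an iterated product $A^{(1)}A^{(2)}\cdots A^{(n_{in})}$ of $n_{in}$ matrices of polynomial dimension.

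Finally I would parallelize this iterated product with a balanced binary tree (a parallel prefix/scan over adjacent cores, exactly the scheme used for TT contraction): the product of $m$ matrices is computed in $O(\log m)$ rounds, each round performing independent pairwise matrix multiplications. Each such multiplication of poly-sized matrices lies in $\textsf{NC}^{1}$ by the classical result of~\cite{borodin1982fast}, contributing $O(\log p)$ depth for $p$-dimensional matrices. Composing the $O(\log n_{in})$ rounds with the $O(\log p)$ depth per round, where $p$ is polynomial in the input size, yields total depth $O(\log^{2})$ with a polynomial number of processors, placing the problem in $\textsf{NC}^{2}$.

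I expect the main obstacle to be the careful bookkeeping of the second step rather than the high-level scheme: one must verify that fixing the physical indices genuinely turns each core into a matrix over polynomially-bounded bond indices, and that carrying the free feature and output indices through the chain — whether by enlarging the boundary matrices or by replicating the chain over all $(i,j)$ pairs — preserves both the polynomial processor count and the associative structure that the parallel-prefix argument requires. Once this is established, the $\textsf{NC}^{2}$ bound follows from the standard depth composition above.
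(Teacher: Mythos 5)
Your proposal is correct and follows essentially the same route as the paper's own proof: both rest on the TT representation from Theorem~\ref{thm:tensortrainshap}, contract each core with the one-hot vectors $e_{x_i}^{N_i}$ in parallel, and then evaluate the resulting matrix chain by a balanced parallel scan of $O(\log n_{in})$ rounds of $\textsf{NC}^{1}$ matrix multiplications, giving overall $O(\log^{2})$ depth with polynomially many processors. Your treatment of the boundary cores carrying the free feature index (size $n_{in}$) and output index (size $n_{out}$) matches how the paper absorbs these into the (polynomially sized) bond dimensions of the chain, so no gap remains.
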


\textit{Proof Sketch.} A parallel procedure to solve this problem runs as follows: First, compute in parallel tensor cores in Equation \eqref{eq:tensortrainshapley}. Given that Matrix Multiplication is in 
$\textsc{NC}^{1}$, this operation is also in $\textsc{NC}^{1}$. Second, following \cite{martin2018parallelizing}, a parallel scan strategy will be applied to contract the resulting TT using a logarithmic depth of matrix multiplication operations. This second operation is performed by a circuit whose depth scales as $\mathcal{O}(\log^{2}(n_{in}))$ yielding the result (see Appendix \ref{app:sec:tensortrain}).

\subsection{Tightening the complexity results of SHAP computations in many other ML models}
\label{other_ml_models_sec}
Thanks to the expressive power of Tensor Trains (TTs), many widely used ML models --- such as tree ensembles, decision trees, linear models, and linear RNNs --- can be reduced to TTs. This reduction allows us to significantly tighten the known complexity bounds for these models. This improvement is crucial for two main reasons:
\begin{enumerate}[label=(\roman*)]
\item It shows that computing SHAP values for all these models is not only polynomial-time solvable but also in the complexity class \textsc{NC}, enabling \emph{efficient parallel computation}.
\item It demonstrates that Marginal SHAP can be computed under TT-based distributions --- a class of distributions  \emph{more expressive} than those previously considered, englobing independent~\citep{arenas2023complexity}, empirical~\citep{VanDenBroeck2021}, Markovian~\citep{marzouk2024tractability}, Hidden Markov distributions~\citep{marzouk2025computationaltractabilitymanyshapley} and Born Machines~\citep{glasser2019expressive}.
\end{enumerate}
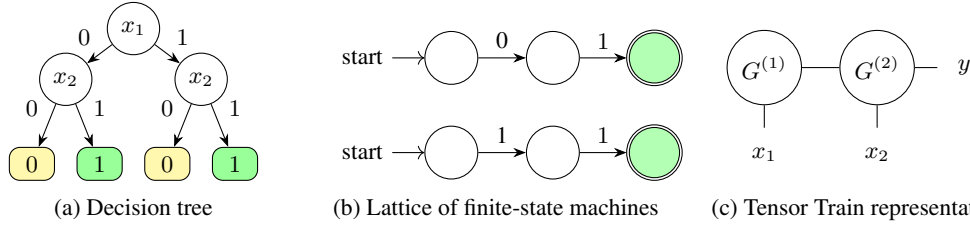
\begin{figure}[t]
\centering

\begin{subfigure}[b]{0.31\textwidth}
\centering
\begin{tikzpicture}[scale=0.9, every node/.style={font=\small}]
  \node[circle,draw] (x1) at (0,0) {$x_1$};
  \node[circle,draw] (x2a) at (-1,-0.75) {$x_2$};
  \node[circle,draw] (x2b) at (1,-0.75) {$x_2$};

  \node[draw,rounded corners,fill=yellow!40,minimum width=0.6cm,minimum height=0.4cm] (y00) at (-1.5,-2) {$0$};
  \node[draw,rounded corners,fill=green!40,minimum width=0.6cm,minimum height=0.4cm] (y01) at (-0.5,-2) {$1$};
  \node[draw,rounded corners,fill=yellow!40,minimum width=0.6cm,minimum height=0.4cm] (y10) at (0.5,-2) {$0$};
  \node[draw,rounded corners,fill=green!40,minimum width=0.6cm,minimum height=0.4cm] (y11) at (1.5,-2) {$1$};

  \draw[-{Stealth}] (x1) -- (x2a) node[midway,above left] {$0$};
  \draw[-{Stealth}] (x1) -- (x2b) node[midway,above right] {$1$};
  \draw[-{Stealth}] (x2a) -- (y00) node[midway,above left] {$0$};
  \draw[-{Stealth}] (x2a) -- (y01) node[midway,above right] {$1$};
  \draw[-{Stealth}] (x2b) -- (y10) node[midway,above left] {$0$};
  \draw[-{Stealth}] (x2b) -- (y11) node[midway,above right] {$1$};
\end{tikzpicture}
\subcaption*{(a) Decision tree}
\end{subfigure}
\hfill
\begin{subfigure}[b]{0.31\textwidth}
\centering
\begin{tikzpicture}[scale=0.9, every node/.style={font=\small}] 
  \foreach \i in {0,1}{
    \pgfmathsetmacro{\yshift}{-(\i)*1.4}
    \pgfmathtruncatemacro{\j}{mod(\i + 1, 2)}
    \node[circle,draw,minimum size=20pt, initial] (s0\i) at (0,\yshift) {};
    \node[circle,draw,minimum size=20pt] (s1\i) at (1.5,\yshift) {};
    \node[circle,draw,minimum size=20pt, fill=green!30, accepting] (s2\i) at (3.0,\yshift) {};
    \draw[-{Stealth}] (s0\i) -- (s1\i) node[midway,above] {\i};
    \draw[-{Stealth}] (s1\i) -- (s2\i) node[midway,above] {$1$};
  }
\end{tikzpicture}
\subcaption*{(b) Lattice of finite-state machines}
\end{subfigure}
\hfill
\begin{subfigure}[b]{0.31\textwidth}
\centering
\begin{tikzpicture}[every node/.style={draw, minimum size=0.7cm, font=\small}]
  \node[circle] (G1) at (0,0) {$G^{(1)}$};
  \node[circle] (G2) at (1.5,0) {$G^{(2)}$};
  \draw[-] (G1) -- ++(0,-0.8) node[below, draw=none] {$x_1$};
  \draw[-] (G2) -- ++(0.8,0) node[right, draw=none] {$y$};
  \draw[-] (G1) -- (G2);
  \draw[-] (G2) -- ++(0,-0.8) node[below, draw=none] {$x_2$};
\end{tikzpicture}
\subcaption*{(c) Tensor Train representation}
\end{subfigure}

\caption{Conversion of a decision tree into an equivalent Tensor Train (TT). (a) A simple decision tree with two binary input features, (b) its equivalent lattice of finite-state automata, where each automaton encodes a distinct path leading to a leaf labeled $1$, and (c) the corresponding TT representation with three free legs: $x_1,~x_2$,for inputs and $y$ for the output. The tensor cores $G^{(1)} \in \mathbb{R}^{2 \times 2}$ and $G^{(2)} \in \mathbb{R}^{2 \times 2 \times 2}$ have ranks equal to the number of automata in the lattice, i.e $\texttt{rank}(G^{(i)}) = 2$ for $i \in \{1,2\}$}
\label{fig:tree_fsm_tt}
\end{figure}

 We formalize this in the following theorem:

\begin{theorem} \label{prop:NC}
  Computing Marginal SHAP values for decision trees, tree ensembles,  linear models, and linear RNNs under the distribution class of TTs lies in  \emph{$\text{NC}^{2}$}.
\end{theorem}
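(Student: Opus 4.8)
The plan is to obtain the result by \emph{reduction} to the TT case, leveraging Proposition~\ref{prop:ttisinnc}, which already establishes that computing Marginal SHAP for TTs lies in $\textsc{NC}^{2}$. Since the distribution is by hypothesis supplied as a TT $\mathcal{T}^{P}$, it suffices to show that each of the four model families --- decision trees, tree ensembles, linear models, and linear RNNs --- can be converted into an equivalent TT $\mathcal{T}^{M}$ computing the \emph{same} function over $\mathbb{D}$, where ``equivalent'' guarantees that the SHAP attributions (which depend on $M$ and $P$ only through the value function) are preserved exactly. Crucially, each conversion must (i) produce a TT of polynomially bounded bond dimension and number of parameters, and (ii) be itself computable in $\textsc{NC}$, so that composing it with the $\textsc{NC}^{2}$ algorithm of Proposition~\ref{prop:ttisinnc} keeps the overall procedure inside $\textsc{NC}^{2}$.

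I would handle the four families in turn. For \emph{linear models} $x \mapsto \sum_{i} w_{i} x_{i} + b$, a direct low-rank TT whose cores carry the running partial sum suffices; each core depends only on its own weight and is trivially constructed in $O(1)$ parallel time. For \emph{linear RNNs}, the recurrence $h_{t+1} = A_{t} h_{t} + B_{t} x_{t}$ with a linear readout already has a matrix-product form, so the transition matrices become the TT cores directly, with bond dimension equal to the hidden-state dimension. For \emph{decision trees}, I would follow the construction sketched in Figure~\ref{fig:tree_fsm_tt}: encode each root-to-leaf path whose leaf is labeled $1$ as a small finite-state automaton, assemble these into a lattice, and read off an equivalent TT whose bond dimension equals the number of such paths --- hence at most the number of leaves, which is polynomial in the tree size. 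Each core is determined locally by the tests performed at a fixed depth, so the cores can be emitted in parallel in poly-logarithmic time. Finally, \emph{tree ensembles} follow from the decision-tree case together with the closure of TTs under (weighted) addition: the TT of a sum of trees is obtained by block-diagonal concatenation of the individual cores, with the ensemble weights folded into the boundary cores, so the bond dimension is the sum of the individual bond dimensions and therefore remains polynomial for ensembles of polynomially many trees.

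With all four conversions in hand, the theorem follows by composition: apply the appropriate $\textsc{NC}$ conversion to obtain $\mathcal{T}^{M}$, then run the $\textsc{NC}^{2}$ TT--SHAP procedure of Proposition~\ref{prop:ttisinnc} on $(\mathcal{T}^{M}, x, \mathcal{T}^{P})$; since an $\textsc{NC}$-computable reduction pre-composed with an $\textsc{NC}^{2}$ circuit stays in $\textsc{NC}^{2}$, the complexity bound is preserved. I expect the main obstacle to be the decision-tree (and hence tree-ensemble) case: one must argue carefully that the lattice-of-automata construction yields a TT of \emph{polynomial} bond dimension and that its cores are genuinely producible in $\textsc{NC}$, rather than by an inherently sequential traversal of the tree. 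The remaining subtlety is bookkeeping the exact correspondence between each model's output and the contracted value of its TT, so that the marginal value function --- and thus the SHAP attributions --- coincide; once this equivalence is verified, all parallel-complexity guarantees are inherited verbatim from Proposition~\ref{prop:ttisinnc}.
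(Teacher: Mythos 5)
Your proposal is correct and, at the level of overall strategy, matches the paper's proof in Appendix~\ref{app:sec:othermlmodels}: \textsc{NC}-reduce each family to TTs and compose with Proposition~\ref{prop:ttisinnc}, using closure of \textsc{NC} under \textsc{NC} reductions. Your decision-tree construction is also the paper's (Proposition~\ref{app:prop:dttt}): each root-to-leaf path with label $1$ becomes a clause of a disjoint DNF, the TT bond dimension is the number $L$ of such paths, and the obstacle you flag --- whether the cores can be produced without a sequential traversal of the tree --- is resolved in the paper by taking the tree to be stored (after an offline polynomial-time conversion) in disjoint-DNF form and parallelizing at the literal level: one processor per literal occurrence, $O(1)$ time on $O(L \cdot n_{in})$ processors. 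The genuine divergence is tree ensembles. You build a single TT for the whole ensemble using closure of TTs under weighted addition (block-diagonal middle cores, ensemble weights folded into the boundary cores) and then call TT-SHAP once; the paper never constructs an ensemble TT at all --- it invokes \emph{linearity of the SHAP score}, computes SHAP for every tree in parallel, and outputs the weighted sum, adding only depth logarithmic in the number of trees. Both yield $\textsc{NC}^{2}$: yours keeps the ensemble case a pure reduction to one instance of the TT problem (at the price of a bond dimension equal to the sum of the per-tree ranks), while the paper's makes it a one-line corollary of the tree case with no rank growth. Finally, a small gap in your linear-model/RNN cases: a recurrence $h_{t+1} = A_t h_t + B_t x_t$ (and the paper's second-order variant with bias and readout) is \emph{affine}, not multiplicative, in the hidden state, so the transition matrices do not become TT cores ``directly''; one must first homogenize by appending a constant-$1$ coordinate to the state, giving bond dimension $d+1$, which is exactly what the paper's Equation~\eqref{app:eq:ttrnn} does. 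This step is routine, but without it your reduction for linear RNNs and linear models does not literally type-check.
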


The proof of Theorem \ref{prop:NC} can be found in Appendix \ref{app:sec:othermlmodels}. It proceeds by constructing NC-reduction procedures that transform each of these ML models (i.e. Decision Trees, Tree Ensembles, Linear Models and Linear RNNs) into equivalent TTs.

For illustrative purposes, we show in figure \ref{fig:tree_fsm_tt} an example 
 of such construction for a simple binary decision tree converted into an equivalent TT representation. The construction proceeds through an intermediate automata-based representation, where the DT is transformed into a lattice of finite state automata which admits a natural and compact parametrization in TT format. This construction can be implemented by means of a uniform family of boolean circuits with polynomial size and poly-logarithmic depth (see Appendix \ref{app:sec:othermlmodels} for the formal details of this construction).

We believe that this result could be of interest to practitioners willing to scale the computation of SHAP explanations to large dimensions by leveraging parallelization under data generating distributions that capture sophisticated dependencies between input features.
\section{A Fine-Grained Analysis of SHAP Computation for BNNs} \label{sec:reduction}
\begin{figure}[t]
  \centering
  \begin{subfigure}[b]{0.45\textwidth}
    \centering
\includegraphics[height=5cm]{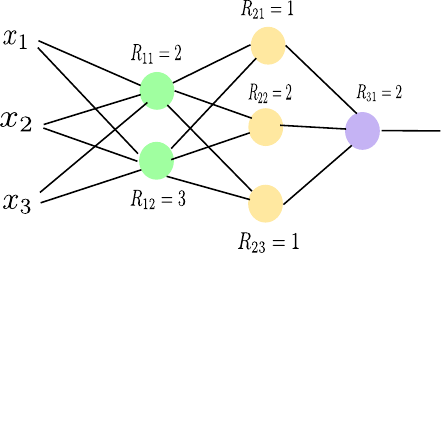}
\vspace{-5.8em}    
\caption{Structure of a BNN $M$ with $3$ layers $(R=3)$}
    \label{fig:bnn}
  \end{subfigure}
  \hfill
  \begin{subfigure}[b]{0.45\textwidth}
    \centering
    \includegraphics[height=4cm]{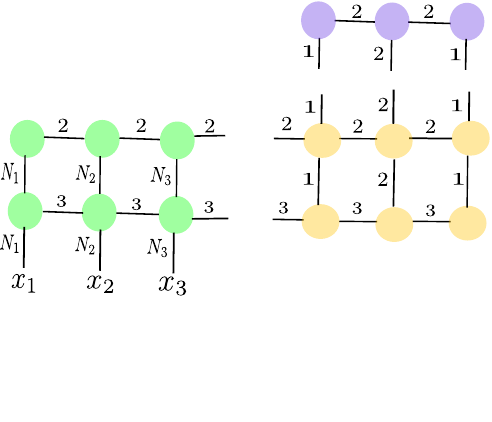}
    \vspace{-3em}
    \caption{2-dim TN representations of $M$'s Layers}
    \label{fig:BNN2TN}
  \end{subfigure}

  \vspace{1em} 

  \begin{subfigure}[b]{0.45\textwidth}
    \centering
        \includegraphics[height=1.8cm]{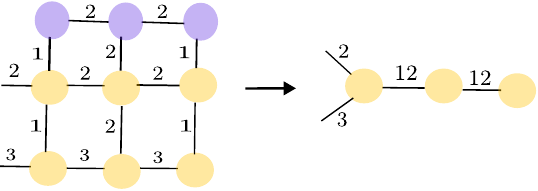}
     \caption{Contraction of Layer $3$ with Layer $2$}
    \label{fig:sub3}
  \end{subfigure}
  \hfill
  \begin{subfigure}[b]{0.45\textwidth}
    \centering
            \includegraphics[height=1.8cm]{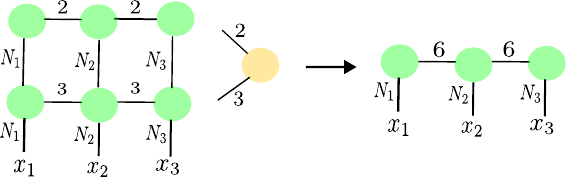}
    \caption{Contraction of Layer $2$ with Layer $1$}
    \label{fig:sub4}
  \end{subfigure}

  \caption{In Figure \ref{fig:bnn}, $R_{ij}$ denotes the reified cardinality parameters of the neurons. In Figures \ref{fig:BNN2TN}, \ref{fig:sub3}, and \ref{fig:sub4}, the numbers above the edges indicate tensor index dimensionality.}
  \label{fig:all}
\end{figure}

In this section, we reveal an additional intriguing connection between our complexity results for TNs and Binarized Neural Networks (BNNs). This connection enables what is, to the best of our knowledge, the first \emph{fine-grained analysis} of how particular structural parameters of a neural network affect the complexity of computing SHAP values.


We begin by noting that for a \emph{non-quantized} neural network, even the seemingly simple case of a single sigmoid neuron with binary inputs has been shown to be intractable for SHAP computation~\cite{VanDenBroeck2021,marzouk2025computationaltractabilitymanyshapley}. In contrast, we show that by quantizing the weights and transitioning to a BNN, tractability \emph{can} be achieved --- though this tractability critically depends on the network’s different structural parameters.




We carry out this analysis using the framework of \emph{parameterized complexity}~\citep{flum2006parameterized, downey2012parameterized}, a standard approach in complexity theory for understanding how various structural parameters influence computational hardness. Specifically, we focus on three key parameters: \begin{inparaenum}[(i)] \item the network’s \emph{width}, \item its \emph{depth}, and \item its \emph{sparsity} \end{inparaenum}. The main result of this analysis is captured in the following theorem:

\begin{theorem} \label{thm:reductionbnn}
Let $P$ be either the class of empirical distributions, independent distributions, or the class of TTs. We have that:
\begin{enumerate}
    \item \textbf{Bounded Depth:} The problem of computing SHAP for BNNs under any distribution class $P$ is \textsc{para-NP-Hard} with respect to the network's \emph{depth} parameter.
    \item \textbf{Bounded Width:} The problem of computing SHAP for BNNs under any distribution class $P$ is in \textsc{XP} with respect to the \emph{width} parameter.
    \item \textbf{Bounded Width and Sparsity.} The problem of computing SHAP for BNNs under any distribution class $P$ is in \textsc{FPT} with respect to the \emph{width} and \emph{reified cardinality} parameters.
\end{enumerate}
\end{theorem}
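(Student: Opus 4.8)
\textbf{Part 1 (\textsf{para-NP}-hardness in depth).} I would establish hardness by a reduction from \#CNF-SAT, relying on the fact (already exploited in Proposition~\ref{prop:sharphard} via~\cite{kara24}) that model counting is polynomial-time reducible to SHAP computation. The key observation is that an arbitrary CNF formula is computable by a BNN of \emph{constant} depth: each clause is a disjunction, i.e.\ a reified cardinality neuron with threshold $R=1$ over its literals, placed in the first hidden layer, while the overall conjunction is a single output neuron whose reified cardinality threshold equals the number of clauses. This yields a BNN of depth $2$ (or $3$ once the literal encoding is accounted for), regardless of the formula. Since model counting over CNF is \#P-hard and the constructed network has bounded depth, SHAP for BNNs inherits this hardness even when the depth parameter is fixed to a constant, which is exactly \textsf{para-NP}-hardness. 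As the reduction needs only the uniform distribution --- which is simultaneously an empirical, an independent, and a TT distribution --- the hardness holds for every distribution class in the statement.

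\textbf{Part 2 (\textsf{XP} in width).} The plan is to encode the whole BNN as a single TT whose bond dimension is controlled by the width, and then invoke the TT-SHAP machinery. A reified cardinality neuron $y_i \leftarrow (\sum_j l_{ij} \geq R_i)$ can be implemented by a left-to-right scan that accumulates the partial sum of activated literals; a layer of width $w$ therefore maintains $w$ such partial sums simultaneously, each ranging in $\{0,\dots,n\}$, so the layer admits a TT representation of bond dimension $(n+1)^{O(w)} = n^{O(w)}$. Contracting the per-layer TTs across the layers --- precisely the operation depicted in Figures~\ref{fig:sub3} and~\ref{fig:sub4} --- keeps the bond dimension at $n^{O(w)}$, since every intermediate index ranges over activation vectors of width-$w$ layers. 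Feeding the resulting model TT together with the distribution TT (which has polynomial bond dimension for empirical and independent distributions, and is given for the TT class) into the construction of Theorem~\ref{thm:tensortrainshap} and contracting as in Proposition~\ref{prop:ttisinnc} costs time polynomial in the TT size, i.e.\ $n^{O(w)}$, placing the problem in \textsf{XP} with respect to width.

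\textbf{Part 3 (\textsf{FPT} in width and reified cardinality).} This refines the previous construction by capping the accumulators. To decide $\sum_j l_{ij} \geq R_i$ it suffices to track $\min(\text{partial sum}, R_i)$, so each of the $w$ accumulators now ranges over $\{0,\dots,R\}$ rather than $\{0,\dots,n\}$, where $R$ is the reified cardinality parameter. The per-layer --- and hence the global --- bond dimension drops to $(R+1)^{O(w)}$, a quantity depending only on the two parameters and independent of the input size. Running the same TT-SHAP algorithm on a TT of this bounded bond dimension costs $g(w,R)\cdot n^{O(1)}$ with $g(w,R) = (R+1)^{O(w)}$, which is exactly the \textsf{FPT} guarantee.

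The main obstacle I anticipate is the bond-dimension bookkeeping in Part 2: one must argue carefully that representing \emph{simultaneous} cardinality constraints over a width-$w$ layer, and then chaining the layers, does not inflate the bond dimension beyond $n^{O(w)}$ --- in particular that the biases and batch-normalization parameters collapse cleanly into integer reified thresholds (as guaranteed by~\cite{jia2020efficient}), so that a bounded-state scan is genuinely available. Once this encoding is pinned down, the separation between \textsf{XP} and \textsf{FPT} in Parts 2 and 3 follows transparently from whether the accumulator range scales with $n$ or only with the parameter $R$.
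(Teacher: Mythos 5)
Your proposal is correct and follows essentially the same route as the paper: Part 1 is the same CNF-to-depth-2-BNN encoding combined with the model-counting-to-SHAP reduction under the uniform distribution, and Parts 2--3 use the same counter/accumulator idea (which the paper formalizes via layered deterministic finite automata) to compile the BNN into a TT of bond dimension $n^{O(w)}$, respectively $(R+1)^{O(w)}$, before invoking the TT-SHAP algorithm of Theorem~\ref{thm:tensortrainshap} and Proposition~\ref{prop:ttisinnc}. The only cosmetic difference is that the paper's appendix first collapses the deep network into a single hidden layer plus a $2^{W}$-size lookup table before building the TT for the first layer, whereas you chain per-layer TTs directly; both yield the same $O(R^{W}\cdot \texttt{poly}(D,n_{in},\max_i N_i))$ bound from which \textsc{XP} and \textsc{FPT} follow.
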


\textbf{Key takeaways from these fine-grained results.} Theorem~\ref{thm:reductionbnn} captures several notable insights into the complexity of computing exact SHAP values for BNNs:

\begin{enumerate}
    \item \textbf{Even shallow networks are hard.} Computing SHAP values for BNNs remains NP-Hard even when the network depth is fixed to a \emph{constant}. In fact, intractability arises already with a \emph{single} hidden layer, underscoring that reducing the network's depth does not alleviate the computational hardness of SHAP.
    \item \textbf{Narrow networks make it easier.} However, when the \emph{width} of the neural network is fixed to a constant, computing SHAP for a BNN becomes \emph{polynomial} (as it falls within the class XP), highlighting \emph{width} as a potential relaxation point for improving tractability.
    \item \textbf{Narrow and sparse networks are efficient.} Finally, by fixing both the network's \emph{width} and \emph{sparsity}, we obtain an even stronger result: \emph{fixed-parameter tractability (FPT)}. This implies that computing SHAP is \emph{efficiently tractable} --- even for arbitrarily large networks --- so long as these parameters remain small, regardless of the network’s depth, input dimensionality, or number of non-linear activations.
\end{enumerate}

This shows that \emph{width} drives the shift from intractability to tractability in SHAP for BNNs, and that adding sparsity bounds can fully tame complexity --- even in high-dimensional settings.


\emph{Proof Sketch (Theorem~\ref{thm:reductionbnn}).} The first part follows from a direct reduction from \textsc{3SAT}: any CNF formula can be encoded by a depth-2 BNN, and computing SHAP for CNF formulas is \#P-Hard~\cite{VanDenBroeck2021}. The second and third parts rely on a more involved construction: converting a BNN into an equivalent Tensor Train (TT), inspired by~\cite{li2021}. Each BNN layer is represented as a 2D tensor network, and these are contracted backwards into a TT. Figure~\ref{fig:all} illustrates this process. The compilation of a BNN into an equivalent TT runs in $\mathcal{O}(R^W \cdot \texttt{poly}(D, n_{in}, \max_i N_i))$ time, where $W$, $D$, and $R$ are the width, depth, and reified cardinality of the BNN. The last two items of Theorem~\ref{thm:reductionbnn} follows immediately from this runtime complexity by definition of \textsc{XP} and \textsc{FPT} (see section \ref{background}), and the fact that SHAP for TTs is in \textsc{NC} (Proposition~\ref{prop:ttisinnc}). 
\section{Limitations and Future Work}
While our work represents a notable step toward understanding the computational landscape of SHAP, it remains focused on specific settings, and many other settings could naturally be explored. Future research could extend our analysis to additional model classes, such as Tree Tensor Networks~\citep{chen241},  SHAP variants~\citep{Sundararajan2019}, and relaxations or approximations that enhance tractability for even more expressive models than those studied here. We also acknowledge existing critiques of SHAP~\citep{kumar2020problems, huang2024failings, slack2020fooling, fryer2021shapley, biradar2024axiomatic}; our goal is not to defend its axiomatic foundations but to analyze the complexity of a widely adopted explanation method. Exploring the complexity of obtaining alternative value function definitions~\citep{letoffe2025towards}, SHAP variants~\citep{marzouk2025computationaltractabilitymanyshapley}, or other attribution indices~\citep{barcelo2025computation, yu2024anytime} presents exciting directions for future work.

\section{Conclusion}

In this work, we present the first provably exact algorithm for computing SHAP explanations for the class of \emph{Tensor Networks}. Moreover, we prove that for the particular subclass of \emph{Tensor Trains}, this computation can be carried out not only in polynomial time but also in polylogarithmic time using parallel processors. This result closes a \emph{significant expressivity gap} in tractable SHAP computation by extending tractability to a significantly more expressive model family than previously known. Building on this expressivity, our approach also yields new insights into the complexity of computing SHAP for other popular ML models --- including decision trees, linear models, linear RNNs, and tree ensembles --- by enabling improved parallelizability-related complexity bounds and more expressive distribution modeling. Furthermore, these results offer, by reduction, a novel \emph{fine-grained} analysis of the tractability barriers in computing SHAP for binarized neural networks, identifying \emph{width} as a central computational bottleneck. Together, we believe that these findings significantly advance our understanding of the computational landscape of SHAP, highlighting both inherent limitations and new opportunities, and hence paving the way for future research.



\section*{Acknowledgments}

This work was partially funded by the European Union
(ERC, VeriDeL, 101112713). Views and opinions expressed
are however those of the author(s) only and do not necessarily reflect those of the European Union or the European
Research Council Executive Agency. Neither the European
Union nor the granting authority can be held responsible for
them. This research was additionally supported by a grant
from the Israeli Science Foundation (grant number 558/24).

\bibliographystyle{plainnat}
\bibliography{references}


\appendix



\newpage
\section*{NeurIPS Paper Checklist}

\begin{enumerate}

\item {\bf Claims}
    \item[] Question: Do the main claims made in the abstract and introduction accurately reflect the paper's contributions and scope?
    \item[] Answer: \answerYes{}
    \item[] Justification: The abstract and introduction directly represent the proofs developed in this work.
    \item[] Guidelines:
    \begin{itemize}
        \item The answer NA means that the abstract and introduction do not include the claims made in the paper.
        \item The abstract and/or introduction should clearly state the claims made, including the contributions made in the paper and important assumptions and limitations. A No or NA answer to this question will not be perceived well by the reviewers. 
        \item The claims made should match theoretical and experimental results, and reflect how much the results can be expected to generalize to other settings. 
        \item It is fine to include aspirational goals as motivation as long as it is clear that these goals are not attained by the paper. 
    \end{itemize}

\item {\bf Limitations}
    \item[] Question: Does the paper discuss the limitations of the work performed by the authors?
    \item[] Answer: \answerYes{}
    \item[] Justification: We provide a dedicated Limitations section.
    \item[] Guidelines:
    \begin{itemize}
        \item The answer NA means that the paper has no limitation while the answer No means that the paper has limitations, but those are not discussed in the paper. 
        \item The authors are encouraged to create a separate "Limitations" section in their paper.
        \item The paper should point out any strong assumptions and how robust the results are to violations of these assumptions (e.g., independence assumptions, noiseless settings, model well-specification, asymptotic approximations only holding locally). The authors should reflect on how these assumptions might be violated in practice and what the implications would be.
        \item The authors should reflect on the scope of the claims made, e.g., if the approach was only tested on a few datasets or with a few runs. In general, empirical results often depend on implicit assumptions, which should be articulated.
        \item The authors should reflect on the factors that influence the performance of the approach. For example, a facial recognition algorithm may perform poorly when image resolution is low or images are taken in low lighting. Or a speech-to-text system might not be used reliably to provide closed captions for online lectures because it fails to handle technical jargon.
        \item The authors should discuss the computational efficiency of the proposed algorithms and how they scale with dataset size.
        \item If applicable, the authors should discuss possible limitations of their approach to address problems of privacy and fairness.
        \item While the authors might fear that complete honesty about limitations might be used by reviewers as grounds for rejection, a worse outcome might be that reviewers discover limitations that aren't acknowledged in the paper. The authors should use their best judgment and recognize that individual actions in favor of transparency play an important role in developing norms that preserve the integrity of the community. Reviewers will be specifically instructed to not penalize honesty concerning limitations.
    \end{itemize}

\item {\bf Theory assumptions and proofs}
    \item[] Question: For each theoretical result, does the paper provide the full set of assumptions and a complete (and correct) proof?
    \item[] Answer: \answerYes{} 
    \item[] Justification: Owing to space limitations, the main paper presents only a sketch of central proofs, with the complete versions of all proofs in the appendix.
    \item[] Guidelines:
    \begin{itemize}
        \item The answer NA means that the paper does not include theoretical results. 
        \item All the theorems, formulas, and proofs in the paper should be numbered and cross-referenced.
        \item All assumptions should be clearly stated or referenced in the statement of any theorems.
        \item The proofs can either appear in the main paper or the supplemental material, but if they appear in the supplemental material, the authors are encouraged to provide a short proof sketch to provide intuition. 
        \item Inversely, any informal proof provided in the core of the paper should be complemented by formal proofs provided in appendix or supplemental material.
        \item Theorems and Lemmas that the proof relies upon should be properly referenced. 
    \end{itemize}

    \item {\bf Experimental result reproducibility}
    \item[] Question: Does the paper fully disclose all the information needed to reproduce the main experimental results of the paper to the extent that it affects the main claims and/or conclusions of the paper (regardless of whether the code and data are provided or not)?
    \item[] Answer: \answerNA{}.
    \item[] Justification: This is a theoretical work, so experimental reproducibility is not relevant.
    \item[] Guidelines:
    \begin{itemize}
        \item The answer NA means that the paper does not include experiments.
        \item If the paper includes experiments, a No answer to this question will not be perceived well by the reviewers: Making the paper reproducible is important, regardless of whether the code and data are provided or not.
        \item If the contribution is a dataset and/or model, the authors should describe the steps taken to make their results reproducible or verifiable. 
        \item Depending on the contribution, reproducibility can be accomplished in various ways. For example, if the contribution is a novel architecture, describing the architecture fully might suffice, or if the contribution is a specific model and empirical evaluation, it may be necessary to either make it possible for others to replicate the model with the same dataset, or provide access to the model. In general. releasing code and data is often one good way to accomplish this, but reproducibility can also be provided via detailed instructions for how to replicate the results, access to a hosted model (e.g., in the case of a large language model), releasing of a model checkpoint, or other means that are appropriate to the research performed.
        \item While NeurIPS does not require releasing code, the conference does require all submissions to provide some reasonable avenue for reproducibility, which may depend on the nature of the contribution. For example
        \begin{enumerate}
            \item If the contribution is primarily a new algorithm, the paper should make it clear how to reproduce that algorithm.
            \item If the contribution is primarily a new model architecture, the paper should describe the architecture clearly and fully.
            \item If the contribution is a new model (e.g., a large language model), then there should either be a way to access this model for reproducing the results or a way to reproduce the model (e.g., with an open-source dataset or instructions for how to construct the dataset).
            \item We recognize that reproducibility may be tricky in some cases, in which case authors are welcome to describe the particular way they provide for reproducibility. In the case of closed-source models, it may be that access to the model is limited in some way (e.g., to registered users), but it should be possible for other researchers to have some path to reproducing or verifying the results.
        \end{enumerate}
    \end{itemize}

\item {\bf Open access to data and code}
    \item[] Question: Does the paper provide open access to the data and code, with sufficient instructions to faithfully reproduce the main experimental results, as described in supplemental material?
    \item[] Answer: \answerNA{}.
    \item[] Justification: This is a theoretical work, so providing open access to data and code is not applicable.
    \item[] Guidelines:
    \begin{itemize}
        \item The answer NA means that paper does not include experiments requiring code.
        \item Please see the NeurIPS code and data submission guidelines (\url{https://nips.cc/public/guides/CodeSubmissionPolicy}) for more details.
        \item While we encourage the release of code and data, we understand that this might not be possible, so “No” is an acceptable answer. Papers cannot be rejected simply for not including code, unless this is central to the contribution (e.g., for a new open-source benchmark).
        \item The instructions should contain the exact command and environment needed to run to reproduce the results. See the NeurIPS code and data submission guidelines (\url{https://nips.cc/public/guides/CodeSubmissionPolicy}) for more details.
        \item The authors should provide instructions on data access and preparation, including how to access the raw data, preprocessed data, intermediate data, and generated data, etc.
        \item The authors should provide scripts to reproduce all experimental results for the new proposed method and baselines. If only a subset of experiments are reproducible, they should state which ones are omitted from the script and why.
        \item At submission time, to preserve anonymity, the authors should release anonymized versions (if applicable).
        \item Providing as much information as possible in supplemental material (appended to the paper) is recommended, but including URLs to data and code is permitted.
    \end{itemize}

\item {\bf Experimental setting/details}
    \item[] Question: Does the paper specify all the training and test details (e.g., data splits, hyperparameters, how they were chosen, type of optimizer, etc.) necessary to understand the results?
    \item[] Answer: \answerNA{}.
    \item[] Justification: This is a theoretical work, and no hyperparameter details are needed.
    \item[] Guidelines:
    \begin{itemize}
        \item The answer NA means that the paper does not include experiments.
        \item The experimental setting should be presented in the core of the paper to a level of detail that is necessary to appreciate the results and make sense of them.
        \item The full details can be provided either with the code, in appendix, or as supplemental material.
    \end{itemize}

\item {\bf Experiment statistical significance}
    \item[] Question: Does the paper report error bars suitably and correctly defined or other appropriate information about the statistical significance of the experiments?
    \item[] Answer: \answerNA{}.
    \item[] Justification: This is a theoretical work, so such statistical tests are not required.
    \item[] Guidelines:
    \begin{itemize}
        \item The answer NA means that the paper does not include experiments.
        \item The authors should answer "Yes" if the results are accompanied by error bars, confidence intervals, or statistical significance tests, at least for the experiments that support the main claims of the paper.
        \item The factors of variability that the error bars are capturing should be clearly stated (for example, train/test split, initialization, random drawing of some parameter, or overall run with given experimental conditions).
        \item The method for calculating the error bars should be explained (closed form formula, call to a library function, bootstrap, etc.)
        \item The assumptions made should be given (e.g., Normally distributed errors).
        \item It should be clear whether the error bar is the standard deviation or the standard error of the mean.
        \item It is OK to report 1-sigma error bars, but one should state it. The authors should preferably report a 2-sigma error bar than state that they have a 96\% CI, if the hypothesis of Normality of errors is not verified.
        \item For asymmetric distributions, the authors should be careful not to show in tables or figures symmetric error bars that would yield results that are out of range (e.g. negative error rates).
        \item If error bars are reported in tables or plots, The authors should explain in the text how they were calculated and reference the corresponding figures or tables in the text.
    \end{itemize}

\item {\bf Experiments compute resources}
    \item[] Question: For each experiment, does the paper provide sufficient information on the computer resources (type of compute workers, memory, time of execution) needed to reproduce the experiments?
    \item[] Answer: \answerNA{}.
    \item[] Justification: This is a theoretical work, so no details on computational resources are required.
    \item[] Guidelines:
    \begin{itemize}
        \item The answer NA means that the paper does not include experiments.
        \item The paper should indicate the type of compute workers CPU or GPU, internal cluster, or cloud provider, including relevant memory and storage.
        \item The paper should provide the amount of compute required for each of the individual experimental runs as well as estimate the total compute. 
        \item The paper should disclose whether the full research project required more compute than the experiments reported in the paper (e.g., preliminary or failed experiments that didn't make it into the paper). 
    \end{itemize}
    
\item {\bf Code of ethics}
    \item[] Question: Does the research conducted in the paper conform, in every respect, with the NeurIPS Code of Ethics \url{https://neurips.cc/public/EthicsGuidelines}?
    \item[] Answer: \answerYes{}.
    \item[] Justification: We have read the NeurIPS Code of Ethics Guidelines and ensured that our paper fully complies with them.
    \item[] Guidelines:
    \begin{itemize}
        \item The answer NA means that the authors have not reviewed the NeurIPS Code of Ethics.
        \item If the authors answer No, they should explain the special circumstances that require a deviation from the Code of Ethics.
        \item The authors should make sure to preserve anonymity (e.g., if there is a special consideration due to laws or regulations in their jurisdiction).
    \end{itemize}

\item {\bf Broader impacts}
    \item[] Question: Does the paper discuss both potential positive societal impacts and negative societal impacts of the work performed?
    \item[] Answer: \answerNA{}.
    \item[] Justification: While we recognize the potential social implications of interpretability, our work is primarily theoretical in nature and, as such, does not pose any direct social consequences.
    \item[] Guidelines:
    \begin{itemize}
        \item The answer NA means that there is no societal impact of the work performed.
        \item If the authors answer NA or No, they should explain why their work has no societal impact or why the paper does not address societal impact.
        \item Examples of negative societal impacts include potential malicious or unintended uses (e.g., disinformation, generating fake profiles, surveillance), fairness considerations (e.g., deployment of technologies that could make decisions that unfairly impact specific groups), privacy considerations, and security considerations.
        \item The conference expects that many papers will be foundational research and not tied to particular applications, let alone deployments. However, if there is a direct path to any negative applications, the authors should point it out. For example, it is legitimate to point out that an improvement in the quality of generative models could be used to generate deepfakes for disinformation. On the other hand, it is not needed to point out that a generic algorithm for optimizing neural networks could enable people to train models that generate Deepfakes faster.
        \item The authors should consider possible harms that could arise when the technology is being used as intended and functioning correctly, harms that could arise when the technology is being used as intended but gives incorrect results, and harms following from (intentional or unintentional) misuse of the technology.
        \item If there are negative societal impacts, the authors could also discuss possible mitigation strategies (e.g., gated release of models, providing defenses in addition to attacks, mechanisms for monitoring misuse, mechanisms to monitor how a system learns from feedback over time, improving the efficiency and accessibility of ML).
    \end{itemize}
    
\item {\bf Safeguards}
    \item[] Question: Does the paper describe safeguards that have been put in place for responsible release of data or models that have a high risk for misuse (e.g., pretrained language models, image generators, or scraped datasets)?
    \item[] Answer: \answerNA{}.
    \item[] Justification: Our work is theoretical; therefore, safeguards are not applicable.
    \item[] Guidelines:
    \begin{itemize}
        \item The answer NA means that the paper poses no such risks.
        \item Released models that have a high risk for misuse or dual-use should be released with necessary safeguards to allow for controlled use of the model, for example by requiring that users adhere to usage guidelines or restrictions to access the model or implementing safety filters. 
        \item Datasets that have been scraped from the Internet could pose safety risks. The authors should describe how they avoided releasing unsafe images.
        \item We recognize that providing effective safeguards is challenging, and many papers do not require this, but we encourage authors to take this into account and make a best faith effort.
    \end{itemize}

\item {\bf Licenses for existing assets}
    \item[] Question: Are the creators or original owners of assets (e.g., code, data, models), used in the paper, properly credited and are the license and terms of use explicitly mentioned and properly respected?
    \item[] Answer: \answerNA{}.
    \item[] Justification: This work is theoretical; therefore, assets of this type are not relevant.
    \item[] Guidelines:
    \begin{itemize}
        \item The answer NA means that the paper does not use existing assets.
        \item The authors should cite the original paper that produced the code package or dataset.
        \item The authors should state which version of the asset is used and, if possible, include a URL.
        \item The name of the license (e.g., CC-BY 4.0) should be included for each asset.
        \item For scraped data from a particular source (e.g., website), the copyright and terms of service of that source should be provided.
        \item If assets are released, the license, copyright information, and terms of use in the package should be provided. For popular datasets, \url{paperswithcode.com/datasets} has curated licenses for some datasets. Their licensing guide can help determine the license of a dataset.
        \item For existing datasets that are re-packaged, both the original license and the license of the derived asset (if it has changed) should be provided.
        \item If this information is not available online, the authors are encouraged to reach out to the asset's creators.
    \end{itemize}

\item {\bf New assets}
    \item[] Question: Are new assets introduced in the paper well documented and is the documentation provided alongside the assets?
    \item[] Answer: \answerNA{}.
    \item[] Justification: This work is theoretical; therefore, assets of this type are not relevant.
    \item[] Guidelines:
    \begin{itemize}
        \item The answer NA means that the paper does not release new assets.
        \item Researchers should communicate the details of the dataset/code/model as part of their submissions via structured templates. This includes details about training, license, limitations, etc. 
        \item The paper should discuss whether and how consent was obtained from people whose asset is used.
        \item At submission time, remember to anonymize your assets (if applicable). You can either create an anonymized URL or include an anonymized zip file.
    \end{itemize}

\item {\bf Crowdsourcing and research with human subjects}
    \item[] Question: For crowdsourcing experiments and research with human subjects, does the paper include the full text of instructions given to participants and screenshots, if applicable, as well as details about compensation (if any)? 
    \item[] Answer: \answerNA{}.
    \item[] Justification: This work is theoretical; no such crowdsourcing experiments were conducted.
    \item[] Guidelines:
    \begin{itemize}
        \item The answer NA means that the paper does not involve crowdsourcing nor research with human subjects.
        \item Including this information in the supplemental material is fine, but if the main contribution of the paper involves human subjects, then as much detail as possible should be included in the main paper. 
        \item According to the NeurIPS Code of Ethics, workers involved in data collection, curation, or other labor should be paid at least the minimum wage in the country of the data collector. 
    \end{itemize}

\item {\bf Institutional review board (IRB) approvals or equivalent for research with human subjects}
    \item[] Question: Does the paper describe potential risks incurred by study participants, whether such risks were disclosed to the subjects, and whether Institutional Review Board (IRB) approvals (or an equivalent approval/review based on the requirements of your country or institution) were obtained?
    \item[] Answer: \answerNA{}.
    \item[] Justification: This work is theoretical; no such user study experiments were conducted.
    \item[] Guidelines:
    \begin{itemize}
        \item The answer NA means that the paper does not involve crowdsourcing nor research with human subjects.
        \item Depending on the country in which research is conducted, IRB approval (or equivalent) may be required for any human subjects research. If you obtained IRB approval, you should clearly state this in the paper. 
        \item We recognize that the procedures for this may vary significantly between institutions and locations, and we expect authors to adhere to the NeurIPS Code of Ethics and the guidelines for their institution. 
        \item For initial submissions, do not include any information that would break anonymity (if applicable), such as the institution conducting the review.
    \end{itemize}

\item {\bf Declaration of LLM usage}
    \item[] Question: Does the paper describe the usage of LLMs if it is an important, original, or non-standard component of the core methods in this research? Note that if the LLM is used only for writing, editing, or formatting purposes and does not impact the core methodology, scientific rigorousness, or originality of the research, declaration is not required.
    \item[] Answer: \answerNA{}.
    \item[] Justification: This is a theoretical study and does not involve any non-standard use of LLMs. 
    \item[] Guidelines:
    \begin{itemize}
        \item The answer NA means that the core method development in this research does not involve LLMs as any important, original, or non-standard components.
        \item Please refer to our LLM policy (\url{https://neurips.cc/Conferences/2025/LLM}) for what should or should not be described.
    \end{itemize}

\end{enumerate}

\onecolumn
\appendix

\setcounter{definition}{0}
\setcounter{proposition}{0}
\setcounter{theorem}{0}
\setcounter{lemma}{0}
\newcommand{\neuripssupplementarytitle}[1]{
  \begin{center}
    \hrule height 4pt
    \vskip 0.25in
    \vskip -\parskip

    {\LARGE \bfseries \sffamily #1\par}

    \vskip 0.29in
    \vskip -\parskip
    \hrule height 1pt
    \vskip 0.09in
  \end{center}
}
 \neuripssupplementarytitle{Appendix}

 \vspace{0.5em}

\newlist{MyIndentedList}{itemize}{4}
\setlist[MyIndentedList,1]{%
	label={},
	noitemsep,
	leftmargin=0pt,
    itemsep=0.6em, 
}

The appendix is organized as follows:
\begin{MyIndentedList}

\item \textbf{Appendix~\ref{extended_related_work}} provides extended related work that is relevant to this work.

    \item \textbf{Appendix~\ref{model_formalization}} provides the technical background, including descriptions of the model families studied in this work, as well as other technical tools --- such as Copy Tensors and Layered Deterministic Finite Automata (LDFAs) --- used in the proofs of various results presented in the paper.

    \item \textbf{Appendix~\ref{Shap_general_tn_appendix}} provides proofs related to the exact SHAP computation for Tensor Networks with arbitrary structures, namely Proposition \ref{prop:marginal_shap_tensor}, Lemma \ref{lemma:weightshapfunction}, and Lemma \ref{lemma:valuefunctiontn}. 

\item\textbf{Appendix~\ref{app:sec:tensortrain}} provides the proof of the \#P-Hardness of computing SHAP for general TNs (Proposition \ref{prop:sharphard}), along with a proof of the \textsc{NC} complexity result for the case of TTs (Proposition \ref{prop:ttisinnc}).

\item\textbf{Appendix~\ref{app:sec:othermlmodels}} presents the proof of Theorem~\ref{prop:NC}, detailing the \textsc{NC} reductions from the SHAP computation problem for decision trees, tree ensembles, linear models, and linear RNNs to the corresponding SHAP problem for Tensor Trains (TTs).

\item\textbf{Appendix~\ref{app:sec:bnn}} provides proofs on the fine-grained analysis of BNNs and their connection to TNs (Theorem \ref{thm:reductionbnn}).

\end{MyIndentedList}

\section{Extended Related Work}
\label{extended_related_work}

This section offers an expanded discussion of related work, with particular emphasis on key complexity results established in prior studies.

\textbf{SHAP values.} Building on the original SHAP framework introduced by~\citep{lundberg2017unified} for explaining machine learning predictions, a substantial body of subsequent work has explored its application across a wide range of XAI settings. This includes the development of numerous SHAP variants~\citep{Sundararajan2019, janzing2020feature, heskes2020causal}, adaptations that align SHAP with underlying data manifolds~\citep{fryer2021shapley, taufiq2023manifold}, and extensions that attribute higher-order feature interactions~\citep{fumagalli2023shap, fumagalli2024kernelshap, sundararajan2020shapley, muschalik2024beyond}. From the computational perspective, a variety of statistical approximation techniques have been proposed to improve scalability~\citep{fumagalli2023shap, sundararajan2020shapley, muschalik2024beyond, muschalikexact}, alongside model-specific implementations for classes such as linear models~\citep{lundberg2017unified}, tree-based models~\citep{yu2022linear, mitchell2020gputreeshap, yang2021fast, muschalik2024beyond, lundberg2020explainable}, additive models~\citep{enoueninstashap, bassanmichaladditive, bordt2023shapley}, and kernel methods~\citep{chau2022rkhs, chauexplaining}. Closer to our line of work, recent efforts aim to construct neural network architectures that support efficient computation of SHAP values~\citep{muschalikexact, chen2023harsanyinet}. Finally, several parallel research threads have highlighted important limitations and potential failure modes of SHAP across different settings~\citep{fryer2021shapley, huang2024failings, kumar2020problems, marques2024explainability}.

\textbf{The computational complexity of SHAP.} 
Notably, \citep{VanDenBroeck2021} study SHAP when the value function is defined via conditional expectations and establish both tractability and intractability results across various ML model classes. Building on this, \citep{arenas2023complexity} generalize these findings, showing that tractability for SHAP using conditional expectations coincides precisely with the class of Decomposable Deterministic Boolean Circuits, and further prove that both decomposability and determinism are necessary for tractable computation. More recently, \citep{marzouk2024tractability} extend the analysis beyond independent feature distributions to Markovian distributions, which was later generalized to HMM-distributed features~\citep{marzouk2025computationaltractabilitymanyshapley}, alongside an exploration of additional SHAP variants --- such as baseline and marginal SHAP --- as well as both local and global computation settings. Furthermore, \citep{huang2024updates, marzouk2025computationaltractabilitymanyshapley} distinguish between the complexity of SHAP for regression versus classification tasks. Other, more tangential but relevant extensions of these complexity studies include analyzing SHAP computations in database settings~\citep{deutch2022computing, livshits2021shapley, bertossi2023shapley, kara2024shapley, karmakar2024expected}, as well as investigating the complexity of alternative attribution schemes beyond Shapley values --- such as Banzhaf values~\citep{abramovich2024banzhaf} and other cooperative game-theoretic values~\citep{barcelo2025computation}. In contrast, our work significantly broadens existing complexity results by considering much more expressive classes for both the underlying prediction models and the data distributions. Moreover, we are the first to provide a complexity-theoretic analysis of the \emph{parallelizability} of SHAP, as well as fine-grained complexity bounds for computing SHAP values in families of neural networks.

\textbf{Formal XAI.} More broadly, this work lies within the emerging area of \emph{formal explainable AI} (formal XAI)~\citep{marques2023logic, bassan2023towards}, which seeks to derive explanations for machine learning models that come with mathematical guarantees~\citep{ignatiev2020towards, bassan2023towards, darwiche2020reasons, darwiche2022computation, bassanfast, ignatiev2019abduction, bassan2025self, audemard2022preferred, xue2023stability, jin2025probabilistic, bounia2023approximating}. Explanations in this setting are typically produced through automated reasoning techniques, such as SMT solvers~\citep{barrett2018satisfiability} (e.g., for explaining tree ensembles~\citep{audemard2022trading}) and neural network verifiers~\citep{katz2017reluplex, wu2024marabou, wang2021beta} (e.g., for explaining neural networks~\citep{izyaxuanjoigalma24, bassan2023formally}). Since computing explanations with provable guarantees is often computationally intractable, a central line of work in formal XAI focuses on analyzing the computational complexity of explanation problems~\citep{barcelo2020model, waldchen2021computational, cooper2023tractability, bassanlocal, blanc2021provably, amir2024hard, bassan2025explain, bassanmakes, adolfi2024computational, barcelo2025explaining, calautti2025complexity, ordyniak2023parameterized}.

\section{Background}
\label{model_formalization}

This section begins by formally introducing the ML models examined in this work. It then presents the concept of copy tensors, followed by an introduction to Layered Deterministic Finite Automata (LDFAs). Both copy tensors and LDFAs are structures used to support several proofs throughout the appendix.

\label{app:sec:background}

\subsection{Model Formalization}
\subsubsection{Decision Trees}

We define a decision tree (DT) as a directed acyclic graph that represents a function $f: \mathbb{D} \to [c]$, where $c \in \mathbb{N}$ is the number of classes. The tree structure encodes this function as follows: \begin{inparaenum}[(i)]
\item Each internal node $v$ is associated with a unique binary input feature from $\{1, \ldots, n\}$;
\item Each internal node has at most $k$ outgoing edges, corresponding to values in $[k]$ assigned to the feature at $v$;
\item Along any path $\alpha$, each feature appears at most once;
\item Each leaf node is labeled with a class from $[c]$.
\end{inparaenum}  
Given an input $x \in \mathbb{D}$, the DT defines a unique path $\alpha$ from the root to a leaf, where $f(x)$ is the class label at the leaf. The size of the DT, denoted $|f|$, is the total number of edges in the graph. To allow modeling flexibility, the order of variables can differ across paths $\alpha$ and $\alpha'$, but no variable repeats within a single path.

\subsubsection{Decision Tree Ensembles} 

There are several well-established architectures for tree ensembles. Although these models mainly differ in how they are trained, our focus is on post-hoc interpretation, so we concentrate on differences in the inference phase. In particular, we analyze ensemble families that apply weighted-voting schemes during inference --- this includes boosted ensembles such as XGBoost. When all weights are equal, our framework also captures majority-voting ensembles, like those used in bagging methods such as Random Forests. Conceptually, an ensemble tree model $\mathcal{T}$ is defined as a weighted sum of individual decision trees. Formally, it is specified by the tuple $\langle \{T_i\}_{i \in [m]}, \{w_i\}_{i \in [m]}\rangle $, where $\{T_i\}_{i \in [m]}$ denotes a collection of decision trees (the ensemble), and $\{w_i\}_{i \in [m]}$ is a set of real-valued weights. The model $\mathcal{T}$ is applied to regression tasks and the function it computes is defined as follows:


\begin{equation}
\label{tree_ensemble_formulation_appendix}
    f_{\mathcal{T}}(x_{1}, \ldots, x_{n}) := \sum\limits_{i=1}^{m} w_{i} \cdot f_{T_{i}}(x_{1}, \ldots, x_{n})
\end{equation}

\subsubsection{Linear RNNs and Linear Regression Models.}
Recurrent Neural Networks (RNNs) are neural models specifically designed to handle sequential data, making them ideal for tasks involving time-dependent or ordered information. They are widely used in applications such as language modeling, speech recognition, and time series prediction. A (second-order) linear RNN is a sub-class of RNNs, that models non-linear multiplicative interactions between the input and the RNN's hidden state. Formally, a second-linear RNN $R$ is parametrized by the tuple $<h_{0}, \mathcal{T}, W, U, b, O>$ where :
\begin{itemize}
  \item $h_{0} \in \mathbb{R}^{d}$: is the initial state vector.
  \item $\mathcal{T} \in \mathbb{R}^{d \times d \times d}$: Second-order interaction tensor capturing the multiplicative interactions between \( x_t^{(i)} \) and \( h_{t-1}^{(j)} \).
  \item $W \in \mathbb{R}^{d \times N}$: The input matrix
  \item $U \in \mathbb{R}^{d \times d}$: The state transition matrix
  \item $b \in \mathbb{R}^{d}$: The bias vector
  \item $O \in \mathbb{R}^{N \times n_{out}}$: The observation matrix
\end{itemize}
The parameter $d$ is referred to as the size of the RNN, and $N$ is the vocabulary size. 

The processing of a vector $x = \left(x_1, \ldots, x_{n_{in}}\right) \in [N]^{n_{in}}$ by an RNN is performed sequentially from left to right according to the equation:
\[
\left\{
\begin{aligned}
\mathbf{h}_t &= \mathcal{T} \times_{(1,1)} \mathbf{h}_{t-1} \times_{(2,1)} e_{x_{t}}^{N} + W e_{x_{t}}^{N} + U \mathbf{h}_{t-1} + b   \label{app:eq:rnn} \\
\mathbf{y} &= O^{T} \cdot \mathbf{h}_{n_{in}}
\end{aligned}
\right.
\]
Where $\mathbf{h}_{t-1}$ is the hidden state vector of the RNN after reading the prefix $x_{1:y}$ of the input sequence, and $\mathbf{y}$ is the model's output for the input instance $x$.

Second-order linear RNNs, as defined above, generalize two classical ML models, namely \textit{first-order linear RNNs} and \textit{Linear Regression Models}. When the second-order interaction tensor $\mathcal{T}$ is set to the zero tensor, then the family of second-order linear RNNs is reduced to the family of first-order linear RNNs. If, in addition, the matrix $U$ is equal to the identity matrix, then a second-order linear RNN is reduced to a linear regression model. 

We use three main tensor operations in this work:



\subsection{Elements of Tensor Algebra: Operations over Tensors, Copy Tensors} 

\subsubsection{Operations over Tensors}
In this article, we mainly use three operations over Tensors:

\textbf{Operation 1: Contraction.} Given two Tensors $\mathcal{T}^{(1)} \in \mathbb{R}^{d_{1}  \times \ldots \times d_{n}}$ and $\mathcal{T}^{(2)} \in \mathbb{R}^{d'_{1} \times \ldots \times d'_{m}}$, and two indices $(i,j) \in  [n] \times [m]$, such that $d_{i} = d'_{j}$ 
The contraction of Tensors $\mathcal{T}^{(1)}$ and $\mathcal{T}^{(2)}$ over indices $i$ and $j$ produces another tensor, denoted $\mathcal{T}^{(1)} \times_{(i,j)} \mathcal{T}^{(2)}$, over $\mathbb{R}^{d_{1} \times \ldots d_{i-1} \times d_{i+1} \times \ldots \times d_{n} \times d'_{1} \times \ldots \times d'_{j-1} \times d'_{j+1} \times \ldots \times  d'_{m}}$ such that:
    \begin{align*}
\left( \mathcal{T}^{(1)} \times_{(i,j)} \mathcal{T}^{(2)} \right)_{a_1,\dots,a_{i-1},a_{i+1}, \dots ,a_n, b_1,\dots,b_{j-1},b_{j+1},\dots,b_m} &\myeq   \\ & \hspace{-6em} \sum_{k=1}^{d_i} \mathcal{T}^{(1)}_{a_1,\dots,a_{i-1},k,a_{i+1},\dots,a_m} \cdot \mathcal{T}^{(2)}_{b_1,\dots,b_{j-1},k,b_{j+1},\dots,b_n}
\end{align*}

We slightly abuse notation and generalize single-leg contraction to \emph{multi-leg contraction}. For a set $S \subseteq [n] \times [m]$, the multi-leg contraction $\mathcal{T}^{(1)} \times_{S} \mathcal{T}^{(2)}$ applies leg contraction to all index pairs in $S$. This operation is commutative --- the result doesn't depend on contraction order.

\textbf{Operation 2: Outer Product.}  Given two Tensors $\mathcal{T}^{(1)} \in \mathbb{R}^{d_{1}  \times \ldots \times d_{n}}$ and $\mathcal{T}^{(2)} \in \mathbb{R}^{d'_{1} \times \ldots \times d'_{m}}$. The Tensor Outer product of $\mathcal{T}^{(1)}$ and $\mathcal{T}^{(2)}$ produces a Tensor, denoted $\mathcal{T}^{(1)} \circ \mathcal{T}^{(2)}$, in $\mathbb{R}^{d_{1} \times \ldots \times d_{n} \times d'_{1} \times \ldots \times d'_{m}}$ whose parameters are given as follows:
$$(\mathcal{T}^{(1)} \circ \mathcal{T}^{(2)})_{a_{1}, \ldots a_{n}, b_{1}, \ldots b_{m}} \myeq  \mathcal{T}^{(1)}_{a_{1}, \ldots , a_{n}} \cdot \mathcal{T}^{(2)}_{b_{1}, \ldots, b_{m}} $$

\textbf{Operation 3: Reshape.} Let $\mathcal{T} \in \mathbb{R}^{d_1 \times d_2 \times \cdots \times d_n}$ be a tensor of order $n$, and let $[m_1, m_2, \dots, m_k]$ be a list of positive integers such that $m_1 + m_2 + \cdots + m_k = n$.
Define: $
p(i) \myeq 1 + \sum_{l=1}^{i-1} m_l, ~~ q(i) \myeq \sum_{l=1}^{i} m_l$
and $D_i \myeq \prod_{j=p(i)}^{q(i)} d_j$.
Then, the reshape operation, denoted $
\texttt{reshape}(\mathcal{T}, [m_1, \dots, m_k])$
produces a new tensor $
\mathcal{T}' \in \mathbb{R}^{D_1 \times D_2 \times \cdots \times D_k}$
via a bijection between the multi-index $(i_1, \dots, i_n)$ of $\mathcal{T}$ and $(J_1, \dots, J_k)$ of $\mathcal{T}'$. For each group $i$, the combined index is:
$J_i = 1 + \sum_{j=p(i)}^{q(i)} \left((i_j - 1) \prod_{l=j+1}^{q(i)} d_l \right),$
and we set $
\mathcal{T}'(J_1, \dots, J_k) = \mathcal{T}(i_1, \dots, i_n)$.
This mapping is bijective and preserves the number of elements.

\subsubsection{Copy Tensors} \label{app:subsec:copytensor}
In the field of TNs, a copy tensor, also known as a delta tensor or Kronecker tensor \citep{biamonte2020lecturesquantumtensornetworks}, is a special type of tensor that enforces equality among its indices. It ensures that all connected indices must take the same value, making it essential for modeling constraints where multiple indices must be identical. This tensor is widely used in operations requiring synchronized index values across different parts of a network. In our context, it will be employed to simulate BNNs using TNs (Section \ref{app:sec:bnn}), as well as in the proof of the \#P-Hardness of computing Marginal SHAP for General TNs (subsection \ref{app:subsec:sharphard}).
Formally, for $n \geq 1$, the elements of the order-$n$ copy tensor, denoted $\Delta^{(n)}$ , defined over the index set $[d]$ are given by:
\begin{equation}
\Delta^{(n)}_{i_1, i_2, \ldots, i_n} :=
\begin{cases}
1 & \text{if } i_1 = i_2 = \cdots = i_n \\
0 & \text{otherwise}
\end{cases}
\end{equation}

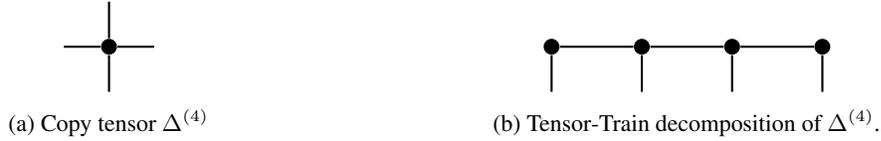
\begin{figure}[H]
  \centering
  \begin{subfigure}{0.45\textwidth}
    \centering
    \begin{tikzpicture}[scale=1, every node/.style={inner sep=1pt}]
      \node[fill=black, circle, minimum size=6pt] (c) at (0,0) {};
      \foreach \angle in {90, 0, -90, 180} {
        \draw[thick] (c) -- ++(\angle:0.6);
      }
      \node[below=10pt] {};
    \end{tikzpicture}
    \caption{Copy tensor $\Delta^{(4)}$}
  \end{subfigure}
  \hfill
  \begin{subfigure}{0.45\textwidth}
    \centering
    \begin{tikzpicture}[every node/.style={inner sep=1pt}]
      \def\n{4}  
      \foreach \k in {1,...,\n} {
        \node[fill=black, circle, minimum size=6pt] (G\k) at (\k*1.2,0) {};
      }
      
      \draw[thick] (G1) -- (G2);
      \draw[thick] (G2) -- (G3);
      \draw[thick] (G3) -- (G4);

      \foreach \k in {1,...,\n} {
        \draw[thick] (G\k) -- ++(270:0.6cm);
      }
      
    \end{tikzpicture}
    \caption{Tensor‑Train decomposition of $\Delta^{(4)}$.}
  \end{subfigure}
   \caption{(a) The copy tensor $\Delta^{(4)}$ enforces index‑equality among its $n$ legs. (b) Its exact TT representation is a chain of $4$ tensor cores, each a black node with one “physical” leg (down) and two “bond” legs (horizontal), enforcing $i_1=\cdots=i_n$.} \label{app:fig:copytensor}
\end{figure}

In TN diagrams, a copy tensor is traditionally depicted as a black dot with 
$n$ edges, where each edge represents one of its indices (Figure \ref{app:fig:copytensor}).

\paragraph{Copy Tensors as Tensor Trains (TTs).} The explicit construction of the copy tensor $\Delta^{(n)}$ scales exponentially with $n$. Fortunately, as shown in \citep{li2021}, the copy tensor $\Delta^{(n)}$ admits an exact length TT decomposition \(n\).  
\begin{equation} \label{app:eq:copytensorastt1}
\mathcal{G}^{(k)} \;\in\; \mathbb{R}^{r_{k-1}\,\times\,d\,\times\,r_k},
\qquad
r_0 = r_n = 1,\quad r_k = d\;(1\le k\le n-1)
\end{equation}

With the following elements:

\begin{equation} \label{app:eq:copytensorastt2}
\mathcal{G}^{(k)}_{\alpha_{k-1},\,i_k,\,\alpha_k}
\ := \llbracket \alpha_{k-1} = i_k \rrbracket \cdot \llbracket \alpha_k = i_k\rrbracket 
\end{equation}

Where \(\alpha_0=\alpha_n=1\), and each \(\alpha_k,i_k\in\{1,\dots,d\}\).  Hence, we have the following:

\begin{equation}
\Delta^{(n)}_{i_1,\dots,i_n}
=
\sum_{\alpha_1,\dots,\alpha_{n-1}=1}^d
\mathcal{G}^{(1)}_{1,i_1,\alpha_1}
\;\mathcal{G}^{(2)}_{\alpha_1,i_2,\alpha_2}\;\cdots\;
\mathcal{G}^{(n)}_{\alpha_{n-1},i_n,1}.
\end{equation}

Since each core enforces \(\alpha_{k-1}=i_k=\alpha_k\), the only nonzero term in the sum occurs when \(i_1=\cdots=i_n\), thereby exactly reproducing the copy tensor.

\subsection{Layered Deterministic Finite Automata (LDFAs).} \label{app:subsec:ldfa}
A \emph{Layered Deterministic Finite Automaton} (LDFA) is a restricted form of a deterministic finite automaton \citep{hopcroft2006introduction} with a strict layered structure. Let $\Sigma = [N]$ be a finite input alphabet. An LDFA over $\Sigma$ is defined as a tuple,
\[
\mathcal{A} = \left(L, \{S_l\}_{l=1}^{L}, \Sigma, \delta, s^0_1, F_L\right)
\]
where:
\begin{itemize}
    \item $L \geq 1$ is the number of layers;
    \item For each $l \in [L]$, $S_l$ is a finite set of states at layer $l$, and the total set of states is $Q = \biguplus_{l=1}^{L} S_l$;
    \item $\delta$ is a partial transition function:
    \[
    \delta: \{ (s, \sigma) \mid s \in S_l,\, \sigma \in \Sigma,\, 1 \le l < L \} \to S_{l+1}
    \]
    such that for each $(s, \sigma)$, $\delta(s, \sigma)$ is uniquely defined if a transition exists, and otherwise undefined;
    \item $s^0_1 \in S_1$ is the unique initial state, located in the first layer;
    \item $F_L \subseteq S_L$ is the set of accepting states, all located in the final layer.
\end{itemize}

The LDFA processes sequences of length $L$. A sequence $w = \sigma_1 \sigma_2 \cdots \sigma_L$ is accepted if there exists a sequence of states $s_1^0, s_2^1, \dots, s_{L+1}^L$ such that
\[
s_{l+1}^{l} = \delta(s_l^{l-1}, \sigma_l), \quad \text{for all } l = 1, \dots, N,
\]
and $s_{L+1}^L \in F_L$. By construction, transitions only go forward one layer at a time, making the automaton acyclic and enforcing a strict pipeline topology.

\paragraph{LDFAs as TTs.} Layered Deterministic Finite Automata (LDFAs) admit a natural and efficient representation in the Tensor Train (TT) format. Given an LDFA $\mathcal{A} = \left(L, \{S_l\}_{l=1}^{L}, \Sigma, \delta, s^0_1, F_L\right)$ over the alphabet $\Sigma = [N]$, we construct a sequence of tensor cores $\{\mathcal{G}^{(l)}\}_{l=1}^{L}$, where each core $\mathcal{G}^{(l)}$ is a sparse binary 3-dimensional tensor encoding the transition map $\delta$ from layer $l$ to $l+1$. Each core $\mathcal{G}^{(l)}$ has dimensions corresponding to $|S_l| \times [N] \times |S_{l+1}|$, where an entry $\mathcal{G}^{(l)}(s, \sigma, s') = 1$ if and only if $\delta(s, \sigma) = s'$ (i.e. the transition from the state $s$ to the state $s'$ is valid after reading the inpit $\sigma$.

Using the TT representation of LDFAs, the acceptance of the input $x = (x_{1}, \ldots, x_{L})$ can be alternatively computed as:
\[
f(x) = \left[ e_{s_{1}^{0}}^{N} \times_{(1,1)} \mathcal{G}^{(1)}  \times_{(3,1)} \cdot G^{(2)} \times_{(3,1)} \cdots  \times_{(3,1)} \mathcal{G}^{(L)} \cdot e_{F_{L}}^{N} \right] \times_S \left[ e_{x_1}^{N} \circ \cdot \circ e_{x_{L}}^{N} \right]
\]
where $S \myeq \{(i,i):i \in [L] \}$.

It's worth noting that the conversion from an LDFA to its TT representation can be performed in polynomial time with respect to the number of layers $L$ and the total number of states $\sum_{l=1}^L |S_l|$, since each transition is represented by a single non-zero entry in a sparse tensor core. This TT encoding enables the efficient representation and manipulation of the automaton using TN operations.

\paragraph{Product Operation over LDFAs.} Given two LDFAs $\mathcal{A}^{(1)} = (L, \{S_l^1\}_{l=1}^{L}, \Sigma, \delta^{1}, s_{1}^{0,1}, F_{L}^1)$ and $\mathcal{A}^{(2)} = (L, \{S_l^2\}_{l=1}^{L}, \Sigma, \delta^{2}, s_{1}^{0,2}, F_{L}^2)$ with identical input alphabet and number of input layers, the product of $\mathcal{A}^{(1)}$ and $\mathcal{A}^{(2)}$, denoted $\mathcal{A}^{(1)} \times \mathcal{A}^{(2)}$, is a LDFA with the following parameterization: 
$$ \mathcal{A}^{(1)} \times \mathcal{A}^{(2)} = \left(L, \{ S_{l}^1 \times S_{l}^{2} \}_{l=1}^{L}, \Sigma, \delta^{1} \times \delta^{2}, (s_1^{0,1}, s_1^{0,2}) , F_{L}^1 \times F_{L}^{2} \right)$$
where for any $l \in [L]$ and any pair of states $(s_l^1, s_l^2) \in S_{l}^1 \times S_l^2$, and $\sigma \in \Sigma$, we have: 
$$(\delta^{1} \times \delta^{2}) \left( (s_l^1 \times s_l^2), \sigma \right) = \left(\delta_{1}(s_l^1, \sigma), \delta^{2}(s_l^2, \sigma) \right)$$

The function computed by the LDFA $\mathcal{A}^{(1)} \times \mathcal{A}^{(2)}$ is equal to $f_{\mathcal{A}^{(1)} \times \mathcal{A}^{(2)}} = f_{\mathcal{A}^{(1)}} \cdot f_{\mathcal{A}^{(2)}}$ \citep{marzouk2024tractability}. Concretely, the product operation on LDFAs computes the intersection of two LDFAs. Equivalently, when LDFAs encode clauses, this operation corresponds to the conjunction of those clauses. Note that the product operation over LDFAs is commutative.

In the context of our work, we are interested in efficient encodings of the product of multiple LDFAs into a single TN. Fix an integer $k \geq 1$ and let $\{\mathcal{A}^{(k)} \}_{k \in [K]}$  be $K$ LDFAs sharing the same input alphabet and the same number of layers. We aim at constructing a TN that computes the function: $f_{\mathcal{A}^{(1)}} \times \ldots f_{\mathcal{A}^{(K)}} $. If we restrict our TN representation to be in TT format, a naive construction of an equivalent TT consists of applying the Kronecker product over transition maps of all these LDFAs \cite{marzouk2024tractability}. The state space of the resulting LDFA from this operation scales exponentially with respect to $K$. However, for general TNs, one can prove that the product over LDFAs admits an efficient encoding in polynomial time: 

\begin{lemma} \label{app:lemma:productldfatn}
    There exists a polynomial algorithm that takes as input a set of $K$ LDFAs $\{A^{(k)}\}_{k \in [K]}$ sharing the same input alphabet $\Sigma$, and the same number of layers $L$ and outputs a TN equivalent to $\mathcal{A}^{(1)} \times \ldots \times A^{(K)}$. The complexity is measured in terms of $K$, the alphabet size, and the number of states of the input LDFAs.  
\end{lemma}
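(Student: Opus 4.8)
The plan is to exploit the TT encoding of individual LDFAs (Appendix~\ref{app:subsec:ldfa}) and to couple the $K$ chains \emph{only} through their shared input legs by means of copy tensors (Appendix~\ref{app:subsec:copytensor}), thereby sidestepping the exponential state-space blowup incurred by the naive product automaton. Concretely, for each $k \in [K]$ I would first build the TT representation $\llbracket \mathcal{G}^{(k,1)}, \ldots, \mathcal{G}^{(k,L)} \rrbracket$ of $\mathcal{A}^{(k)}$, where each core $\mathcal{G}^{(k,l)} \in \mathbb{R}^{|S_l^k| \times N \times |S_{l+1}^k|}$ encodes $\delta^k$, and the two boundary vectors $e_{s_1^{0,k}}^{N}$ and $e_{F_L^k}^{N}$ fix the initial and accepting states. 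Each such chain has exactly $L$ free ``physical'' legs, one per input position.

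Next, for each position $l \in [L]$ I would introduce a single copy tensor $\Delta^{(K+1)}$ over the alphabet $[N]$, wiring its first $K$ legs to the physical legs of $\mathcal{G}^{(1,l)}, \ldots, \mathcal{G}^{(K,l)}$ and leaving the last leg free to serve as the $l$-th input leg of the overall network; the resulting topology is a ``comb'' of $K$ horizontal chains joined at each column by a vertical copy-tensor gadget. By the index-equality semantics of the copy tensor, this forces all $K$ automata to read the same symbol at position $l$, so the assembled TN has the desired $L$ free input legs. Correctness then follows from a factorization argument: once an input $x = (x_1, \ldots, x_L)$ is fixed by contracting each free leg with $e_{x_l}^{N}$, every copy tensor collapses all its connecting legs to $x_l$, the $K$ chains (which share no bond indices with one another) decouple into independent contractions, and the scalar value of the network equals $\prod_{k=1}^{K} f_{\mathcal{A}^{(k)}}(x)$, which is exactly $f_{\mathcal{A}^{(1)} \times \cdots \times \mathcal{A}^{(K)}}(x)$ by the definition of the product operation.

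The delicate point --- and the main obstacle --- is that the order-$(K+1)$ copy tensor $\Delta^{(K+1)}$ has $N^{K+1}$ entries, so materializing it explicitly would reintroduce precisely the exponential dependence on $K$ that we are trying to avoid. The resolution is to replace each copy tensor by its exact TT decomposition (Equations~\eqref{app:eq:copytensorastt1}--\eqref{app:eq:copytensorastt2}), a chain of $K+1$ cores each of size at most $N \times N \times N$, which preserves the index-equality semantics while keeping every gadget polynomial. Assembling the pieces, the final TN consists of the $K \cdot L$ automaton cores (each of size $O((\max_{k,l}|S_l^k|)^2 \cdot N)$) together with $L$ copy-tensor gadgets of total size $O(L \cdot K \cdot N^3)$, all constructible in time polynomial in $K$, the alphabet size $N$, and the total number of states. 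It then remains only to check the routine fact that the wiring and boundary contractions can be emitted by a straightforward polynomial-time procedure, which completes the argument.
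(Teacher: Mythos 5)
Your proposal is correct and follows essentially the same route as the paper's proof: couple the per-automaton TT cores at each layer through a copy tensor on the shared symbol legs, then replace each copy tensor by its exact TT decomposition (Equations~\eqref{app:eq:copytensorastt1}--\eqref{app:eq:copytensorastt2}) to avoid the $N^{K+1}$ blowup, yielding the same polynomial-size two-dimensional grid (your ``comb''). The only cosmetic difference is how correctness is verified --- you argue globally that fixing the input collapses the copy tensors and decouples the chains so the value factorizes as $\prod_{k} f_{\mathcal{A}^{(k)}}(x)$, whereas the paper checks layer by layer (Proposition~\ref{app:prop:productoperation}) that each contracted core reproduces the product automaton's transition map --- and both arguments are sound.
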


The key building block for proving Lemma \ref{app:lemma:productldfatn} is the copy tensor introduced in the previous section. Specifically, for each layer $l \in [L]$, the tensor obtained by contracting the copy tensor $\Delta^{(K)}$ with the $l$-th tensor cores of the TT representations of the LDFAs matches the transition map of the resulting LDFA produced by the product operation. We show this fact in the following proposition. To ease exposition, we restrict our result for the case $K=2$. Thanks to the commutative property of the product operation over LDFAs, the general case follows by induction. 

\begin{proposition} \label{app:prop:productoperation}
     Let $\mathcal{A}^{(1)}$, and $\mathcal{A}^{(2)}$ be two LDFAs (sharing the same input alphabet and the same number of layers) parametrized in TT format as $\llbracket \mathcal{G}^{(1,1)}, \ldots , \mathcal{G}^{(1,L)} \rrbracket$ and $\llbracket \mathcal{G}^{(2,1)}, \ldots , \mathcal{G}^{(2,L)} \rrbracket$, respectively. Then, the TT is parametrized as: 
     \begin{equation} \label{app:eq:ttproduct}
     \llbracket \Delta^{(3)} \times_{(2,1)} \mathcal{G}^{(1,1)} \times_{(3,1)} \mathcal{G}^{(2,1)}, \ldots, \Delta^{(3)} \times_{(2,1)} \mathcal{G}^{(1,L)} \times_{(3,1)} \mathcal{G}^{(2,L)}\rrbracket
     \end{equation}
     is equivalent to $\mathcal{A}^{(1)} \times \mathcal{A}^{(2)}$.
\end{proposition}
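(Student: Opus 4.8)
The plan is to prove the claim at the level of individual tensor cores and then invoke the faithfulness of the LDFA-to-TT encoding recalled above. Concretely, writing $\mathcal{H}^{(l)} \myeq \Delta^{(3)} \times_{(2,1)} \mathcal{G}^{(1,l)} \times_{(3,1)} \mathcal{G}^{(2,l)}$ for the $l$-th core of the TT in Equation~\eqref{app:eq:ttproduct}, I would show that, once the pair of state legs inherited from $\mathcal{G}^{(1,l)}$ and $\mathcal{G}^{(2,l)}$ is grouped into a single Cartesian-product state index, $\mathcal{H}^{(l)}$ coincides exactly with the transition core of the product automaton $\mathcal{A}^{(1)} \times \mathcal{A}^{(2)}$. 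Since the TT assembled from an LDFA's transition cores faithfully computes its acceptance function, and since the product automaton's initial and accepting data factorize as $(s_{1}^{0,1}, s_{1}^{0,2})$ and $F_{L}^{1} \times F_{L}^{2}$, this core-wise identity will immediately yield the asserted equivalence, and hence (by the cited identity $f_{\mathcal{A}^{(1)} \times \mathcal{A}^{(2)}} = f_{\mathcal{A}^{(1)}} \cdot f_{\mathcal{A}^{(2)}}$) the desired result.

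The heart of the argument is the synchronization property of the copy tensor. By definition, $\Delta^{(3)}$ is supported only on the diagonal where all three alphabet legs carry the same symbol; in the contraction, two of its three legs are joined to the physical (alphabet) legs of $\mathcal{G}^{(1,l)}$ and $\mathcal{G}^{(2,l)}$, while the third remains free and becomes the physical leg of $\mathcal{H}^{(l)}$. The defining equations~\eqref{app:eq:copytensorastt1}–\eqref{app:eq:copytensorastt2} then force both cores to be read at one common symbol $\sigma$, so the sum defining the double contraction collapses to the single diagonal term:
\begin{equation}
\mathcal{H}^{(l)}_{(s^{1}, s^{2}),\, \sigma,\, (t^{1}, t^{2})} = \mathcal{G}^{(1,l)}_{s^{1}, \sigma, t^{1}} \cdot \mathcal{G}^{(2,l)}_{s^{2}, \sigma, t^{2}}.
\end{equation}
Substituting the entrywise definition $\mathcal{G}^{(k,l)}_{s, \sigma, t} = \llbracket \delta^{k}(s, \sigma) = t \rrbracket$, the right-hand side equals $\llbracket \delta^{1}(s^{1}, \sigma) = t^{1} \rrbracket \cdot \llbracket \delta^{2}(s^{2}, \sigma) = t^{2} \rrbracket$, which is precisely the indicator of $(\delta^{1} \times \delta^{2})((s^{1}, s^{2}), \sigma) = (t^{1}, t^{2})$ --- i.e.\ the transition core of $\mathcal{A}^{(1)} \times \mathcal{A}^{(2)}$.

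With the core identity in hand, it remains to verify that the bond (state) legs chain correctly. In Equation~\eqref{app:eq:ttproduct} the outgoing state legs of $\mathcal{G}^{(1,l)}$ and $\mathcal{G}^{(2,l)}$ inside $\mathcal{H}^{(l)}$ are contracted with the incoming state legs of $\mathcal{G}^{(1,l+1)}$ and $\mathcal{G}^{(2,l+1)}$ inside $\mathcal{H}^{(l+1)}$; grouping each such pair into one index of dimension $|S_{l+1}^{1}| \cdot |S_{l+1}^{2}|$ realizes exactly the product state space $S_{l+1}^{1} \times S_{l+1}^{2}$, and the rank-one boundary data $e^{N}_{s_{1}^{0,1}} \circ e^{N}_{s_{1}^{0,2}}$ and $e^{N}_{F_{L}^{1}} \circ e^{N}_{F_{L}^{2}}$ supply the correct initial state and accepting set. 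The general case $K > 2$ then follows by induction using commutativity of the product operation. I expect the main obstacle to lie not in the copy-tensor collapse, which is essentially immediate, but in the careful index bookkeeping: tracking the order in which free legs reappear after each contraction (as dictated by the contraction convention of Section~\ref{subsec:TN}) and confirming that the paired state legs reshape into the Cartesian-product bond index in a consistent arrangement across all layers, so that chaining the cores reproduces $\mathcal{A}^{(1)} \times \mathcal{A}^{(2)}$ rather than an index-permuted variant of it.
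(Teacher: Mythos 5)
Your proposal is correct and takes essentially the same route as the paper's own proof: both arguments establish the core-wise identity by exploiting the diagonal support of $\Delta^{(3)}$ to collapse the double contraction to the single term $\mathcal{G}^{(1,l)}_{s^{1},\sigma,t^{1}} \cdot \mathcal{G}^{(2,l)}_{s^{2},\sigma,t^{2}} = \llbracket \delta^{1}(s^{1},\sigma)=t^{1} \rrbracket \cdot \llbracket \delta^{2}(s^{2},\sigma)=t^{2} \rrbracket$, which is precisely the indicator of the product transition $(\delta^{1}\times\delta^{2})((s^{1},s^{2}),\sigma)=(t^{1},t^{2})$. The additional bookkeeping you describe (grouping the bond legs into a Cartesian-product state index and checking the initial/accepting boundary data) is the same layer-wise simulation the paper appeals to implicitly, so the two proofs coincide in substance.
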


\begin{proof}
   Let $\mathcal{A}^{(1)} = (L, \{S_l\}_{l=1}^{L}, \Sigma, \delta^{1}, s_{1}^{0,1}, F_{L}^1)$ and $\mathcal{A}^{(2)} = (L, \{S'_{l}\}_{l=1}^{L},
   \Sigma, \delta^{2},
   {s'}_{1}^{0,1}, {F'}_{L}^1)$ be two LDFAs sharing the same input alphabet $\Sigma$ and the same number of layers $L$. We show that at each layer $l \in [L-1]$, the TT whose parameterization is given in equation \eqref{app:eq:ttproduct} simulates exactly the dynamics governed by the transition map of the product LDFA $\mathcal{A}^{(1)} \times \mathcal{A}^{(2)}$ at layer $l$, namely $\delta_{l}^{1} \times \delta_{l}^{2}$. 
   
   More formally, denote $\llbracket \mathcal{G}^{(1,1)}, \ldots, \mathcal{G}^{(1,L)}$ and $\llbracket \mathcal{G}^{(2,1)}, \ldots, \mathcal{G}^{(2,L)} \rrbracket$ the TT parametrizations of $\mathcal{A}^{(1)}$ and $\mathcal{A}^{(2)}$, respectively. We need to show that, for any $(i_l,i_{l+1}, \sigma, j_{l}, j_{l+1}) \in [S_l] \times [S_{l+1}] \times \Sigma\times [S'_{l}] \times [S'_{l+1}])$, we have: 
   \begin{align*} 
    \left(\Delta^{(3)} \times_{(2,1)} \mathcal{G}^{(1,l)} \times _{(3,l)} \mathcal{G}^{(2,l)} \right)_{i_l,j_l, \sigma, i_{l+1},j'_{l+1}} & = \llbracket \delta^{1}(i_l,\sigma) = i_{l+1} \rrbracket \cdot \llbracket \delta^{2}(j_l,\sigma) = j_{l+1} \rrbracket \\
    &= \left \llbracket (\delta^{1} \times \delta^{2})\left( (i_l,j_l), \sigma \right) = (i_{l+1}, j_{l+1}) \right \rrbracket
   \end{align*}

where $\delta^{1},~\delta^{2},~\delta^{1} \times \delta^{2}$ refer to the transition map of $\mathcal{A}^{(1)},\mathcal{A}^{(2)}$ and $\mathcal{A}^{(1)} \times \mathcal{A}^{(2)}$, respectively. 

 We have:
\begin{align*}
    \left(\Delta^{(3)} \times_{(2,1)} \mathcal{G}^{(1,l)} \times _{(3,l)} \mathcal{G}^{(2,l)} \right)_{i_l,i_{l+1}, \sigma, j_{l},j_{l+1}} &= \sum\limits_{(\sigma', \sigma") \in \Sigma^{2}} \Delta^{(3)}_{\sigma, \sigma', \sigma"} \cdot \mathcal{G}^{(1,l)}_{i_l,\sigma',i_{l+1}} \cdot \mathcal{G}^{(2,l)}_{j_l,\sigma',j_{l+1}} \\
    &= \mathcal{G}^{(1,l)}_{i_l,\sigma,i_{l+1}} \cdot \mathcal{G}^{(2,l)}_{j_l,\sigma,j_{l+1}} \\
    &= \llbracket \delta^{1}(i_l, \sigma) = i_{l+1} \rrbracket \cdot \llbracket \delta^{2}(j_l, \sigma) = j_{l+1} \rrbracket 
\end{align*}
 
\end{proof}

Now, we are ready to prove lemma \ref{app:lemma:productldfatn}:

\begin{proof} \textit{(Lemma \ref{app:lemma:productldfatn})}
    Proposition \ref{app:prop:productoperation} shows that one can construct a TT equivalent to the product of K LDFAs $\{ \mathcal{A}^{(k)}\}_{k \in [K]}$ whose tensor core at layer $l$ are parametrized as:
    $$\Delta^{(K)} \times_{(2,1)} \mathcal{G}^{(1,l)} \times_{(3,1)} \ldots \ldots \times_{(K+1,1)} \mathcal{G}^{(K+1,l)}$$

  A naive construction of the tensor $\Delta^{(K)}$ scales exponentially with respect to $K$. However, constructing it using TT format (as discussed in the previous section) can be done in $\mathcal{O}(\texttt{poly}(|\Sigma|, K))$ running time. 

  Using the TT representation of copy tensors, the resulting TN that computes the product LDFA $\mathcal{A}^{(1)} \times \ldots \times \mathcal{A}^{(K)}$ has a 2-dimensional grid structure whose width is equal to the number of layers $L$ and whose height is equal to $K$. This TN structure can be constructed in polynomial time with respect to the size of the LDFAs, $K$, $|\Sigma|$, and $L$. Figure \ref{app:fig:tildew} provides a graphical illustration of this construction. 
\end{proof}

\begin{figure}[t]
  \centering
  \includegraphics[width=0.6\linewidth]{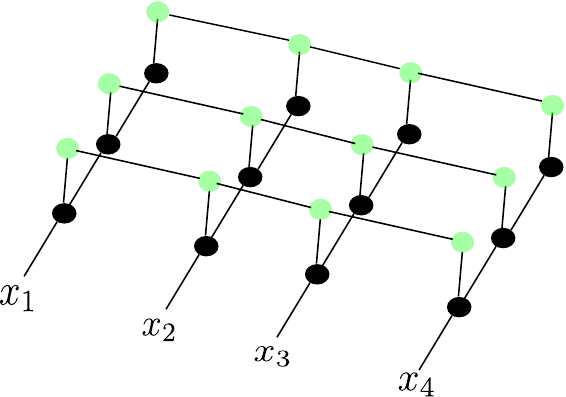}
    \caption{The structure of a TN equivalent to the product operation of 3 LDFAs (whose TT representations are depicted in green). The number of layers is equal to $L=4$. The tensors in the bottom (in black) correspond to the TT representation of the copy tensor.} \label{app:fig:tildew}
\end{figure}

\section{Computing Marginal SHAP for General TNs}
\label{Shap_general_tn_appendix}

This section presents the proof of correctness of the framework introduced in section \ref{sec:tensorshap} to compute exactly Marginal SHAP scores of TNs with arbitrary structures. Specifically, we prove Proposition \ref{prop:marginal_shap_tensor}, Lemma \ref{lemma:weightshapfunction}, and Lemma \ref{lemma:valuefunctiontn}.  This section is organized into three parts, each devoted to one of these results.

\subsection{Proof of Proposition \ref{prop:marginal_shap_tensor}} \label{app:subsec:proposition1}
Recall the statement of Proposition 
\ref{prop:marginal_shap_tensor}:
 \begin{proposition*} 
Define the \emph{modified Weighted Coalitional tensor} $\Tilde{\mathcal{W}} \in \mathbb{R}^{n_{in} \times 2^{\otimes n_{in}}}$ such that $\forall(i,s_{1}, \ldots, s_{n_{in}}) \in [n_{in}] \times [2]^{\otimes n_{in}}$ it holds that $\Tilde{\mathcal{W}}_{i,s_{1},\ldots, s_{n_{in}}}\myeq - W(s - 1_{n_{in}})$ if $s_{i} = 1$, and $\Tilde{\mathcal{W}}_{i,s_{1},\ldots, s_{n_{in}}} \myeq W(s - 1_{n_{in}})$ otherwise. Moreover, define the \emph{marginal value tensor} $\mathcal{V}^{(M,P)}\in \mathbb{R}^{\mathbb{D} \times 2^{\otimes n_{in}} \times n_{out}}$, such that $\forall (x,s) \in \mathbb{D} \times [2]^{\otimes n}$ it holds that $\mathcal{V}^{(M,P)}_{x,s,:} \myeq 
V_{M}(x,s;P)$. Then we have that:
\begin{equation} \label{eq:marginalshaptensor_valuetensor}
\mathcal{T}^{(M,P)} = \Tilde{\mathcal{W}} \times_{S} \mathcal{V}^{(M,P)} \end{equation} where $S \myeq  \Bigl \{(k+1, k + n_{in}+1):~ k \in [n_{in}] \Bigr \}$.
\end{proposition*}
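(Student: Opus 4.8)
The plan is to derive the identity directly from the definition of the Shapley attribution in Equation~\eqref{eq:attribution}, by rewriting the sum over subsets $S \subseteq [n_{in}] \setminus \{i\}$ as a single \emph{signed} sum over \emph{all} coalitions, and then recognizing that signed sum as exactly the prescribed contraction of $\Tilde{\mathcal{W}}$ with $\mathcal{V}^{(M,P)}$ over the coalition legs $S$. Concretely, I would fix $(x,i) \in \mathbb{D} \times [n_{in}]$; since by definition $\mathcal{T}^{(M,P)}_{x,i,:} = \phi_i(\mathcal{T}^{M}, x, \mathcal{T}^{P})$, it suffices to show that the entry $(\Tilde{\mathcal{W}} \times_S \mathcal{V}^{(M,P)})_{i,x,:}$ reproduces the right-hand side of Equation~\eqref{eq:attribution}.

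First I would introduce the operation $\texttt{swap}(s,i)$, which for a binary coalition vector $s$ sets its $i$-th coordinate to $1$, and split Equation~\eqref{eq:attribution} into
\[
\phi_i = \sum_{\substack{s \in \{0,1\}^{n_{in}} \\ s_i = 0}} W(s)\, V_M(x, \texttt{swap}(s,i); P) \;-\; \sum_{\substack{s \in \{0,1\}^{n_{in}} \\ s_i = 0}} W(s)\, V_M(x, s; P).
\]
Then I would re-index the first sum via the substitution $t = \texttt{swap}(s,i)$, which ranges over coalitions with $t_i = 1$ and satisfies $s = t \setminus \{i\}$; this turns the first sum into $\sum_{t : t_i = 1} W(t \setminus \{i\})\, V_M(x,t;P)$. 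Merging the two sums over all of $\{0,1\}^{n_{in}}$ yields the compact signed form
\[
\phi_i = \sum_{s \in \{0,1\}^{n_{in}}} \Tilde{W}(s,i)\, V_M(x,s;P),
\qquad
\Tilde{W}(s,i) \myeq
\begin{cases}
 \phantom{-}W(s \setminus \{i\}) & \text{if } s_i = 1,\\
 -W(s) & \text{if } s_i = 0,
\end{cases}
\]
where the argument of $W$ always omits feature $i$; this also guarantees its factorials are well defined, so the degenerate full-coalition term never produces a factorial of $-1$.

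Next I would translate this to the $[2]$-indexing of the statement via $s = \bar{s} - 1_{n_{in}}$ (so that $s_i = 1 \leftrightarrow \bar{s}_i = 2$), which matches the entrywise definition of $\Tilde{\mathcal{W}}$ exactly: the sign is $+$ when $i$ lies in the coalition and $-$ otherwise. Finally, using $V_M(x,s;P) = \mathcal{V}^{(M,P)}_{x,s,:}$ from the definition of the marginal value tensor, the signed sum over all $s$ is precisely the multi-leg contraction that pairs each coalition leg of $\Tilde{\mathcal{W}}$ with the corresponding coalition leg of $\mathcal{V}^{(M,P)}$, as encoded by the index set $S$, while leaving the free legs $i$, $x$, and the output leg untouched. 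This establishes Equation~\eqref{eq:marginalshaptensor_valuetensor}.

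The routine parts are the algebraic re-indexing and the change of variables. The hard part will be the weight bookkeeping together with the index alignment of the contraction: I would need to verify that contracting over exactly the coalition legs (and no others) reproduces the per-entry sum, and that the SHAP weight $W$ is consistently evaluated on the coalition \emph{excluding} $i$ in both branches, so that the sign flip alone captures the marginal-contribution structure of Equation~\eqref{eq:attribution}.
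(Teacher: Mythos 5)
Your proposal is correct and follows essentially the same route as the paper's proof: introduce the $\texttt{swap}$ operation, rewrite Equation~\eqref{eq:attribution} as a single signed sum $\sum_{s}\Tilde{W}(s,i)\,V_M(x,s;P)$ over all coalitions, and identify that sum with the multi-leg contraction of $\Tilde{\mathcal{W}}$ with $\mathcal{V}^{(M,P)}$ over the coalition legs. The only difference is that you are more careful on the one delicate point: in the $s_i=1$ branch the paper's proof writes the weight as $W(s)$, which read literally gives $\frac{|s|!\,(n_{in}-|s|-1)!}{n_{in}!}$ (the wrong weights, and an undefined $(-1)!$ for the full coalition), whereas your $W(s\setminus\{i\})$ is the correct bookkeeping under which the merged signed sum exactly reproduces Equation~\eqref{eq:attribution}.
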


\begin{proof} 
We first define the \texttt{swap}(.,.) operation. For $n \in \mathbb{N}$, a binary vector $s \in \{0,1\}^{n}$ and $i \in [n]$, $\texttt{swap}(s,i)$ returns a binary vector of dimension $n$ equal to $s$ everywhere except for the $i$-th element where it is equal to $1$. 

We rewrite the vector $\phi_{i}(M,x,P)$ (Equation \eqref{eq:attribution}) as follows:
\begin{align}
\mathcal{T}^{M,P}_{x,i,:} = \mathbf{\phi}_{i}(M,x,P) &= \sum_{\substack{s \in \{0,1\}^{n_{in}} \\ s_i = 0}} W(s) \Bigl[ V_{M}(x,\texttt{swap}(s,i);P) - V_{M}(x,s;P) \Bigr]  \nonumber \\
&= \sum_{\substack{s \in \{0,1\}^{n_{in}} \\ s_i = 1}} W(s) \cdot V_{M}(x,s;P) - \sum_{\substack{s \in \{0,1\}^{n_{in}} \\ s_i = 0}} W(s) \cdot V_{M}(x,s;P) \nonumber \\
&= \sum\limits_{s \in \{0,1\}^{n_{in}}} \Tilde{W}(s,i) \cdot V_{M}(x,s;P) \label{eq:ishappattern}
\end{align}
where: $\Tilde{W}(s,i)  \myeq \begin{cases}
    W(s) &\text{if} ~~s_{i} = 1 ~\text{(Feature i to explain is part of the coalition)}\\ 
     -W(s) & \text{otherwise} 
 ~\text{(Feature i to explain is not part of the coalition)}
    \end{cases}$.

 A contraction of the modified Weighted Coalitional Tensor $\Tilde{\mathcal{W}}$ with the Marginal Value Tensor $\mathcal{V}^{(M,P)}$ across the dimensions defined by the set $S$ (defined in the proposition statement) straightforwardly yields Equation \eqref{eq:marginalshaptensor_valuetensor}.
\end{proof}

\subsection{The construction of the modified Weighted Coalitional Tensor $\Tilde{\mathcal{W}}$ (Lemma \ref{lemma:weightshapfunction})} \label{app:subsec:lemma1}
Recall the statement of Lemma \ref{lemma:weightshapfunction}:
\begin{lemma*}
    The modified Weighted Coalitional Tensor $\Tilde{\mathcal{W}}$ admits a TT representation: $\llbracket  \mathcal{G}^{(1)}, \ldots, \mathcal{G}^{(n_{in})} \rrbracket$, where $\mathcal{G}^{(1)} \in \mathbb{R}^{n_{in} \times 2 \times n_{in}^{2}}$, $\mathcal{G}^{(i)} \in \mathbb{R}^{n_{i}^{2} \times 2 \times n_{i}^{2}}$ for any $i \in [2,n_{in}-1]$, and $\mathcal{G}^{(n_{in})} \in \mathbb{R}^{n_{in}^{2} \times 2}$. Moreover, this TT representation is constructible in $O(\log(n_{in}))$ time using $O\left(n_{in}^{3}  \right)$ parallel processors.   
\end{lemma*}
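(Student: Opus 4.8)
The plan is to realize $\tilde{\mathcal{W}}$ as the function computed by a \emph{layered weighted automaton} (LWA) that reads the physical indices $s_1,\ldots,s_{n_{in}}$ from left to right, and then to invoke the standard LWA-to-TT correspondence (the same device used for the LDFA encodings elsewhere in the paper) to read off the cores $\mathcal{G}^{(j)}$. The automaton I would use carries a state of the form $(i,c)$, where $i \in [n_{in}]$ is the \emph{distinguished} feature whose attribution is being computed --- nondeterministically fixed in the first layer and propagated unchanged thereafter, which is exactly what makes $i$ appear as the dangling left leg of dimension $n_{in}$ on the first core --- and $c$ is a running counter of the number of coalition members seen so far \emph{among the non-distinguished positions}. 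Since the state set is $[n_{in}] \times \{0,\ldots,n_{in}-1\}$, the bond dimension is $n_{in}^2$, matching the claimed core shapes $\mathcal{G}^{(1)} \in \mathbb{R}^{n_{in}\times 2\times n_{in}^2}$, $\mathcal{G}^{(i)} \in \mathbb{R}^{n_{in}^2 \times 2 \times n_{in}^2}$, and $\mathcal{G}^{(n_{in})} \in \mathbb{R}^{n_{in}^2 \times 2}$.

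The transition weights would be chosen so that the SHAP weight arises by telescoping. First I would rewrite $W(k) = W(k-1)\cdot \tfrac{k}{n_{in}-k}$ with base case $W(0) = \tfrac{1}{n_{in}}$ (injected as a scalar at the first layer), so that the product of per-step factors along a path with $c$ non-distinguished coalition members collapses to $W(c)$. Concretely, at a position $j \neq i$ the core applies factor $1$ when $s_j$ marks ``out of coalition'' (leaving $c$ unchanged) and the telescoping factor when $s_j$ marks ``in coalition'' (incrementing $c$); at the distinguished position $j = i$ the counter is \emph{not} incremented, and the core instead applies the sign $-1$ when $s_i$ marks ``out'' and $+1$ when $s_i$ marks ``in''. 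The crucial point is that the distinguished position is excluded from the counter: this guarantees that in the in-coalition case the weight is $W$ evaluated at the coalition size \emph{without} $i$, which is precisely the Shapley coefficient $W(|S|)$ for $S = T \setminus \{i\}$, while the out-of-coalition case yields $-W(|T|)$. The last core $\mathcal{G}^{(n_{in})}$ then closes the chain by summing over all final states (contracting the right boundary with the all-ones matrix $\mathbf{1}_{n_{in}\times n_{in}}$), which is legitimate because, for fixed $(i,s)$, exactly one state is reached.

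Correctness I would establish by induction on the layer index: the partial contraction of $\mathcal{G}^{(1)},\ldots,\mathcal{G}^{(j)}$, read at a fixed $(i,s_1,\ldots,s_j)$ and at bond value $(i,c)$, equals the accumulated sign times the partial telescoping product, where $c$ is the number of in-coalition non-distinguished positions among the first $j$. Specializing to $j = n_{in}$ and closing the boundary then yields exactly $\tilde{\mathcal{W}}_{i,s_1,\ldots,s_{n_{in}}}$. For the complexity bound, I would observe that each core is sparse: for every state and every value of its physical index there is a single outgoing transition, so each of the $n_{in}$ cores has $O(n_{in}^2)$ nonzero entries, each a constant-size arithmetic expression in $c$ and $n_{in}$ computable independently. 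Allocating one processor per entry gives $O(n_{in}^3)$ processors working in parallel; the only non-constant-depth step is the final boundary summation over the $O(n_{in}^2)$ accepting states, a parallel reduction of depth $O(\log n_{in})$, which accounts for the stated time bound.

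The hard part will be the weight bookkeeping in the correctness step. The delicate issues are: getting the off-by-one right so that the distinguished position is genuinely excluded from the counter (so the in-coalition case produces $W(|T|-1)$ rather than $W(|T|)$, the former being the only well-defined and correct value), verifying that the telescoping identity collapses exactly to $W$, and confirming that the base normalization $\tfrac{1}{n_{in}}$ together with the acceptance summation introduces no spurious factors. Everything else --- the core shapes, the treatment of $i$ as a free left-boundary leg that is nonetheless ``copied'' through the bonds, and the parallel-construction accounting --- should follow routinely from the LWA-to-TT conversion.
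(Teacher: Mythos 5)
Your proposal follows essentially the same route as the paper's proof: a layered weighted automaton whose state $(i,c)$ carries the explained feature together with a coalition counter, realized as sparse TT cores of bond dimension $n_{in}^{2}$, with the Shapley weight produced by telescoping per-step factors, the sign handled at the distinguished position, the chain closed by contracting the right boundary with an all-ones matrix, correctness established by induction over layers, and the parallel bounds read off from the sparsity of the cores. Two differences are worth recording, and the first is substantive.

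The off-by-one question you single out as ``the hard part'' is exactly where you and the paper diverge --- and your resolution is the sound one. In the paper's cores the counter is incremented at \emph{every} in-coalition position, including $j=i$ (its in-coalition entry $\frac{k+1}{j(n_{in}-k-1)}$ carries no special case for $j=i$), so its induction invariant delivers $\pm W(|T|)$ with $|T|$ the full coalition size. This is internally consistent with the paper's literal definition of $\tilde{\mathcal{W}}$ in Proposition 1, which uses the magnitude $W(s-1_{n_{in}})$ in both branches, but it does not match the expansion of the Shapley formula, which requires the weight $W(|T|-1)$ on coalitions containing $i$; the discrepancy is hidden in the paper's proof of Proposition 1 by the ambiguous notation $W(s)$ after the change of variables. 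Indeed, the paper's convention is not even well defined at the all-ones pattern, where $W(n_{in})$ involves $(-1)!$ and the core entry above divides by zero, whereas your convention --- excluding the distinguished position from the counter, so the in-coalition weight is $W$ of the coalition \emph{without} $i$ --- is precisely what makes the downstream identity $\mathcal{T}^{(M,P)} = \tilde{\mathcal{W}} \times_{S} \mathcal{V}^{(M,P)}$ hold for the actual SHAP values. The second difference is minor: the paper seeds the first core with the factorials $(n_{in}-1)!$ and $(n_{in}-2)!$ and renormalizes by $1/j$ at each step, so its $O(\log(n_{in}))$ time bound is spent on a parallel-scan computation of factorials; your normalized telescoping $W(0)=\tfrac{1}{n_{in}}$, $W(k)=W(k-1)\cdot\tfrac{k}{n_{in}-k}$ makes every core entry a constant-size ratio and shifts the only logarithmic-depth step to the final boundary reduction. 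Both variants meet the stated $O(\log(n_{in}))$ time and $O(n_{in}^{3})$ processor budget.
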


The definition of the Marginal SHAP Coalitional Tensor is:
\begin{equation} \label{app:eq:coalitiontenso}
   \forall (i,\tilde{s}_{1}, \ldots, \tilde{s}_{n_{in}}) \in [n_{in}] \times [2]^{\otimes n_{in}}:~~\Tilde{\mathcal{W}}_{i,\tilde{s}_{1},\ldots, s_{n_{in}}} \myeq \begin{cases}
      - W(\tilde{s} - 1_{n_{in}}) & \text{if}~~ \tilde{s}_{i} = 1 \\
W(\tilde{s} -1_{n_{in}}) & \text{otherwise}
    \end{cases} 
    \end{equation}

\textbf{Remark.} \textit{Note the existence of a coordinate shift from the boolean representation that lies in $\{0,1\}$ to TN representation which lies in $[2]$, hence the subtraction by the vector $1_{n_{in}}$. To distinguish between both representations, we use the notation $\tilde{s}$ in the sequel to refer to the TN representation of boolean values.}

In the remainder of this section, we shall provide the exact construction of the tensor cores $ \{\mathcal{G}_{l}\}_{l \in [n_{in}]}$ implicitly mentioned in the lemma statement, followed by a proof of its correctness and its associated complexity. Before that, we introduce some notation.

\textbf{Notation.} For a vector $\tilde{s} \in [2]^{n}$, we denote $|\tilde{s}|$ the number of its elements equal to $1$, and $\tilde{s}_{1:k} \in \mathbb{R}^{k}$ the vector composed of its first $k$ elements. We define the function $\Delta$ that takes as input a vector $\tilde{s} \in [2]^{n}$ and an integer $i \in \mathbb{N}$, and outputs $1$ if $[(\tilde{s}_{i} = 1) \lor (n \leq i-1)]$, $-1$ otherwise. The function $\Delta(.,.)$ can be alternatively defined using the recursive formula:

$$\forall \sigma \in [2]: \Delta(\tilde{s} \sigma,i) \myeq \begin{cases}
    - \Delta(\tilde{s},i) & \text{if}~~ |\tilde{s}| = i-1 \land \sigma = 1 \\
    \Delta(\tilde{s},i) & \text{otherwise}
\end{cases}$$
where $\Delta(\epsilon,i) = 1$ ($\epsilon$ refers to the trivial vector of dimension $0$) 

\paragraph{High-Level Steps of the Construction.} The general idea of the construction is to build a TT that simulates the computation of the modified Weighted Coalitional Tensor by processing its input $(i,\tilde{s}_{1}, \tilde{s}_{2}, \ldots, \tilde{s}_{n_{in}})$ from left to right in a similar fashion to state machines. The introduction of each new element $\tilde{s}_{j}$ in the sequence updates the state of the computation by adjusting the weight corresponding to the size of the coalition of features seen so far and retaining the relevant information required for the processing of subsequent elements. 

Intuitively, two pieces of information need to be kept throughout the computation performed by the state machine: 

\begin{itemize}
    \item \textit{The feature to explain $i$:} This information is provided at the beginning of the computation by the first leg of the tensor $\tilde{\mathcal{W}}$ and will be stored throughout the computation, using a simple copy operation. This information is needed to flip the sign when the position $i$ in the sequence is reached (Equation \ref{app:eq:coalitiontenso}).
    \item \textit{The size of the coalition $k$.} Elements of the modified Weighted Coalitional Tensor depend on the size of the coalition. This information needs to be updated throughout the processing of the input sequence. Assume at step $i \in [n_{in}]$, the size of the coalition formed from the first $i$ features is equal to $k$: 
    \begin{itemize}
    \item If $\tilde{s}_{i} = 1$ (feature $i$ is not part of the coalition), we transition to the state $k$ to the next step, and we normalize the weight of the coalition accordingly,
    \item If $\tilde{s}_{i} = 2$ (feature $i$ is part of the coalition), we transition to the state $k+1$, and normalize the weight of the coalition accordingly.
    \end{itemize}
\end{itemize}

Based on this description, we propose a TT representation of $\Tilde{W}$ that maintains an internal state encoded as a matrix $G^{(j)} \in \mathbb{R}^{n_{in} \times n_{in}}$. The semantics of the element $G^{(j)}_{i,k}$ in the matrix reflect the description described above. 

\paragraph{Construction of the core tensors $\{\mathcal{G}^{(i)}\}_{i \in [n_{in}]}$.} For simplicity, we assume that the core tensors are of order $5$, so that the internal state encoding takes a matrix format and holds the semantics outlined in the above description. The transition to core tensors of order $3$ as in the TT format can be performed through a suitable reshape operation. The parameterization of the core tensors $\{ \mathcal{G}^{(i)}\}_{i \in [n_{in}]}$ is given as follows: 
\begin{itemize}
\item The core tensor $\mathcal{G}^{(1)}$: 
\begin{enumerate}
    \item If $\tilde{s_{1}} = 1$ (feature 1 is not part of the coalition):
    $$\mathcal{G}^{(1)}_{i,1,i',k'} = \begin{cases}
        (-1)^{\llbracket i = 1 \rrbracket} \cdot (n_{in} - 1)! & \text{if}~~~i'=i \land k' = 1 \\
        0 & \text{otherwise}
    \end{cases}$$
    \item If $\tilde{s}_{j} = 2$ (feature 1 is part of the coalition):
 
  $$   \mathcal{G}^{(1)}_{i,2,i',k'} = \begin{cases}
               (n_{in} - 2)! & \text{if}~~i'=i \land k'=2 \\
               0 & \text{otherwise}
         \end{cases}
 $$
\end{enumerate}

\item The core tensors $\{\mathcal{G}^{(j)}\}_{j \in [n_{in}] \setminus \{1\}} $:
\begin{enumerate}
 \item If $\tilde{s}_{j} = 1$ (feature $j$ is not part of the coalition):
\begin{equation} \label{app:eq:Gj1}
\mathcal{G}^{(j)}_{i,k,1,i',k'} = \begin{cases}
\frac{(-1)^{\llbracket j=i \rrbracket}}{j} & \text{if}~~ i' = i \land k' = k \\
0 & \text{otherwise}
\end{cases}
\end{equation}
    
 \item If $\tilde{s}_{j} = 2$ (feature $j$ is part of the coalition): 
 \begin{equation} \label{app:eq:Gj2}
\mathcal{G}^{(j)}_{i,k,2,i',k'} = \begin{cases}
\frac{(k+1)}{j \cdot (n_{in} - k - 1)} & \text{if}~~ i' = i \land k' = k+1 \\
0 & \text{otherwise}
\end{cases}
\end{equation}
\end{enumerate}
\end{itemize}

 The rightmost core tensor  of the TT is obtained by 
  contracting $\mathcal{G}^{(n_{in})}$ with the ones matrix $1_{n_{in} \times n_{in}}$ (The matrix whose all elements are equal to $1$) at the pair of leg indices $(4,1)$ and $(5,2)$.  

\paragraph{Correctness.} The proposed TT construction is designed in such a way that it captures the dynamics outlined in the description above. The state machine maintains a sparse matrix equal to zero everywhere except for the element $(i,k)$ where $i$ corresponds to the feature to explain, and $k$ is the number of already processed features in the coalition. The following proposition formalizes this fact:  
\begin{proposition} \label{app:prop:induction}
    Fix an integer $n_{in} \geq 1$. For any $j \in [n_{in}]$,$i \in [n_{in}]$, $(l,k) \in [n_{in}]^{2}$ and $s= (s_{1}, \cdots, s_{j}) \in [2]^{j}$, we have: 
    \begin{equation} \label{app:eq:induction}
     \left( \mathcal{G}^{(1)} \times_{(3,1)} \mathcal{G}^{(2)} \times \cdots \times_{(3,1)} \mathcal{G}^{(j)} \right)_{i,\tilde{s}_{1}, \dots, \tilde{s}_{j},:,:}  = 
\begin{bmatrix}
  0 & \cdots & 0 & \cdots & 0 \\
  \vdots & \ddots & \vdots & & \vdots \\
  0 & \cdots & G_{i,|\tilde{s}_{1:j}|} & \cdots & 0 \\
  \vdots & & \vdots & \ddots & \vdots \\
  0 & \cdots & 0 & \cdots & 0 \\
\end{bmatrix}
 \in \mathbb{R}^{n_{in} \times n_{in}}
\end{equation}
where:
     $$ G_{i,|\tilde{s}_{1:j}|} = 
    \begin{cases}
    (-1)^{\Delta(\tilde{s}_{1:j},i)} \cdot \frac{|\tilde{s}_{1:j}|! (n_{in} - |\tilde{s}_{1:j}| - 1)!}{j!} & \text{if}~~ l=i \land |\tilde{s}_{1:j}| = k \\
    0 & \text{otherwise}
    \end{cases}
    $$
\end{proposition}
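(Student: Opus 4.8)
The plan is to prove Equation~\eqref{app:eq:induction} by induction on the prefix length $j$, viewing the partial chain $\mathcal{G}^{(1)} \times_{(3,1)} \cdots \times_{(3,1)} \mathcal{G}^{(j)}$ as a state machine whose ``state'' is read off its two free legs as an $n_{in} \times n_{in}$ matrix. The induction hypothesis is precisely the displayed claim: for a fixed feature-to-explain $i$ and a fixed prefix $\tilde{s}_{1:j}$, this matrix vanishes except for a single entry in row $i$ and in the column indexing the number of coalition members seen so far, $|\tilde{s}_{1:j}|$, and the value of that entry is the signed factorial ratio $\Delta(\tilde{s}_{1:j},i)\cdot \tfrac{|\tilde{s}_{1:j}|!\,(n_{in}-|\tilde{s}_{1:j}|-1)!}{j!}$. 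The combinatorial skeleton of the argument is that each core keeps the row (the feature $i$) fixed by a copy action, advances the column only when the new feature joins the coalition, and contributes one scalar factor.

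First I would dispatch the base case $j=1$ by simply reading off the two branches of $\mathcal{G}^{(1)}$. When $\tilde{s}_1=1$ (feature $1$ not in the coalition) the single entry sits in row $i$ at the column for coalition size $0$, with value $(-1)^{\llbracket i=1\rrbracket}(n_{in}-1)! = \pm\tfrac{0!\,(n_{in}-1)!}{1!}$; when $\tilde{s}_1=2$ the entry sits in the next column with value $(n_{in}-2)!=\tfrac{1!\,(n_{in}-2)!}{1!}$ and positive sign. Both match the closed form, and the sign is exactly $\Delta(\tilde{s}_{1:1},i)$.

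For the inductive step I would invoke associativity of the multi-leg contraction to rewrite the length-$(j{+}1)$ chain as the length-$j$ chain contracted over the bond leg $(3,1)$ with $\mathcal{G}^{(j+1)}$, so the hypothesis applies to the left factor. Because that factor is a single-entry matrix and $\mathcal{G}^{(j+1)}$ is sparse --- diagonal in the retained feature index $i$ and sending the column $k$ either to $k$ (if $\tilde{s}_{j+1}=1$) or to $k+1$ (if $\tilde{s}_{j+1}=2$), per Equations~\eqref{app:eq:Gj1} and~\eqref{app:eq:Gj2} --- the contraction is again a single-entry matrix. I would then split on $\tilde{s}_{j+1}$: if $\tilde{s}_{j+1}=1$ the column is unchanged and the value is scaled by $\tfrac{(-1)^{\llbracket j+1=i\rrbracket}}{j+1}$, whereas if $\tilde{s}_{j+1}=2$ the column advances and the value is scaled by $\tfrac{k+1}{(j+1)(n_{in}-k-1)}$. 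A direct check confirms that the column always equals the updated coalition size $|\tilde{s}_{1:j+1}|$, that both multipliers telescope the magnitude from $\tfrac{k!\,(n_{in}-k-1)!}{j!}$ to $\tfrac{|\tilde{s}_{1:j+1}|!\,(n_{in}-|\tilde{s}_{1:j+1}|-1)!}{(j+1)!}$, and that the sign flips exactly once, precisely at the step $j{+}1=i$ and only when $\tilde{s}_i=1$, which is the recursive rule defining $\Delta(\cdot,i)$.

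The main obstacle is the scalar bookkeeping rather than the index combinatorics. One must verify the factorial recurrence exactly: that the normalization factor $\tfrac{k+1}{(j+1)(n_{in}-k-1)}$ combines with $k!$ and $(n_{in}-k-1)!$ to produce $(k+1)!$ and $(n_{in}-k-2)!$, so the ratio reassembles into the stated closed form with denominator $(j+1)!$; and that the accumulated sign, which flips once and only once (at position $i$, and only for $\tilde{s}_i=1$), agrees with $\Delta(\tilde{s}_{1:j},i)$ throughout the prefix rather than merely at the final layer. Keeping the two recurrences synchronized with the closed-form expression for every prefix is the technical heart of the argument.
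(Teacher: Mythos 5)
Your proposal is correct and follows essentially the same route as the paper's proof: induction on the prefix length $j$, with the base case read off directly from the construction of $\mathcal{G}^{(1)}$, and the inductive step collapsing the bond contraction with $\mathcal{G}^{(j+1)}$ to a single term via sparsity, then splitting on $\tilde{s}_{j+1} \in \{1,2\}$ to verify the factorial telescoping $\frac{k!\,(n_{in}-k-1)!}{j!} \mapsto \frac{|\tilde{s}_{1:j+1}|!\,(n_{in}-|\tilde{s}_{1:j+1}|-1)!}{(j+1)!}$ and the single sign flip at position $i$ when $\tilde{s}_i = 1$. No substantive difference from the paper's argument, and your scalar bookkeeping matches its two case computations exactly.
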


\begin{proof}
  Fix $n_{in} \geq 1$. The proof proceeds by induction on $j$. 

  \underline{\textit{Base Case.}} Assume $j=1$. Let $i \in [n_{in}]$, $(l,k) \in [n_{in}]^{2}$ and $\tilde{s}_{1} \in [2]$. Equation \eqref{app:eq:induction} holds by construction of $\mathcal{G}^{(1)}$.

  \underline{\textit{General Case.}} Assume Equation \eqref{app:eq:induction} holds for $j \in [n_{in}-1]$, we need to show that it's also valid for $j+1$.

  Let $(l,k) \in [n_{in}]^{2}$ and $(\tilde{s}_{1}, \ldots, \tilde{s}_{j+1}) \in [2]^{j+1}$. For better readability, we adopt the following notation in the sequel:
   \begin{equation*}
       \mathcal{H}^{(j)} \myeq \mathcal{G}^{(1)} \times_{(3,1)} \mathcal{G}^{(2)} \times \cdots \times_{(3,1)} \mathcal{G}^{(j)} 
   \end{equation*}
  
By the induction assumption, we have:  
  \begin{align*}
\mathcal{H}^{(j+1)}_{i,\tilde{s}_{1}, \dots, \tilde{s}_{j+1}, l,k} &= \sum\limits_{l',k'} \mathcal{H}^{(j)}_{i,\tilde{s}_{1}, \dots, \tilde{s}_{j}, l',k'} \cdot \mathcal{G}^{(j)}_{l',k',\tilde{s}_{j+1}, l,k} 
  \\ 
      &= (-1)^{\Delta(\tilde{s}_{1:j}, i)} \cdot \frac{|\tilde{s}_{i:j}|! \cdot (n_{in}-|\tilde{s}_{1:j}|- 1)!}{j!} \cdot \mathcal{G}^{(j)}_{i, |\tilde{s}_{1:j}|, \tilde{s}_{j+1},l,k} 
  \end{align*}
  where the second equality is obtained by the induction assumption. 

  We first handle the case when $\mathcal{H}^{(j+1)}_{\tilde{s}_{1}, \ldots, \tilde{s}_{j+1},l,k} = 0$. 

 By construction of $\mathcal{G}^{(j+1)}$, we have:
 \begin{align*}
 (l \neq i) \lor (k \neq |\tilde{s}_{1:j+1}|)& \implies \mathcal{G}^{(j+1)}_{i, |\tilde{s}_{1:j}|, 1,l,k} = 0  \\ & \implies \mathcal{H}^{(j+1)}_{i,\tilde{s}_{1}, \ldots,\tilde{s}_{j}, 1,l,k} = 0 
 \end{align*}
 
Next, we analyze the case: $l=i \land |s_{1:j+1}| = k$.  We split into two cases:

  \textit{Case $s_{j+1} = 1$ (The feature $j+1$ is not part of the coalition):} In this case, note that: $|s_{1:j+1}| = |s_{1:j}|$. 

We have:
  \begin{align*}
\mathcal{H}^{(j+1)}_{i,s_{1}, \dots, s_{j+1},i, |s_{1:j+1}|} &=  (-1)^{\Delta(s_{1:j}, i)} \cdot \frac{|s_{i:j}|! \cdot (n_{in}-|s_{1:j}|- 1)!}{j!} \cdot \frac{(-1)^{[i = j+1]}}{j} \\
&= (-1)^{\Delta(s_{1:j+1,i})} \cdot \frac{|s_{i:j+1}|! \cdot (n_{in}-|s_{1:j+1}|- 1)!}{(j+1)!}   \end{align*}

  \textit{Case $s_{j+1} = 2$ (The feature $j+1$ is not part of the coalition).} In this case, note that: $|s_{1:j+1}| = |s_{1:j}| + 1$, and $\Delta(s_{1:j+1},i) = \Delta (s_{1:j},i)$. 
   
  We have:
    \begin{align*}
  \mathcal{H}^{(j+1)}_{i,s_{1}, \ldots, s_{j+1},i,|s_{1:j+1}|} &= (-1)^{\Delta(s_{1:j+1}, i)} \cdot \frac{|s_{1:j}|! \cdot (n_{in}-|s_{1:j+1}|)!}{j!} \cdot \frac{|s_{1:j+1}| }{(j+1) \cdot (n_{in} - |s_{1:j+1}|)}  \\
  &= (-1)^{\Delta(s_{1:j+1}, i)} \cdot \frac{|s_{1:j+1}|! \cdot (n_{in}-|s_{1:j+1}|- 1)!}{(j+1)!} 
    \end{align*}
\end{proof}

\paragraph{Complexity.} The core tensors $\{ \mathcal{G}^{(j)}\}_{j \in [n_{in}]}$ are extremely sparse. By leveraging this sparsity property, each core tensor can be constructed in $\mathcal{O}(1)$ time using $\mathcal{O}(n_{in}^{2})$ parallel processors. By parallelizing the construction of all core tensors, this leads to a number of $\mathcal{O}(n_{in}^{3})$ parallel processors. Yet, it should be observed that the construction of the leftmost tensor $\mathcal{G}^{(1)}$ requires the computation of the factorial terms $(n_{in}-1)!$ and $(n_{in}-2)!$. This operation can be performed using a parallel scan strategy using $\mathcal{O}(n_{in})$ parallel processors with $\mathcal{O}(\log(n_{in}))$ running time. 

To summarize, the total computation of the TT corresponding to the modified Weighted Coalitional Tensor requires $\mathcal{O}(n_{in}^{3})$ parallel processors and runs in $\mathcal{O}(\log(n_{in}))$ time.

\subsection{The construction of the Marginal SHAP Value Tensor $\mathcal{V}^{(M,P)}$ (Lemma  \ref{lemma:valuefunctiontn})}

The objective of this section is to prove the  Lemma \ref{lemma:valuefunctiontn}. Recall the statement of this lemma:
\begin{lemma*} 
Let $\mathcal{T}^{M}$ and $\mathcal{T}^{P}$ be two TNs implementing the model $M$ and the probability distribution $P$, respectively. Then, the marginal value tensor $\mathcal{V}^{(M,P)}$ can be computed as: $$[\mathcal{M}^{(1)} \circ \ldots\! \circ \mathcal{M}^{(n_{in})} \Bigr] \times_{S_{1}} \mathcal{T}^{M} \times_{S_{2}} \mathcal{T}^{P}$$
where:
\begin{itemize}
\item $S_{1}$ and $S_{2}$ are instantiated such that for all $k \in \{1,2\}$ it holds that: $S_{k} \myeq \{(5 -k) \cdot i,i): i \in [n_{in}] \}$,
\item For any $i \in [n_{in}]$, $(\sigma, s, \sigma', \sigma'') \in [N_{i}] \times [2] \times [N_{i}]^{\otimes 2}$, we have:
\begin{equation} \label{app:eq:mi}
\mathcal{M}^{(i)}_{\sigma,s, \sigma', \sigma''} = \begin{cases}
    1 & \text{if}~~(\sigma'' = \sigma \land s=1) \lor (\sigma'' = \sigma' \land s=2) \\
    0 & \text{otherwise}
\end{cases} 
\end{equation}
\end{itemize}
\end{lemma*}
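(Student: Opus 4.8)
The plan is to prove the identity \emph{entrywise}. Every free leg of the right-hand side --- the instance leg and the binary coalition leg carried by each router, together with the output leg of $\mathcal{T}^{M}$ --- is also an index of $\mathcal{V}^{(M,P)}$, so it suffices to fix an arbitrary instance $x = (x_1,\dots,x_{n_{in}}) \in \mathbb{D}$ and coalition $s \in [2]^{\otimes n_{in}}$ and show that the corresponding entry of the contraction equals $V_{M}(x,s;P)$. First I would expand the contraction explicitly. In each router $\mathcal{M}^{(i)}_{\sigma,s,\sigma',\sigma''}$ the first $N_i$-valued argument is the distribution-source leg (wired by $S_2$ into the $i$-th leg of $\mathcal{T}^{P}$), the last argument is the routed leg (wired by $S_1$ into the $i$-th input leg of $\mathcal{T}^{M}$), while the remaining $N_i$-valued argument, fixed to $x_i$, and the binary argument, fixed to $s_i$, stay free. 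This yields
\[
\mathcal{V}^{(M,P)}_{x,s,:} \;=\; \sum_{\sigma_1,\dots,\sigma_{n_{in}}} \ \sum_{\sigma_1'',\dots,\sigma_{n_{in}}''} \Big( \prod_{i=1}^{n_{in}} \mathcal{M}^{(i)}_{\sigma_i,\, s_i,\, x_i,\, \sigma_i''} \Big)\, \mathcal{T}^{M}_{\sigma_1'',\dots,\sigma_{n_{in}}'',:}\ \mathcal{T}^{P}_{\sigma_1,\dots,\sigma_{n_{in}}} .
\]

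Next I would evaluate each router factor by its two defining cases, tracking the coordinate shift (the value $s_i \in [2]$ encodes the Boolean membership $s_i - 1$, so $s_i = 2$ means feature $i$ is in the coalition). By \eqref{app:eq:mi}, when $s_i = 2$ the factor forces $\sigma_i'' = x_i$ independently of $\sigma_i$, pinning the $i$-th model input to the instance value while leaving $\sigma_i$ free to be summed against $\mathcal{T}^{P}$; when $s_i = 1$ it forces $\sigma_i'' = \sigma_i$, tying the $i$-th model input to the distribution sample. Substituting these constraints collapses the sum over $\{\sigma_i''\}$, and, writing $S = \{i : s_i = 2\}$, leaves the model evaluated at $x_i$ on coordinates $i \in S$ and at $\sigma_i$ on coordinates $i \notin S$, so the entry becomes $\sum_{\sigma_1,\dots,\sigma_{n_{in}}} M\big(x_S, \sigma_{\bar S}\big)\, \mathcal{T}^{P}_{\sigma_1,\dots,\sigma_{n_{in}}}$.

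The final and most delicate step is the marginalization. Grouping the free sums over the in-coalition distribution coordinates, the joint tensor $\mathcal{T}^{P}$ collapses to its marginal over $\bar S$, i.e. $\sum_{x'_S} P(x'_S, x'_{\bar S}) = P_{\bar S}(x'_{\bar S})$, and the remaining sum is exactly $\sum_{x'_{\bar S}} P_{\bar S}(x'_{\bar S})\, M(x_S, x'_{\bar S}) = \mathbb{E}_{x' \sim P}\big[M(x_S, x'_{\bar S})\big] = V_{M}(x,S;P)$, matching the definition of $\mathcal{V}^{(M,P)}_{x,s,:}$. I expect this to be the main obstacle: one must check that the free summation over the in-coalition distribution legs reproduces precisely the $\bar S$-marginal required by the interventional value function, and keep the $[2]$-versus-Boolean shift consistent throughout so that in-coalition and out-of-coalition routing are not inadvertently swapped.

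Finally, for the complexity claim I would note that $\mathcal{M}^{(i)}$ is a $0/1$ tensor with exactly one nonzero entry per triple $(\sigma_i, s_i, \sigma_i')$ --- the routed value $\sigma_i''$ being determined by the case split --- hence $O(N_i^2)$ nonzeros, each computable independently from a constant-time local rule. Therefore $\mathcal{M}^{(i)}$ is constructible in $O(1)$ parallel time using $O(N_i^2)$ processors, completing the statement.
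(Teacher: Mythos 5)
Your proof is correct and is essentially the paper's own argument read in the opposite direction: the paper introduces the \texttt{do} operator, observes that $\mathcal{M}^{(1)} \circ \cdots \circ \mathcal{M}^{(n_{in})}$ is precisely the indicator that each routed leg equals $\texttt{do}(\cdot, s_i, \cdot)$, and then runs the same chain of equalities you produce --- expectation over $x' \sim \mathcal{T}^{P}$, insertion of the indicator sum over the routed legs, and recognition of the contractions $\times_{S_1}$ and $\times_{S_2}$ --- whereas you expand the contraction first and collapse it by the case analysis on $s_i$, which is the same entrywise verification. The one point worth noting is cosmetic but real: you wire the routers so that $s_i = 2$ pins the model input to $x_i$ and $s_i = 1$ ties it to the distribution sample, which is the reading that keeps the coalition convention consistent with Proposition~\ref{prop:marginal_shap_tensor}; the paper's own write-up is loose on this argument ordering (its first equality uses $\texttt{do}(x'_i, s_i, x_i)$ while its indicator identity uses $\texttt{do}(x_i, s_i, x'_i)$), so your explicit attention to the $[2]$-versus-Boolean shift is justified rather than redundant.
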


Before providing the proof of Lemma \ref{lemma:valuefunctiontn}, we first introduce an operator which will be useful for the proof: 

\paragraph{The \texttt{do} operator \cite{marzouk2025computationaltractabilitymanyshapley}.} Let $S$ be a finite set. The \texttt{do} operator takes as input a triplet $(\sigma,s,\sigma') \in S \times [2] \times S$  and returns $\sigma$ if $s=1$, $\sigma'$ otherwise. The collection of Tensors $\mathcal{M}^{(i)}$ parametrized as in Equation \eqref{app:eq:mi} can be seen as \textit{tensorized} representations of the \texttt{do} operator   \cite{marzouk2025computationaltractabilitymanyshapley}. When $S = [N_{i}]$, we have, for any $(\sigma, s, \sigma', \sigma'') \in [N_{i}] \times [2] \times [N_{i}] \times [N_{i}]$, $\mathcal{M}^{(i)}_{\sigma,\sigma', \sigma''} = 1$ if and only if $\sigma'' = \emph{\texttt{do}}(\sigma, s, \sigma')$. By induction,  for any $(x,s,x',x'') \in \mathbb{D} \times [2] \times \mathbb{D}^{\otimes 2}$, we have: 
\begin{equation} \label{app:eq:doM}
\left(\mathcal{M}^{(1)} \circ \ldots \circ \mathcal{M}^{(n_{in})}\right)_{x,s,x',x''} = 1 \iff \forall i \in [n_{in}]:~~x''_{i} = \emph{\texttt{do}}(x_{i}, s_{i}, x'_{i}, x''_{i}) 
\end{equation}

Now, we are ready to provide the proof of Lemma \ref{lemma:valuefunctiontn}:

\begin{proof}\textit{(Lemma \ref{lemma:valuefunctiontn})} Let $(x,s,y) \in \mathbb{D} \times [2]^{\otimes n_{in}} \times [n_{out}]$, we have:
\begin{align*}
\mathcal{V}^{(M,P)}_{x,s,y} &= \mathbb{E}_{x' \sim \mathcal{T}^{P}} \left[ \mathcal{T}^{M}_{\texttt{do}(x'_{1}, s_{1}, x_{1}), \ldots, \texttt{do}(x'_{n_{in}}, s_{n_{in}}, x_{n_{in}}),y} \right] \\
  &= \mathbb{E}_{x' \sim \mathcal{T}^{P}} \left[ \sum\limits_{x'' \in \mathbb{D}} \left( \mathcal{M}^{(1)} \circ \ldots \circ \mathcal{M}^{(n_{in})}  \right)_{x,s,x',x''} \cdot \mathcal{T}^{M}_{x'',y} \right] \\
  &= \mathbb{E}_{x' \sim \mathcal{T}^{P}} \left[ \left(\mathcal{M}^{(1)} \circ \ldots \circ \mathcal{M}^{n_{in}}\right) \times_{S_{1}} \mathcal{T}^{M}   \right]_{x,s,x',y} \\
  &= \sum\limits_{x' \in \mathbb{D}} \mathcal{T}^{P}_{x'} \cdot \left[ \left(\mathcal{M}^{(1)} \circ \ldots \circ \mathcal{M}^{n_{in}}\right) \times_{S_{1}} \mathcal{T}^{M}   \right]_{x,s,x',y} \\
  &= \left[ \left(\mathcal{M}^{(1)} \circ \ldots \circ \mathcal{M}^{n_{in}}\right) \times_{S_{1}} \mathcal{T}^{M} \times_{S_{2}} \mathcal{T}^{P}   \right]_{x,s,y}
\end{align*}
\end{proof}
\section{Computing Marginal SHAP for Tensor Trains} \label{app:sec:tensortrain}

This section contains the proof of the \#P-Hardness of computing SHAP for General TNs (Proposition \ref{prop:sharphard}), as well as a proof of the fact that SHAP for TTs is in \textsc{NC}.

\subsection{SHAP for general TNs is \#P-Hard} \label{app:subsec:sharphard}

Proposition \ref{prop:sharphard} states the following:

\begin{proposition*}
Computing Marginal SHAP scores for general TNs is \#\textsc{P-Hard}.
\end{proposition*}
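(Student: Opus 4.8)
The plan is to establish \#P-hardness via a polynomial-time reduction from \#CNF-SAT, the canonical \#P-complete problem of counting the satisfying assignments of a CNF formula $\phi$ over $n$ Boolean variables and $m$ clauses. The reduction rests on two pillars already available to us: (i) a compact encoding of an arbitrary CNF formula as a \emph{general} TN $\mathcal{T}^{M}$ that computes the Boolean function $f_{\phi}\colon \{0,1\}^{n} \to \{0,1\}$, and (ii) the known fact that model counting is polynomially reducible to SHAP computation~\cite{kara24}, which lets us read off $\#\phi$ from the SHAP output.

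First I would construct the TN for $\phi$. Following the tools developed in Appendix~\ref{app:sec:background}, each clause $C_{j}$ is encoded as a Layered Deterministic Finite Automaton $\mathcal{A}^{(j)}$ over the alphabet $\{0,1\}$ with $n$ layers that scans a full variable assignment and accepts exactly when $C_{j}$ is satisfied; each such LDFA has only a constant number of states per layer and admits an exact TT representation. The conjunction $\phi = \bigwedge_{j} C_{j}$ then corresponds to the product $\mathcal{A}^{(1)} \times \cdots \times \mathcal{A}^{(m)}$ of these automata. Crucially, rather than realising this product as a single TT (whose bond dimension would grow exponentially in $m$), I would invoke Lemma~\ref{app:lemma:productldfatn}, whose copy-tensor--based construction yields a polynomial-size general TN $\mathcal{T}^{M}$ computing $f_{\phi}$ in a two-dimensional grid layout. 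This is precisely the step that exploits \emph{general} (as opposed to tensor-train) structure, and it is why the hardness lives at the level of general TNs.

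Next I would pair $\mathcal{T}^{M}$ with the uniform distribution over $\{0,1\}^{n}$, which is trivially expressible as a product TN $\mathcal{T}^{P}$ whose cores are the vectors $(\tfrac12,\tfrac12)$. For an arbitrary fixed input $x$, the interventional value function on the empty coalition satisfies $V_{\mathcal{T}^{M}}(x,\emptyset; \mathcal{T}^{P}) = \mathbb{E}_{x' \sim \mathrm{Unif}}[f_{\phi}(x')] = \#\phi / 2^{n}$. By the efficiency property of Shapley values, $\sum_{i=1}^{n} \phi_{i}(\mathcal{T}^{M}, x, \mathcal{T}^{P}) = f_{\phi}(x) - V_{\mathcal{T}^{M}}(x,\emptyset;\mathcal{T}^{P})$, so once the SHAP matrix $\Phi(\mathcal{T}^{M}, x, \mathcal{T}^{P})$ is available I can recover $\#\phi = 2^{n}\bigl(f_{\phi}(x) - \sum_{i} \phi_{i}\bigr)$, with $f_{\phi}(x)$ computed directly. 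A polynomial-time SHAP oracle for general TNs would therefore solve \#CNF-SAT in polynomial time. Alternatively, I may invoke the reduction of~\cite{kara24} in black-box form in place of this explicit efficiency computation.

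The main obstacle is the TN encoding of the CNF: the naive tensor-train product of the clause automata incurs an exponential blow-up in the number of clauses, so the whole reduction hinges on the polynomial-size copy-tensor construction of Lemma~\ref{app:lemma:productldfatn}. Verifying that this grid-structured TN faithfully computes $f_{\phi}$ and has size polynomial in $n$ and $m$ --- rather than merely bounding the contraction cost --- is the crux of the argument; the remaining bookkeeping (the uniform distribution as a product TN, and the efficiency identity) is routine.
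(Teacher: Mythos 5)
Your proposal is correct and follows essentially the same route as the paper's proof: a reduction from \#CNF-SAT in which each clause is encoded as an LDFA, the clause automata are combined into a polynomial-size general TN via the copy-tensor product construction of Lemma~\ref{app:lemma:productldfatn}, the uniform distribution is realized as a rank-one product TN, and \#P-hardness is transferred through the model-counting-to-SHAP reduction. The only difference is presentational: you spell out the efficiency-axiom identity $\#\phi = 2^{n}\bigl(f_{\phi}(x) - \sum_{i}\phi_{i}\bigr)$ explicitly, whereas the paper invokes the counting-to-SHAP lemma of~\citep{VanDenBroeck2021, arenas2023complexity} in black-box form (noting that marginal and conditional SHAP coincide under independent distributions), which you also acknowledge as an alternative.
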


\emph{Proof.} The proof builds on known connections between model counting and the computation of SHAP values. Recall that, given a class of classifiers $\mathcal{C}$, model counting refers to the problem of determining the number of inputs classified as positive by a classifier in $\mathcal{C}$. The following classical lemma established by both~\citep{VanDenBroeck2021} and~\citep{arenas2023complexity}:

\begin{lemma} \label{app:lemma:shapmodelcounting}
Given a class of classifiers $\mathcal{C}$, the model counting problem for $\mathcal{C}$ is polynomial-time reducible to the problem of computing Marginal SHAP values for the class $\mathcal{C}$ under the uniform distribution.
\end{lemma}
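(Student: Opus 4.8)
The plan is to exploit the \emph{efficiency} (local accuracy) property of Shapley values, which lets the average value of the classifier under the uniform distribution --- and hence the model count --- be read off from a single evaluation of the SHAP attribution vector. The reduction will make one oracle call to the SHAP-computation problem followed by cheap arithmetic post-processing, which suffices for a polynomial-time (Turing) reduction.

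First I would fix an arbitrary classifier $M \in \mathcal{C}$ over $\mathbb{D} = [N_1] \times \cdots \times [N_{n_{in}}]$, viewed as a Boolean map $M : \mathbb{D} \to \{0,1\}$ with $1$ denoting the positive class, take $P$ to be the uniform distribution over $\mathbb{D}$, and fix an arbitrary reference instance, say $x = (1, \ldots, 1)$. The central observation is that summing the attribution formula in Equation~\eqref{eq:attribution} over all features produces a telescoping sum collapsing to $\sum_{i=1}^{n_{in}} \phi_i(M, x, P) = V_M(x, [n_{in}]; P) - V_M(x, \emptyset; P)$. By the definition of the marginal value function, the first term is $V_M(x, [n_{in}]; P) = M(x)$ (every coordinate is taken from $x$), while the second is $V_M(x, \emptyset; P) = \mathbb{E}_{x' \sim P}[M(x')]$; under the uniform $P$ this equals $\tfrac{1}{|\mathbb{D}|}\sum_{x' \in \mathbb{D}} M(x')$, i.e.\ the number of positive instances divided by $|\mathbb{D}|$.

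The reduction then follows immediately: query the SHAP oracle once on input $(M, x, P)$ to obtain the whole vector $(\phi_1, \ldots, \phi_{n_{in}})$, evaluate $M(x)$ directly, and output $|\mathbb{D}| \cdot \bigl( M(x) - \sum_{i=1}^{n_{in}} \phi_i(M, x, P) \bigr)$ as the model count. Since $|\mathbb{D}| = \prod_i N_i$ is polynomial-size to describe, this uses a single oracle call, one model evaluation, and $O(n_{in})$ arithmetic operations, all in polynomial time.

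The one delicate point --- and the step I would argue most carefully --- is \emph{exactness}: the output must be recovered as an exact integer rather than an approximation. This is guaranteed by the standard assumption in this complexity-theoretic setting that the SHAP oracle returns attributions as exact rationals, in which case the closed-form expression above evaluates to the exact integer count. I would also remark that uniformity of $P$ is used only to convert $\mathbb{E}[M(x')]$ into a count; the efficiency identity itself holds for any value function, so no structural assumption on $\mathcal{C}$ is needed beyond polynomial-time evaluability of $M(x)$.
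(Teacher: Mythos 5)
Your proof is correct and is essentially the same argument that underlies the paper's own treatment, which establishes Lemma~\ref{app:lemma:shapmodelcounting} by citing the classical efficiency-axiom reduction of~\citep{VanDenBroeck2021, arenas2023complexity}: summing Equation~\eqref{eq:attribution} over all features telescopes to $V_{M}(x,[n_{in}];P) - V_{M}(x,\emptyset;P)$, the first term is $M(x)$, and under the uniform distribution the second term is the model count normalized by $|\mathbb{D}|$, so one oracle call plus $O(n_{in})$ exact-rational arithmetic recovers the count. If anything, your direct use of the marginal value function is slightly more self-contained than the paper's route, which imports the result for conditional SHAP and then observes that the conditional and marginal formulations coincide under feature independence.
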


We note that although the original lemma addresses a different value function, specifically the Conditional SHAP variant based on conditional expectations --- the Marginal SHAP and Conditional SHAP formulations coincide under the feature independence assumption~\citep{Sundararajan2019}. Consequently, the hardness result established in that setting also applies to the general Marginal SHAP case.

Now, to prove Proposition~\ref{prop:sharphard} using Lemma~\ref{app:lemma:shapmodelcounting}, we use the class of CNF formulas as a proxy. It is a classical result in complexity theory that the model counting problem for CNFs is \#P-Complete \citep{arora2009computational}. Our goal is to reduce this problem to the model counting problem for TNs:

\begin{lemma} \label{app:lemma:cnftn}
There exists a polynomial-time algorithm that takes as input an arbitrary CNF formula $\Phi$ and produces a TN that computes an equivalent Boolean function.
\end{lemma}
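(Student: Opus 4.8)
The plan is to build the TN clause by clause, using the Layered Deterministic Finite Automata (LDFA) machinery of Appendix~\ref{app:subsec:ldfa} as an intermediate representation, and then to invoke the fact that the product operation over LDFAs implements logical conjunction. First I would write $\Phi = C_1 \wedge \cdots \wedge C_m$ over Boolean variables $x_1, \dots, x_n$, where each clause $C_j$ is a disjunction of literals, and encode each clause as an LDFA $\mathcal{A}_j$ over the alphabet $\Sigma = [2]$ (the tensor encoding of $\{0,1\}$) with $n$ layers, one per variable, and just two states per layer: a \emph{not-yet-satisfied} state and a \emph{satisfied} state. The initial state is \emph{not-yet-satisfied}, the unique accepting state in the final layer is \emph{satisfied}, and the transition at layer $i$ reads the value assigned to $x_i$: if that value makes the literal of $x_i$ in $C_j$ true it moves to (or stays in) \emph{satisfied}, otherwise it keeps the current state, with the identity transition whenever $x_i$ does not occur in $C_j$. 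By construction $f_{\mathcal{A}_j} = \llbracket C_j \rrbracket$, and since each $\mathcal{A}_j$ has $O(1)$ states per layer its TT representation is produced in polynomial time.

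Next I would realize the conjunction as a product of these automata. Because every $f_{\mathcal{A}_j}$ is $\{0,1\}$-valued, we have $\prod_{j=1}^m f_{\mathcal{A}_j} = \bigwedge_{j=1}^m C_j = \llbracket \Phi \rrbracket$, and by Proposition~\ref{app:prop:productoperation} together with Lemma~\ref{app:lemma:productldfatn} this product is exactly the function computed by $\mathcal{A}_1 \times \cdots \times \mathcal{A}_m$, which admits a TN encoding constructible in time polynomial in $m$, $|\Sigma|$, and the number of states. Concretely, I would stack the TT cores of the clause automata and contract, at each of the $n$ layers, their physical legs through an order-$(m+1)$ copy tensor $\Delta^{(m+1)}$ --- itself polynomial-size via Equations~\eqref{app:eq:copytensorastt1}--\eqref{app:eq:copytensorastt2} --- leaving one free physical leg per layer for the corresponding input variable. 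The result is a two-dimensional grid TN of width $n$ and height $m$ whose contraction equals $\llbracket \Phi \rrbracket$.

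For the complexity count, each of the $m$ clause LDFAs is built in polynomial time, the copy-tensor TTs are polynomial in $m$ and $|\Sigma| = 2$, and assembling the grid is polynomial in all parameters, so the overall algorithm runs in polynomial time. The point demanding the most care --- rather than a genuine obstacle --- is verifying that the automata product computes the \emph{pointwise product} of the clause indicators rather than some weaker notion of intersection; this is precisely the identity $f_{\mathcal{A}^{(1)} \times \mathcal{A}^{(2)}} = f_{\mathcal{A}^{(1)}} \cdot f_{\mathcal{A}^{(2)}}$ recorded in Appendix~\ref{app:subsec:ldfa}, from which the conjunction semantics follows directly and extends to $m$ clauses by the commutativity of the product operation. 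A secondary bookkeeping detail is the coordinate shift between the Boolean domain $\{0,1\}$ and the tensor index set $[2]$, which only affects how the clause transition functions are indexed and introduces no real difficulty.
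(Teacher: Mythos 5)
Your proposal is correct and follows essentially the same route as the paper: each clause is encoded as a two-state-per-layer LDFA tracking whether the clause is satisfied (matching the paper's construction illustrated in Figure~\ref{app:fig:disjunction}), and the conjunction is realized as the LDFA product, encoded as a polynomial-size grid TN via copy tensors exactly as in Lemma~\ref{app:lemma:productldfatn} and Proposition~\ref{app:prop:productoperation}. The only difference is expository --- you inline the copy-tensor grid construction that the paper delegates to Lemma~\ref{app:lemma:productldfatn} --- so there is nothing to flag.
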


\begin{proof}
Let $\Phi$ be a CNF formula composed of $L$ clauses. The reduction proceeds as follows:
For each clause, construct an equivalent Layered Deterministic Finite Automaton (LDFA). This transformation can be performed in linear time with respect to the number of variables in the clause (see Figure \ref{app:fig:disjunction} for an illustrative example). Then, construct the TN that corresponds to the product of the resulting LDFAs. This TN simulates the conjunction over all clauses of the formula $\Phi$. The polynomial time complexity of this procedure is guaranteed by Lemma~\ref{app:lemma:productldfatn}.
\end{proof}

The proof of Lemma~\ref{app:lemma:cnftn}, combined with our earlier claims, completes the proof of Proposition~\ref{prop:sharphard}:

\begin{proof}{(\textit{Proposition \ref{prop:sharphard})}}
By Lemma~\ref{app:lemma:shapmodelcounting}, the model counting problem for CNFs is polynomially reducible to computing Marginal SHAP values under the uniform distribution. Lemma~\ref{app:lemma:cnftn} ensures that any CNF formula is reducible in polynomial time into an equivalent TN. Moreover, the uniform input distribution can be encoded as a product of rank-one tensors in linear time with respect to the number of input variables; for instance, by the tensor $\frac{1}{2}(e_1^1 + e_2^1) \circ \ldots \circ \frac{1}{2}(e_1^n + e_2^n)$. This completes the reduction and proves the \#P-Hardness of computing Marginal SHAP scores for general TNs.
\end{proof}

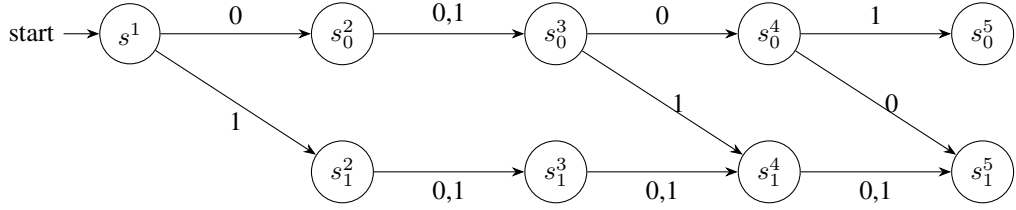
\begin{figure}[t]
  \centering 
\begin{tikzpicture}[
  ->, >=Stealth,
  every state/.style={circle, draw, minimum size=8mm},
  node distance=2cm
]
  \node[state, initial] (s1) {$s^1$};
  \node[state, right=of s1] (s02) {$s^{2}_{0}$};
  \node[state, right=of s02] (s03) {$s^{3}_{0}$};
  \node[state, right=of s03] (s04) {$s^{4}_{0}$};
  \node[state, right=of s04] (s05) {$s^{5}_{0}$};

  \node[state, below=1cm of s02] (s12) {$s^{2}_{1}$};
  \node[state, below=1cm of s03] (s13) {$s^{3}_{1}$};
  \node[state, below=1cm of s04] (s14) {$s^{4}_{1}$};
  \node[state, below=1cm of s05] (s15) {$s^{5}_{1}$};

  \draw (s1)  -- node[above] {0} (s02);
  \draw (s1)  -- node[below] {1} (s12);

  \draw (s02) -- node[above] {0,1} (s03);
  \draw (s03) -- node[above] {0} (s04);
  \draw (s03) -- node[right] {1} (s14);
  \draw (s04) -- node[above] {1} (s05);
  \draw (s04) -- node[right] {0} (s15);

  \draw (s12) -- node[below] {0,1} (s13);
  \draw (s13) -- node[below] {0,1} (s14);
  \draw (s14) -- node[below] {0,1} (s15);

\end{tikzpicture}
\caption{An LDFA that accepts the set of satisfying assignments of the disjunctive clause: $C = x_{1} \lor \neg{x}_3 \lor x_{4}$. The lowest states in the grid ($\{s_{1}^{2}, s_{1}^{3}, s_{1}^{4},s_{1}^{5}\}$ tracks the satisfiability of the running assignment. Satisfying assignments are those that reach the state $s_{1}^{5}$} \label{app:fig:disjunction}
\end{figure}

\subsection{Marginal SHAP for TTs is in $\textsc{NC}^{2}$.}
In this section, we provide the details of the algorithmic construction to compute the exact marginal SHAP value for TTs in poly-logarithmic time using a polynomial number of parallel processors. 

The algorithmic construction we propose stems its correctness from Theorem \ref{thm:tensortrainshap} which states the following:

\begin{theorem*}
    Let $\mathcal{T}^{M} = \llbracket \mathcal{I}^{(1)}, \ldots , \mathcal{I}^{(n_{in})} \rrbracket$ and  $\mathcal{T}^{P} = \llbracket \mathcal{P}^{(1)}, \ldots, \mathcal{P}^{(n_{in})} \rrbracket$ be two TTs corresponding to the model to interpret and the data-generating distribution, respectively. 
     Then, the Marginal SHAP Tensor $\mathcal{T}^{(M,P)}$ can be represented by a TT parametrized as:
      \begin{equation} \label{eq:tensortrainshapley}
           \llbracket \mathcal{M}^{(1)} \times_{(4,2)} \mathcal{I}^{(1)} \times_{(3,2)} \mathcal{P}^{(1)} \times_{(2,2)} \mathcal{G}^{(1)} , \ldots\! \ldots\!  ,\mathcal{M}^{(n_{in})} \times_{(4,2)} \mathcal{I}^{(n_{in})} \times_{(3,2)} \mathcal{P}^{(n_{in})} \times_{(2,2)} \mathcal{G}^{(n_{in})} \rrbracket
      \end{equation}
      where the collection of Tensors $\{\mathcal{G}^{(i)}\}_{i \in [n_{in}]}$ and $\{\mathcal{M}^{(i)}\}_{i \in [n_{in}]}$ are implicitly defined in Lemma \ref{lemma:weightshapfunction} and Lemma \ref{lemma:valuefunctiontn}, respectively.
\end{theorem*}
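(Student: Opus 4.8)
The plan is to start from the tensorized SHAP identity of Proposition~\ref{prop:marginal_shap_tensor}, namely $\mathcal{T}^{(M,P)} = \Tilde{\mathcal{W}} \times_{S} \mathcal{V}^{(M,P)}$, and substitute into it the two structural results already established: the TT form of the modified Weighted Coalitional Tensor $\Tilde{\mathcal{W}} = \llbracket \mathcal{G}^{(1)}, \ldots, \mathcal{G}^{(n_{in})} \rrbracket$ (Lemma~\ref{lemma:weightshapfunction}), and the router-based expression $\mathcal{V}^{(M,P)} = [\mathcal{M}^{(1)} \circ \ldots \circ \mathcal{M}^{(n_{in})}] \times_{S_1} \mathcal{T}^{M} \times_{S_2} \mathcal{T}^{P}$ (Lemma~\ref{lemma:valuefunctiontn}). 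Using the hypothesis that both the model and the distribution are TTs, I would then expand $\mathcal{T}^{M} = \llbracket \mathcal{I}^{(1)}, \ldots, \mathcal{I}^{(n_{in})} \rrbracket$ and $\mathcal{T}^{P} = \llbracket \mathcal{P}^{(1)}, \ldots, \mathcal{P}^{(n_{in})} \rrbracket$. The outcome of these substitutions into Equation~\eqref{eq:marginalshaptensor_valuetensor} is a single (large) tensor network assembled from four chains of cores, one chain per factor.

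The heart of the argument is a structural observation about this network: it has a two-dimensional \emph{ladder} layout with one column per feature index $i \in [n_{in}]$. Column $i$ contains exactly the four cores $\mathcal{G}^{(i)}, \mathcal{M}^{(i)}, \mathcal{I}^{(i)}, \mathcal{P}^{(i)}$, glued \emph{vertically} only through site-$i$ physical legs: the coalition bit $s_i$ joins $\mathcal{G}^{(i)}$ to $\mathcal{M}^{(i)}$, the router's ``do'' output joins $\mathcal{M}^{(i)}$ to the model core $\mathcal{I}^{(i)}$, and the sampled-value leg joins $\mathcal{M}^{(i)}$ to the distribution core $\mathcal{P}^{(i)}$. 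The only \emph{horizontal} connections between columns are the bond legs of the three TTs $\Tilde{\mathcal{W}}$, $\mathcal{T}^{M}$, $\mathcal{T}^{P}$, since the routers $\mathcal{M}^{(i)}$ carry no bonds. Because multi-leg contraction is associative and commutative (Section~\ref{background}), I can legitimately reorder the contractions so that each column is evaluated first.

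Concretely, I would define the $i$-th core of the claimed decomposition as $\mathcal{C}^{(i)} \myeq \mathcal{M}^{(i)} \times_{(4,2)} \mathcal{I}^{(i)} \times_{(3,2)} \mathcal{P}^{(i)} \times_{(2,2)} \mathcal{G}^{(i)}$, exactly as in Equation~\eqref{eq:tensortrainshapley}, where the leg-pairs $(4,2), (3,2), (2,2)$ are precisely those realizing the three vertical gluings above. Performing these column contractions consumes all physical legs internal to a column and leaves each $\mathcal{C}^{(i)}$ with a left and a right bond leg, each the Cartesian product of the corresponding bonds of $\Tilde{\mathcal{W}}$, $\mathcal{T}^{M}$ and $\mathcal{T}^{P}$; the free input leg $x_i$ (dimension $N_i$) destined for later contraction with the one-hot vector $e_{x_i}^{N_i}$; and, on the boundary cores, the feature-to-explain leg on $\mathcal{C}^{(1)}$ and the output leg on $\mathcal{C}^{(n_{in})}$. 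Since the only surviving inter-column links are these composite bonds, connecting each column solely to its immediate neighbours, the chain $\llbracket \mathcal{C}^{(1)}, \ldots, \mathcal{C}^{(n_{in})} \rrbracket$ is a genuine TT and, by associativity, equals $\mathcal{T}^{(M,P)}$; Figure~\ref{fig:tensortrainshap} depicts the emerging one-dimensional topology.

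The main obstacle I anticipate is the bookkeeping needed to certify that the column-first contraction order is legitimate and that no long-range (non-nearest-neighbour) bonds are generated. This reduces to verifying the locality claim: each router $\mathcal{M}^{(i)}$ touches only site-$i$ legs and carries no horizontal bond, so after contracting a column the three TT bonds combine independently into a single product bond between adjacent columns, preserving the chain topology. The remaining care is purely notational --- aligning the global index positions appearing in $S$, $S_1$, $S_2$ with the local leg roles of the do-operator (model input fed by the ``do'' output, distribution feeding the sampled value, coalition bit feeding $\Tilde{\mathcal{W}}$), so that the contraction pairs coincide with $(4,2), (3,2), (2,2)$ in the statement.
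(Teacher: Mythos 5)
Your proposal is correct and follows essentially the same route as the paper: substituting the TT parametrization of $\Tilde{\mathcal{W}}$ (Lemma~\ref{lemma:weightshapfunction}) and the router-based form of $\mathcal{V}^{(M,P)}$ (Lemma~\ref{lemma:valuefunctiontn}) into Equation~\eqref{eq:marginalshaptensor_valuetensor}, then reordering contractions column-by-column to obtain the chain in Equation~\eqref{eq:tensortrainshapley}. In fact, your locality argument --- that the routers $\mathcal{M}^{(i)}$ carry no horizontal bonds, so column-first contraction merges the three TT bonds into a single product bond between adjacent columns --- spells out exactly the ``suitable arrangement of contraction ordering'' that the paper's proof leaves implicit.
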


Theorem \ref{thm:tensortrainshap} is a corollary of Proposition \ref{prop:marginal_shap_tensor}: Replace $\tilde{\mathcal{W}}$ with its TT parametrization (Lemma \ref{lemma:weightshapfunction}), and replace $\mathcal{T}^{M}$ and $\mathcal{T}^{P}$ with their corresponding TT parametrizations in the formulation of the Marginal Value Tensor in Lemma \ref{lemma:valuefunctiontn}.  

Next, our focus shall be placed on the computational aspect of computing Marginal SHAP scores for TTs by leveraging the result of Theorem~\ref{thm:tensortrainshap} to show that this problem is in \textsc{NC}. 

Denote:
\begin{itemize}
    \item $\mathcal{T}^{M} = \left \llbracket \mathcal{I}^{(1)}, \ldots, \mathcal{I}^{(n_{in})} \right \rrbracket$ a TT model such that, for each $i \in [n_{in}]$, $\mathcal{I}^{(i)}$ is in $\mathbb{R}^{d_{M,i} \times N_{i} \times d_{M,i+1}}$ ($d_{M,1} = 1$ and $d_{M,n_{in}+1} = n_{out}$),
    \item $\mathcal{T}^{P} = \left \llbracket \mathcal{P}^{(1)}, \ldots, \mathcal{P}^{(n_{in})} \right \rrbracket$ a TT implementing a probability distribution over $\mathbb{D}$ such that, for each $i \in [n_{in}]$, $\mathcal{P}^{(i)}$ is in $\mathbb{R}^{d_{P,i} \times N_{i} \times d_{P,i+1}}$ ($d_{P,1} = d_{P,n_{in}+1} = 1$),
    \item An input instance $x = (x_1, \ldots, x_{n_{in}}) \in \mathbb{D}$,
\end{itemize}
In light of Theorem \ref{thm:tensortrainshap}, a typical parallel scan procedure to compute the matrix $\Phi(\mathcal{T}^{M},x, \mathcal{T}^{P})$ runs as follows: 
\begin{itemize}
    \item \textbf{Level 0.} Compute in parallel the following $n_{in}$ tensors. For $i \in [n_{in}]$: 
    \begin{equation} \label{app:eq:ttnc2}
    \mathcal{H}^{(i)}_{0} \myeq \mathcal{M}^{(i)} \times_{(4,2)} \mathcal{I}^{(i)} \times_{(3,2)} \mathcal{P}^{(i)} \times_{(2,2)} \mathcal{G}^{(i)} \times_{(1,1)} e_{x_i}^{N_i} \in \mathbb{R}^{\left(n_{in}^{2} \times n_{d_{P,i}} \times n_{d_{M,i}}\right)^{\otimes 2} }
    \end{equation}
    \item \textbf{Level 1 to $\mathbf{\log(n_{in})}$.} At step $j \in [\lfloor \log(n_{in}) \rfloor]$, perform a contraction operation over Neighboring Tensors. For $i \in [\frac{n_{in}}{2^{j}}]$:
    \begin{equation} \label{app:eq:ttnc3}
    \mathcal{H}_{j}^{(i)} = \mathcal{H}_{j-1}^{2 \cdot (i-1)} \times_{S} \mathcal{H}_{j-1}^{2 \cdot i}
    \end{equation}
    where $S \myeq \{ (7-k, k): k \in [3] \}$.
\end{itemize}
 By Theorem \ref{thm:tensortrainshap}, the produced matrix at the last step is equal to $\Phi(\mathcal{T}^{M},x, \mathcal{T}^{P})$.
 
\paragraph{Complexity.} Each tensor contraction operation in Equations \eqref{app:eq:ttnc2} and \eqref{app:eq:ttnc3} requires at most $\mathcal{O}\left(n_{in}^{4} \cdot \max\limits_{i \in [n_{in}]} d_{M,} \cdot \max\limits_{i \in [n_{in}]} d_{P,i} \right)$. At level $0$, we need to perform in parallel $n_{in}$ operations of this kind upgrading the number of required parallel processors to $\mathcal{O} \left(n_{in}^{5} \cdot \max\limits_{i \in [n_{in}]} d_{M,i} \cdot \max\limits_{i \in [n_{in}]} d_{P,i} \right)$. As for the running time, the depth of the circuit performing the tensor contractions is bounded by $\mathcal{0}\left(\log(n_{in}) + \log(\max\limits_{i \in [n_{in}]} d_{M,i}) + \log(\max\limits_{i \in [n_{in}]} d_{P,i}) \right)$. The depth of the total circuit to compute Marginal SHAP for TTs is thus bounded by $\mathcal{O} \left(\log(n_{in}) \cdot [\log(n_{in}) + \log(\max\limits_{i \in [n_{in}]} d_{M,i}) + \log(\max\limits_{i \in [n_{in}]} d_{P,i})]   \right)$.    

\section{Tightening Complexity Results for Other ML Models} \label{app:sec:othermlmodels}

In this section, we employ \textsc{NC} reductions to show that the problem of computing Marginal SHAP scores for certain popular model classes (Linear RNNs, decision trees, ensemble trees, and linear models) lies also in $\textsc{NC}^{2}$ (Theorem \ref{prop:NC}).

An \textsc{NC} reduction is a type of many-one reduction where the transformation function from instances of problem $A$ to instances of problem $B$ is computable by a uniform family of Boolean circuits with polynomial size and polylogarithmic depth. This ensures that the reduction itself can be performed efficiently in parallel. A crucial property of the class \textsc{NC} is its closure under \textsc{NC} reductions: if a problem $A$ is NC-reducible to a problem $B$, and $B$ is in \textsc{NC}, then $A$ is also in \textsc{NC} \cite{arora2009computational}.
\subsection{Reduction from Linear RNNs and Linear Models to TTs.} 
We assume without loss of generality that $N = N_{1} = N_{2} = \ldots = N_{n_{in}}$. This assumption reflects the practical use of RNNs, such as in Natural Language Processing (NLP) applications, where elements of the sequence are assumed to belong to a finite vocabulary.  

\paragraph{Linear RNNs and TTs.} General TTs can be seen as non-stationary generalizations of Linear RNNs at a fixed window. Interestingly, Rabusseau \textit{et al.} \cite{rabusseau19a} showed that stationary TTs are strictly equivalent to second-order linear RNNs (see section \ref{app:sec:background} for a formal definition of second-order linear RNNs). Indeed, reformulating the equations governing the dynamics of a second-order linear RNNs in a \textit{tensorized} format, it can be shown using careful algebraic calculation that:
\begin{equation}
 \begin{bmatrix}
     \mathbf{h}_{t} \\
     1
 \end{bmatrix} = \left(\tilde{\mathcal{T}} \times_{(2,1)} I \right) \times_{(1,1)} \begin{bmatrix}
     \mathbf{h}_{t-1} \\
     1
 \end{bmatrix} \times_{(2,1)} e_{x_{t}}^{N}
 \end{equation}

 Where it holds that:
 
 \begin{itemize}
 \item $I = \begin{bmatrix}
     I_{N \times N} \\ 
     1_{N}^{T}
 \end{bmatrix} \in \mathbb{R}^{(N+1) \times N}$
 
 \item $\mathcal{T} \in \mathbb{R}^{(d+1) \times (N+1) \times (d+1)}$ is such that:
 \begin{enumerate}
     \item If $i \in [d] \land j \in [N]$: 
        $$\tilde{\mathcal{T}}_{i,j,:} = \begin{bmatrix}
    \mathcal{T}_{i,j,:} \\ 0
        \end{bmatrix}$$
       \item If $i = d + 1 \land j \in [N]$: 
        $$\tilde{\mathcal{T}}_{i,j,:} = \begin{bmatrix}
    W_{j,:} \\ 0
        \end{bmatrix}$$
        \item If $i \in [d] \land j = N +1 $:    $$\tilde{\mathcal{T}}_{i,j,:} = \begin{bmatrix}
    U_{j,:} \\ 0
        \end{bmatrix}$$
 \end{enumerate}
\end{itemize}

All the other elements are set to $0$.

The additional dummy neuron at dimension $d+1$ (whose value is always equal to $1$) is added to the model to account for additive terms present in the linear RNN dynamics equation.

Consequently, a second-order linear RNN $R$ at a bounded window $n_{in}$ is equivalent to the stationary TT $\mathcal{T}^{R}$ parametrized as:

\begin{equation} \label{app:eq:ttrnn}
    \left \llbracket \left( \tilde{\mathcal{T}} \times_{(2,1)} I \right) \times_{(1,1)} \begin{bmatrix}
        h_{0} \\ 1
    \end{bmatrix}, \tilde{\mathcal{T}} \times_{(2,1)} \tilde{W} \ldots, \left(\tilde{\mathcal{T}} \times_{(2,1)} \tilde{W} \right) \times_{(3,1)} \begin{bmatrix} O \\ 0_{N+1} \end{bmatrix}  \right \rrbracket
\end{equation}

\paragraph{Reduction and Complexity.} The SHAP computational problem for Linear RNNs under TT distributions takes as input an RNN $R$ of size $d$ mapping sequences of elements in $[N]$ to $\mathbb{R}^{n_{out}}$, an input sequence $x = (x_{1}, \ldots, x_{n_{in}}) \in [N]^{\otimes n_{in}}$ and $\mathcal{T}^{P}$ and outputs the SHAP Matrix $\Phi(R,x,\mathcal{T}^{P}) \in \mathbb{R}^{n_{in} \times n_{out}}$ (Equation ).

Fix an input instance $(R,x, \mathcal{T}^{P})$. Our objective is to construct in poly-logarithmic time using a polynomial number of parallel processors an input instance of the SHAP problem for TTs: $(\mathcal{T}^{R}, \bar{x}, \mathcal{T}^{\bar{P}})$ such that $\Phi(R, x, \mathcal{T}^{P}) = \Phi(\mathcal{T}^{R}, \bar{x}, \mathcal{T}^{\bar{P}})$. The detailed reduction strategy of the SHAP problem of Linear RNNs to the SHAP problem of TTs is detailed as follows:
\begin{itemize}
 \item \textbf{Reduction of } $\mathbf{x}$ \textbf{and} $\mathbf{\mathcal{T}}^{\mathbf{P}}$: The input instance $x$ and the tensor $\mathcal{T}^{P}$ are mapped trivially to the instance $\bar{x}$ and the tensor $\mathcal{T}^{\bar{P}}$ of the input instance of the SHAP problem for TTs. This operation runs in constant time using a linear number of parallel processors with respect to $n_{in}$ and the size of $\mathcal{T}^{P}$
 \item \textbf{Reduction of the linear RNN} $\mathbf{R}$: Equation \eqref{app:eq:ttrnn} hints at a straightforward strategy to reduce linear RNNs into equivalent TTs at a bounded window: 
\begin{itemize}
    \item Apply the tensor contraction operation: $\mathcal{H} = \tilde{\mathcal{T}} \times_{(2,1)} I$. This  operation runs in $\mathcal{O}(\log(N))$ using $\mathcal{O}(d^{2} \cdot N)$ parallel processors
    \item Compute Leftmost and Rightmost matrices: 
    \begin{itemize}
      \item Leftmost Matrix
        $\mathcal{H}^{(1)}$: $\mathcal{H}^{(1)} = \mathcal{H} \times_{(1,1)} \begin{bmatrix}
            h_{0} \\ 1
        \end{bmatrix} $.
        
        This operation runs in $\mathcal{O}(\log(d))$ using $\mathcal{O}(d^{2} \cdot N)$ parallel processors.
        \item Rightmost Matrix: $\mathcal{H}^{(n_{in})} = \mathcal{H} \times_{(3,1)} \begin{bmatrix}
            O \\ 0_{N+1}
        \end{bmatrix}$. 

        This operation runs in $\mathcal{O}(\log(d))$ using $\mathcal{O}(d^{2} \cdot N)$ parallel processors.
    \end{itemize}
    \item Run in parallel $\mathcal{O}(n_{in} \cdot N \cdot d^{2})$ parallel workers to place in the input tape of the SHAP problem for TTs the following TT: 
     $$\left \llbracket \mathcal{H}^{(1)}, \underbrace{\mathcal{H}, \ldots, \mathcal{H}}_{(n_{in}-2) \text{times}}, \mathcal{H}^{(n_{in})}  \right \rrbracket $$
\end{itemize}
\end{itemize}
The total reduction strategy runs in $\mathcal{O}(\max(\log(N) , \log(d)))$ using $\mathcal{O}(n_{in} \cdot N \cdot d^{2} + |\mathcal{T}^{P}|)$ parallel processors.

\subsection{Reduction from Tree Ensembles to TTs.}
The objective of this subsection is to show that computing SHAP for Tree Ensembles is NC-reducible to the SHAP problem for TTs. 

We first note the fact that if computing SHAP for DTs is efficiently parallelizable, then so is Ensemble Trees. Indeed, by the linearity property of the SHAP score, computing SHAP for Ensemble Trees is obtained as a weighted sum of SHAP scores of the trees forming the ensemble. This operation adds $\mathcal{O}(\log(N))$ depth to the circuit that computes SHAP scores of single DTs, where $N$ refers to the number of trees in the ensemble. This implies the following fact:    

\textbf{Fact.} \textit{Fix a distribution class $P$. Computing Marginal SHAP for the class of Decision Trees under $P$ is in \textsc{NC} implies computing Marginal SHAP for Ensemble Trees is also in \textsc{NC}.}
\\

In light of this fact, we dedicate the rest of this section to examining the specific case of DTs and their reduction into TTs.  

\paragraph{Decision Trees  as Disjoint DNFs.} We first propose a representation of Tree-based models. This representation has been shown to be more amenable to parallelization and forms the core of the pre-processing step as the GPUTreeSHAP algorithm \cite{mitchell2020gputreeshap}. 
Define the following collection of predicates: 
$$p_{ij} \myeq \llbracket x_{i} = j \rrbracket$$
where $i \in [n_{in}]$, and $j \in [N_{i}]$. A DT $T$ can be equivalently represented as a \textit{disjoint DNF} formula over the predicates $\{p_{ij}\}_{i \in [n_{in}], j \in [N_{i}]}$ as \cite{marzouk2024tractability}:
$$\Phi_{T} = C_{1} \lor C_{2} \lor \ldots \lor C_{L}$$

where $C_{L}$ is a conjunctive clause. 

Clauses $\{C_{j}\}_{j \in L}$ satisfy the disjointness property: For two different clauses, the intersection of their satisfying assignments is empty. The conversion of Decision Trees into equivalent Disjoint DNFs has been shown to run in polynomial time with respect to the number of edges of the DT.

In the sequel, we assume that DTs are represented using the Disjoint DNF formalism. In practice, the conversion of a DT into this format can be performed offline (in polynomial time), and stored as such, in the same fashion of the GPUTreeSHAP Algorithm \cite{mitchell2020gputreeshap}. 

We begin by showing how DTs can be encoded in TT format. We assume an arbitrary ordering $X_1, \ldots, X_{n_{in}}$ of input features of the DT: 
\begin{proposition} \label{app:prop:dttt}
  A DT $T$ encoded as a disjoint DNF $\Phi_{T} = C_{1} \lor \ldots \lor C_{L}$ is equivalent to a TT parametrized as:
  \begin{equation} \label{app:eq:tt}
  \left \llbracket  \mathcal{I}^{(1)} \times_{(1,1)} 1_{L}, , \mathcal{I}^{(2)}, \ldots, \mathcal{I}^{(n_{in}-1)}, \mathcal{I}^{(n_{in})}\times_{(3,1)} 1_{L} \right \rrbracket
  \end{equation}
  such that for $l \in [n_{in}]$, the tensor $\mathcal{I}^{(l)} \in \mathbb{R}^{L \times N_{l} \times L}$ is such that:
  $$\mathcal{I}^{(l)}_{i,j,k} = \begin{cases}
      1 & \text{if}~~ (p_{lj} \in C_{k} \land i=k) \lor \forall j \in [N_{i}]: p_{lj} \notin C_{i} \\ 
      0 & \text{otherwise}
  \end{cases}$$
\end{proposition}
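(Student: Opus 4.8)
The plan is to show that the TT in Equation~\eqref{app:eq:tt}, contracted against the one-hot encoding of an input $x = (x_1, \ldots, x_{n_{in}})$, evaluates to the Boolean value $f_T(x)$ of the disjoint DNF $\Phi_T = C_1 \lor \cdots \lor C_L$. First I would recall that evaluating a TT $\llbracket \mathcal{G}^{(1)}, \ldots, \mathcal{G}^{(n_{in})}\rrbracket$ on $x$ amounts to the scalar $\sum_{\alpha_0, \ldots, \alpha_{n_{in}}} \mathcal{G}^{(1)}_{\alpha_0, x_1, \alpha_1} \cdots \mathcal{G}^{(n_{in})}_{\alpha_{n_{in}-1}, x_{n_{in}}, \alpha_{n_{in}}}$, where the two boundary contractions with $1_L$ simply let $\alpha_0$ and $\alpha_{n_{in}}$ range freely over $[L]$, i.e. they sum over all ``rails'' at both ends.

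The key structural observation is that each core $\mathcal{I}^{(l)}$ is diagonal in its two bond indices: by construction $\mathcal{I}^{(l)}_{i,j,k} = 0$ whenever $i \neq k$. Hence the only surviving terms in the contraction are those in which all bond indices coincide, $\alpha_0 = \alpha_1 = \cdots = \alpha_{n_{in}} = k$, and the evaluation collapses to a single sum over rails $k \in [L]$, each contributing the product $\prod_{l=1}^{n_{in}} \mathcal{I}^{(l)}_{k, x_l, k}$. I would then read off the meaning of this product: the factor $\mathcal{I}^{(l)}_{k, x_l, k}$ equals $1$ exactly when either the literal $p_{l,x_l}$ occurs in $C_k$, or feature $l$ does not appear in $C_k$ at all (so $C_k$ imposes no constraint on feature $l$), and equals $0$ when $C_k$ constrains feature $l$ to some value other than $x_l$. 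Therefore the product over $l$ equals $1$ if and only if $x$ satisfies every literal of $C_k$, i.e. it equals the clause value $C_k(x) \in \{0,1\}$. This step is cleanly formalized by induction on $l$, mirroring Proposition~\ref{app:prop:induction}: the partial contraction of the first $l$ cores is a diagonal matrix whose $k$-th diagonal entry records whether the prefix $x_{1:l}$ violates none of the constraints $C_k$ places on features $1, \ldots, l$.

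Combining these facts, the TT computes $\sum_{k=1}^{L} C_k(x)$. I would conclude by invoking the disjointness property of $\Phi_T$: since the satisfying sets of distinct clauses are pairwise disjoint, at most one summand is nonzero for any fixed $x$, so the sum equals $\Phi_T(x)$, which in turn equals $f_T(x)$ because $\Phi_T$ enumerates precisely the paths reaching a leaf labeled $1$. This establishes the claimed equivalence.

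The main obstacle I anticipate is not the algebra but pinning down the two-case definition of $\mathcal{I}^{(l)}$ precisely: one must verify that the ``feature-absent'' transitions preserve the active rail for \emph{every} value of the physical index $j$, so that features irrelevant to a clause neither kill nor spuriously revive its rail, while features present with a mismatched value do kill it. Care is likewise needed at the two boundary cores, where the contractions with $1_L$ must be checked to implement exactly ``activate every rail at the start'' and ``accept on every rail at the end'' without coupling distinct rails; establishing the diagonal-in-bond structure rigorously is what makes the rail decomposition legitimate.
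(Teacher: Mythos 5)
Your proposal is correct and follows essentially the same route as the paper's proof: the paper's induction on the prefix contraction (showing the $k$-th partial contraction is the vector of indicators $\llbracket x_{1:k} \models C_l^k \rrbracket$) is exactly your rail decomposition made sequential, relying on the same diagonality of the cores in their bond indices, the same per-rail clause-indicator product, and the same final appeal to disjointness after contracting with $1_L$. Your closing caveat about the two-case definition of $\mathcal{I}^{(l)}$ is well taken --- the statement's second case omits the condition $i=k$, and both your argument and the paper's inductive step implicitly read it with that condition so that feature-absent transitions preserve (rather than couple) the rails.
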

 
\begin{proof}  
Let $T$ be a a DT, and $\Phi_{T} = C_{1} \lor \ldots \lor C_{L}$ be its equivalent representation into a disjoint DNF format. For $k \in [n_{in}]$, we shall use the notation $C_{l}^{k}$ to denote the restriction of the clause $l$ to include only predicates in $\{p_{ij}\}_{i \leq i}$. 
    
    We claim that for any $k \in [n_{in}]$, and any $x = (x_{1}, \ldots, x_{N_{n_{in}}}) \in \mathbb{D}$, the TT (of length $k$) parametrized as:
    $$\llbracket \mathcal{I}^{(1)} \times_{(1,3)} 1_{L}, \mathcal{I}^{(2)}, \ldots, \mathcal{I}^{(k)} \rrbracket $$ 
    is such that:
    $$\mathcal{I}_{x_{1}, x_{2}, \ldots, x_{k},:} = \begin{bmatrix}
        x_{1:k} \models C_{1}^{k} \\
        \vdots \\
        x_{1:k} \models C_{L}^{k}
    \end{bmatrix} $$
    If this claim holds, then for $k = n$, and by the disjointness property of 
 $\Phi_{T}$, we have for any $x = (x_1, \ldots, x_{n_{in}})$: 
 
 $$f_{T}(x_{1}, \ldots, ) = \mathcal{T}_{x_{1}, \ldots, x_{n_{in}},:}^{T} \cdot 1_{L}$$ 
 which completes the proof of the proposition. 

 We prove this claim by induction on $k$. 

 \underline{Base case:} For $k=1$, fix $(x_1,l) \in [N_1] \times [L]$. By construction of $\mathcal{I}^{(1)}$, we have:
 \begin{align*}
     (\mathcal{I}^{(1)} \times_{(1,1)} 1_{L})_{i_{1},l} &= \sum\limits_{l' \in [L]} \mathcal{I}^{(1)}_{l',x_1,l} \\
     &=  \mathcal{I}^{(1)}_{l,i_1,l} = \llbracket (p_{x_{1}j} \in C_{1}) \lor \forall j \in [N_1]:~(p_{1j} \in C_l) \lor \forall j \in [N_1]: p_{1j} \in C_l \rrbracket \\
     &= \llbracket x_1 \models C_{l}^{1} \rrbracket
     \end{align*}

\underline{General Case.} Assume the claim holds for $k$, we show that it also holds for $k+1$. 

Let $(x_1, \ldots x_{k+1}) \in [N_1] \times \ldots \times [N_{k+1}]$ and $l \in [L]$. We have:

\begin{align*}
    \mathcal{I}_{x_1,x_2, \ldots x_{k+1},l} &= \sum\limits_{l' \in [L]} \mathcal{I}_{x_1, \ldots, x_{k},l'} \cdot \mathcal{I}^{(l+1)}_{l',x_{k+1},l} \\
    &= \mathcal{I}_{x_1, \ldots, x_{k},l} \cdot \mathcal{I}^{(l+1)}_{l,x_{k+1},l} \\
    &= \llbracket x_{1:k} \models C_{l}^{k}  \rrbracket \cdot \llbracket (p_{x_{k+1}j} \in C_{l}) \\
    & ~~~~~~\lor \forall j \in [N_1]:~(p_{(k+1)j} \in C_l) \lor \forall j \in [k+1]: p_{(k+1)j} \in C_{l} \rrbracket \\
    &= \left \llbracket x_{1:k+1} \models C_{l}^{k+1} \right \rrbracket
\end{align*}
\end{proof}

\paragraph{Reduction and Complexity.} Proposition \ref{app:prop:dttt} suggests the following strategy to reduce a DT encoded as a Disjoint DNF into an equivalent TT. 

Fix a Disjoint DNF $\Phi_{T} = C_{1} \lor \ldots \lor C_{L}$. The granularity of the parallelization strategy is at the literal level: Each parallel processor is dedicated to processing a specific literal $p_{ij}$ in a clause $C_{l}$:  If a literal $p_{ij}$ appears in the clause $C_{l}$, the processor sets the value  $\mathcal{I}^{(i)}_{l,j,l}$ to $1$. The correctness of this parallel schema is guaranteed by Proposition \ref{app:prop:dttt}. The running time complexity of this parallel procedure is $\mathcal{O}(1)$. The number of parallel processors is $\mathcal{O}(L \cdot n_{in})$.
\section{Computing Marginal SHAP for BNNs} \label{app:sec:bnn}

In this section, we present the proof of Theorem \ref{thm:reductionbnn} which provides fine-grained parameterized complexity for the problem of computing SHAP for BNNs. Recall this theorem's statement:

\begin{theorem*}
Let $P$ be either the class of empirical distributions, independent distributions, or the class of TTs. We have that:
\begin{enumerate}
    \item \textbf{Bounded Depth:} The problem of computing SHAP for BNNs under any distribution class $P$ is \textsc{para-NP-Hard} with respect to the network's \emph{depth} parameter.
    \item \textbf{Bounded Width:} The problem of computing SHAP for BNNs under any distribution class $P$ is in \textsc{XP} with respect to the \emph{width} parameter.
    \item \textbf{Bounded Width and Sparsity.} The problem of computing SHAP for BNNs under any distribution class $P$ is in \textsc{FPT} with respect to the \emph{width} and \emph{reified cardinality} parameters.
\end{enumerate}
\end{theorem*}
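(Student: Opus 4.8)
The plan is to handle the three items separately: a reduction from model counting for the hardness claim, and an exact compilation of BNNs into Tensor Trains (TTs) for the two tractability claims, after which Proposition~\ref{prop:ttisinnc} does the heavy lifting.

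\textbf{Item~1 (hardness in depth).} First I would prove para-NP-hardness with respect to depth by reducing from \#CNF-SAT. Given a CNF $\Phi = C_1 \wedge \cdots \wedge C_m$ over $n$ variables, I build a BNN of depth exactly $2$ computing $\Phi$: the hidden layer has one neuron per clause, each encoded by the reified cardinality constraint $\bigl(\sum_j l_{ij} \geq 1\bigr)$ so that it is satisfied iff the clause is, and the single output neuron is an AND gate encoded as $\bigl(\sum_i l_i \geq m\bigr)$. Computing marginal SHAP for this network under the uniform distribution recovers the model count of $\Phi$ (Lemma~\ref{app:lemma:shapmodelcounting}), which is \#\textsf{P}-hard. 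The uniform distribution is both an independent distribution and a rank-one TT, and the analogous hardness for empirical distributions is established in~\cite{VanDenBroeck2021}; since the construction uses constant depth $D=2$, this gives NP-hardness at fixed depth for every admissible class $P$, which is exactly para-NP-hardness with respect to depth.

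\textbf{Items~2 and 3 (tractability in width / width+sparsity).} The core object is an exact compilation of a BNN into an equivalent TT over the $n_{in}$ input features, following the 2D-tensor-network view of Figure~\ref{fig:all}. I would represent each layer as a two-dimensional tensor network whose horizontal legs carry the layer's wires and whose gadgets encode each neuron's reified cardinality gate, using the copy tensors of Appendix~\ref{app:sec:background} to duplicate fan-out wires. Contracting these layer gadgets backwards, from the output toward the input, collapses the grid into a single TT. The crucial step is bounding the bond dimension: reading the features left to right, the only state that must cross a cut is, for each of the $W_1$ neurons of the first hidden layer, the running partial sum of its reified cardinality constraint, a value in $\{0,1,\dots,R\}$; everything downstream is then a deterministic function of these counters. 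Hence the bond dimension is $O(R^{W_1})$ and the compilation runs in $\mathcal{O}\bigl(R^{W}\cdot\texttt{poly}(D,n_{in},\max_i N_i)\bigr)$ time, producing a TT of comparable size. Applying Proposition~\ref{prop:ttisinnc} (so SHAP for the compiled TT is in $\textsc{NC}^2\subseteq\textsc{P}$) finishes both items: since every threshold obeys $R\leq n_{in}$, fixing $W$ gives $R^{W}\leq n_{in}^{W}$, a polynomial of $W$-dependent degree, placing the problem in \textsc{XP} (Item~2); treating $R$ also as a parameter makes $R^{W}=g(W,R)$ depend only on the parameters, yielding a running time $g(W,R)\cdot\texttt{poly}(\cdot)$ and hence \textsc{FPT} (Item~3). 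Independent and empirical distributions are covered uniformly, as both admit TT encodings (a product of rank-one cores, respectively a mixture with one bond dimension per sample), so Proposition~\ref{prop:ttisinnc} applies without change.

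\textbf{Main obstacle.} I expect the real difficulty to be the backward contraction and its bond-dimension analysis. One must show both that the per-layer 2D gadgets compose into a genuine TT that computes the BNN output exactly, and that no intermediate contraction inflates the bond dimension beyond $O(R^{W_1})$. The subtle point is that the blow-up is governed by the first hidden layer's width rather than by the total number of neurons: the argument must exhibit an automaton-style invariant showing that the reified counters of one layer are consumed by the binarization thresholds of the next without the state space compounding across depth. Establishing this invariant, grounded in the copy-tensor identities of Appendix~\ref{app:sec:background}, is where the work concentrates; the \textsc{XP}/\textsc{FPT} accounting then follows immediately.
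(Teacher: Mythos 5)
Your proposal is correct and follows essentially the same route as the paper: for Item~1, a depth-$2$ BNN encoding of a CNF formula (clause neurons plus an AND output) combined with the SHAP-to-model-counting reduction under the uniform distribution, and for Items~2 and~3, an exact compilation of the BNN into a TT whose bond dimension is governed by the first hidden layer's reified-cardinality counters (the paper realizes your ``downstream is a deterministic function of the counters'' invariant via a lookup table over first-layer activation patterns and a product of counter LDFAs, giving the same $\mathcal{O}(R^{W}\cdot\texttt{poly}(D,n_{in},\max_i N_i))$ bound), after which Proposition~\ref{prop:ttisinnc} and the definitions of \textsc{XP} and \textsc{FPT} close both items exactly as you describe.
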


We split this section into two parts: The first part is dedicated to proving that computing SHAP for BNNs with bounded depth remains intractable. The second part provides the details of a procedure that builds an equivalent TT to a given BNN. The complexity analysis of this procedure will yield the results of items (2) and (3) of the Theorem.

\subsection{SHAP for BNNs with constant depth is intractable}

We will demonstrate that computing SHAP values for BNNs remains intractable even when the network is restricted to a depth of just one.

\begin{proposition}
Given a BNN $B$ with one hidden layer, and some input $x$, then it holds that computing SHAP for $B$ and $x$ is \#P-Hard.
\end{proposition}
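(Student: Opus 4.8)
The plan is to reduce from \#CNF-SAT, the problem of counting satisfying assignments of a CNF formula, which is classically \#\textsf{P}-complete. By Lemma~\ref{app:lemma:shapmodelcounting}, model counting for any classifier family $\mathcal{C}$ reduces in polynomial time to computing Marginal SHAP for $\mathcal{C}$ under the uniform distribution; moreover, since a BNN and any equivalent Boolean representation compute the same function over the same input space, for a fixed $P$ and instance their SHAP vectors coincide. Hence it suffices to show that an arbitrary CNF $\Phi = C_1 \wedge \cdots \wedge C_m$ over $n$ Boolean variables can be realized as the Boolean function computed by a BNN with a single hidden layer.

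First I would build the hidden layer with one neuron per clause. Working over inputs in $\{-1,+1\}$, for clause $C_j$ I set the weight of a variable occurring positively to $+1$ and of a variable occurring negatively to $-1$, so that each literal contributes $+1$ when true and $-1$ when false. If $C_j$ has $k_j$ literals, the all-false assignment gives signed sum $-k_j$ while any satisfying assignment gives at least $-k_j+2$; choosing the (real-valued) batch-normalization threshold to sit strictly between these two values makes the neuron output $+1$ exactly when $C_j$ is satisfied. The output layer is then a single neuron implementing the conjunction of the $m$ clause neurons: with all hidden-to-output weights set to $+1$, the preactivation equals $m$ iff every clause neuron fires and is at most $m-2$ otherwise, so a threshold placed at $m-1$ yields $+1$ precisely when $\Phi(x)=1$. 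All weights lie in $\{-1,+1\}$ and only biases and normalization parameters are real, so this is a legal single-hidden-layer BNN, and the reduction is clearly polynomial.

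The main obstacle is that the weight matrix is constrained to $\{-1,+1\}$ with no zero entries, so each clause neuron is forced to also read the $n-k_j$ variables absent from $C_j$, each injecting an assignment-dependent $\pm 1$ into the preactivation and destroying the clean clause test above. I would resolve this by pairing each variable $x_i$ with a complementary input $\bar{x}_i$ and having the clause neuron read an absent variable through both wires, so that the pair contributes the constant $x_i + \bar{x}_i$ once the distribution enforces $\bar{x}_i = \neg x_i$; this constant is absorbed into the threshold, leaving the clause test intact. The anti-correlation is exactly a bond between two cores and is therefore expressible as a Tensor Train, consistent with the distribution classes of Theorem~\ref{thm:reductionbnn}; under such a $P$ the constructed BNN computes $\Phi$ as a function of the free variables $x_1,\dots,x_n$, so its SHAP values recover the model count of $\Phi$. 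Getting this distributional bookkeeping right --- rather than the threshold arithmetic, which is routine --- is the delicate step, and with it the \#\textsf{P}-hardness of \#CNF-SAT transfers to SHAP for one-hidden-layer BNNs.
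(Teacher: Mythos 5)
Your skeleton is the same as the paper's --- encode a CNF with one hidden neuron per clause in a single-hidden-layer BNN, then transfer \#P-hardness of model counting to SHAP via Lemma~\ref{app:lemma:shapmodelcounting} --- but the device you introduce to cope with the $\{-1,+1\}$ weight constraint (complementary wires $\bar{x}_i$ plus an anti-correlated TT distribution) opens a genuine gap rather than closing one. First, Lemma~\ref{app:lemma:shapmodelcounting} is stated, and proved in the works it cites, for the \emph{uniform} distribution; under your correlated $P$ it does not apply as written. What would rescue the reduction is the efficiency property of marginal SHAP, $\sum_i \phi_i(M,x,P) = M(x) - \mathbb{E}_{x' \sim P}[M(x')]$, which holds for arbitrary $P$, combined with the observation that $\mathbb{E}_{x' \sim P}[M(x')] = \#\Phi / 2^n$ because $P$ is supported on pairs $(x, \neg x)$, on which your network agrees with $\Phi$. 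This argument is genuinely needed and not merely "bookkeeping": the marginal value function $V_M(x,S;P)$ evaluates $M$ \emph{off the support} of $P$ (a coalition can contain $x_i$ but not $\bar{x}_i$, breaking the anti-correlation, so absent-variable pairs no longer contribute a constant), hence "the BNN computes $\Phi$ under such a $P$" is not by itself sufficient; only the two extreme coalitions matter, and seeing that is exactly the efficiency argument. This is precisely the step you defer, and it is the mathematical heart of your variant.

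Second, and more consequentially, even with that step filled in, your proof establishes hardness only for SHAP under the \emph{TT} distribution class. This proposition exists to support item (1) of Theorem~\ref{thm:reductionbnn}, which asserts para-NP-hardness separately for empirical, independent, and TT distributions. Under an independent (e.g.\ uniform) distribution your construction collapses: the anti-correlation $\bar{x}_i = \neg x_i$ cannot be enforced, and the network no longer computes $\Phi$ on-distribution, so no model count is recovered. The paper avoids this entirely by working under the uniform distribution from the start: it reduces from \#3SAT, connects each clause neuron only to its three literals (hidden bias $3/2$, output bias $-(m - 1/2)$), and thereby implicitly allows absent connections --- that is, it resolves the obstacle you identified by tacitly relaxing the connectivity constraint rather than the distribution. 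Your observation that a strict reading of $\mathbf{W} \in \{-1,+1\}^{m \times n}$ forbids the paper's wiring is a fair criticism of the paper's construction, but trading that modeling point for a correlated distribution weakens the conclusion below what the theorem needs; the right repair would keep the uniform distribution (e.g.\ by arguing the relaxation is harmless, or by cancelling absent variables in a way that does not require correlations in $P$).
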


\begin{proof} We prove that computing SHAP values for a Binarized Neural Network (BNN) with just a single hidden layer is already \#P-Hard. Specifically, this is shown via a reduction from the counting variant of the classic 3SAT problem --- namely, \#3SAT --- which is known to be \#P-Hard~\citep{arora2009computational}.


We begin by referencing the result of~\citep{arenas2023complexity}, which showed that computing SHAP values for a model $f$ and input $x$, even under the simple assumption that the distribution $\mathcal{D}_p$ is \emph{uniform}, is as hard as model counting for $f$. Consequently, if model counting for $f$ is \#P-Hard, so is SHAP computation—this follows directly from the efficiency axiom under uniform distributions. Therefore, to establish \#P-Hardness of SHAP for a model $f$, it suffices to reduce from a model with known \#P-Hard model counting (e.g., a 3CNF formula). Since uniform distributions are a special case of independent distributions and significantly simpler than structured models like tensor trains (TTs), the hardness results extend naturally to our setting as well.


Consider a 3-CNF formula $\phi := t_1 \wedge t_2 \wedge \ldots \wedge t_m$, where each clause $t_i$ is a disjunction of three literals: $t_i := x_j \vee x_{\ell} \vee x_k$. We construct a Binarized Neural Network (BNN) $B$ with a single hidden layer over the input space $\{-1,1\}^n$—a setting that already implies hardness for more general discrete domains. Each input neuron of the BNN corresponds to a variable in $\phi$, so the input layer has $n$ neurons. The hidden layer contains $m$ neurons, one for each clause in $\phi$. For a clause $t_i$, if a variable $x_j$ appears positively, we connect input neuron $j$ to hidden neuron $i$ with weight $+1$; if it appears negatively, the weight is $-1$. Each hidden neuron is assigned a bias of $\frac{3}{2}$. Finally, we add a single output neuron with a bias of $-(m - \frac{1}{2})$.


We now show that every satisfying assignment to $\phi$ yields a ``True'' output in $B$, and every unsatisfying assignment yields ``False''. A clause $t_i$ is satisfied if at least one of its variables evaluates to True. This occurs either when a non-negated variable is assigned $1$ (a ``double-positive'' case) or when a negated variable is assigned $-1$ (a ``double-negative'' case). In the BNN, both cases contribute $1$ to the corresponding hidden neuron, since input and weight signs align. Thus, if any variable in $t_i$ satisfies the clause, the corresponding neuron receives at least one input of $1$. With a bias of $\frac{3}{2}$, even if the remaining two inputs contribute $-1$ each, the total input is $\frac{5}{2} - 2 = \frac{1}{2} > 0$, so the step function outputs $1$. Therefore, satisfied clauses activate their corresponding neurons. Finally, since the output neuron has a bias of $-(m - \frac{1}{2})$, it outputs $1$ if and only if all $m$ hidden neurons are active, meaning all clauses are satisfied.


Since we have shown that each satisfying assignment of the BNN $B$ corresponds to a satisfying assignment of $\phi$, it follows that the model counting problem is the same for both. Therefore, for this BNN with only a single hidden layer, and by the result of~\citep{arenas2023complexity}, computing SHAP is \#P-Hard.
\end{proof}

\subsection{Compiling BNNs into Equivalent TTs}
In this section, we provide the details of the procedure of compiling BNNs into Equivalent TTs, to show that the problem of computing SHAP for BNNs in bounded width (resp. bounded width and sparsity)  is in \textsc{XP} (resp. \textsc{FPT}).

\subsubsection{Warm Up: Compilation of a single-neuron BNN with Depth 1 into TT.} 
As an initial step, we present in this subsection a construction for representing the activation function of a single neuron in a BNN using TTs. This construction serves as a basic building block for simulating multi-layered BNN architectures by means of the TT formalism.

Let $\mathbf{w} = (w_1, \ldots w_n) \in \{-1,1\}^{n}$ be the weight parameters of a BNN with a single output neuron $n_{out}$. Its activation as computed using the reified cardinality representation is given as:
\begin{equation} \label{app:eq:nout}
n_{out}(x_1, \ldots x_n;R) = \left \llbracket \sum\limits_{i=1}^{n} l_{i} \geq R \right \rrbracket
\end{equation}
where: $R$ is its reified cardinality parameter, and $l_{i} \myeq \begin{cases}
     x_{i} & \text{if}~~w_{i} =1 \\
     \neg{x}_{i} & \text{otherwise}
\end{cases}$. 

To understand how the construction of an equivalent TT to simulate this BNN operates, we will employ the LDFA formalism previously introduced in subsection \ref{app:subsec:ldfa}. 

The main idea of this construction is to build an LDFA whose states function as counters, tracking the number of satisfied literals $\{l_{i}\}_{i \in [n]}$ (Equation \eqref{app:eq:nout}. When this count exceeds the specified cardinality threshold  $R$, the output neuron $n_{out}$ is activated; otherwise, it remains inactive. Crucially, the number of states at each layer in LDFA scales linearly with the parameter $R$. 

Formally, the LDFA that simulates the neuron $n_{out}$ (Equation \eqref{app:eq:nout}) is given as follows:
\begin{itemize}
    \item \textit{Number of layers:} $n$,
    \item \textit{State space:} For each layer $l \in [n]$, $S_{l} = \{0,R\}$
    \item \textit{The initial state:} $s_{0}^{1}$ (The state at Layer $1$ corresponding to $R = 0$)
    \item \textit{Transition function:} For a layer $l \in [n-1]$,
    $$\delta(s_{r}^{(l)}, \sigma) = \begin{cases}
         s_{r+1}^{(l+1)} & \text{if} \, (r \neq R) \land \left[(w_{l}=1 \land \sigma = 1) \lor (w_{l} = -1 \land \sigma = 0) \right] \\
         s_{r}^{(l+1)} & \text{otherwise}
    \end{cases}$$
    \item \textit{The final state:} $s_{R}^{(n)}$ 
\end{itemize}


\subsubsection{Compilation of a single Layer into TT.}
In this section, we will move from BNNs with a single neuron into multi-layered BNNs. We will show how one can construct a TT equivalent to a given BNN in $O(R^{W} \cdot \texttt{poly}(D, \max\limits_{i \in [n_{in}]} N_{i}, n_{in})$, where $R$ is the reified cardinality parameter of the network, $W$ its width, and $D$ its depth.

The construction will proceed in two steps: 

\begin{enumerate}
    \item \textbf{From Multi-Layered BNN into a BNN with one hidden layer.} In this step, we convert a deep BNN into an equivalent BNN with one hidden layer,
    \item \textbf{From BNN with one hidden layer into an equivalent TT.} The resulting BNN with one hidden layer from the first step is then converted into an equivalent TT.
\end{enumerate}

Next, we will provide details of how to perform each of these steps. 

\paragraph{Multi-Layered BNNs into BNNs with one hidden layer.} There are many ways to convert a multi-layered BNN into an equivalent BNN with one hidden layer. In the main paper, we propose a technique that collapses the network's depth progressively from the last layer up to the first layer. This technique has the advantage of better scaling with respect to the network's depth: Leveraging parallelization, this operation can be performed in $O(\log(D))$ using a similar parallel scan strategy presented in section \ref{app:sec:tensortrain}. A simpler method consists at settling for building a lookup table that maps the set of possible activation values of neurons in the first layer to the model's output. This lookup table can be built by enumerating all possible activation patterns of the first hidden layer and evaluating the model's output via a network's forward propagation operations. This operation runs in $O(2^{W_{1}} \cdot \texttt{poly}(D))$ time. 

\paragraph{BNNs with one hidden layer into TTs.} Given a BNN with one hidden layer whose width is equal to $1$. The conversion to an equivalent TT can be achieved by remarking that a BNN with one hidden layer can be compiled into an equivalent DNF formula whose number of clauses is $O(2^{W_1})$. 

Let $B$ be a BNN with one hidden layer over $n_{in}$ variables, and $W_1$ be the width of the first hidden layer. Denote:
\begin{itemize}
\item $\mathbf{n} = (n_1, \ldots, n_{W_{i}})$: the set of its hidden layer neuron activation predicates,
\item $\mathcal{W} \in \{0,1\}^{\mathbb{D} \times 2^{\otimes W_{1}}}$: A Binary Tensor that maps the input to the activation pattern of the first hidden layer, i.e.

\begin{equation} \label{app:eq:TensorW} \forall (x_1, \ldots x_{n_{in}},i_1, \ldots, i_{W_1}) \in \mathbb{D} \times [2]^{\otimes W_1}: \,  \mathcal{W} = \begin{cases}
    1 & \text{if} \, \bigwedge\limits_{j=1}^{W_1} n_{j}(x_1, 
 \ldots, x_{n_{in}}) \\
 0 & \text{otherwise}
\end{cases}
\end{equation}

\item $\mathcal{T} \in \{0,1\}^{2^{\otimes n}}$: A binary tensor that arranges the lookup table (computed in the first step) that maps each activation pattern $\mathbf{n}$ to the model's output. 
\end{itemize}
The function computed by the BNN $B$ can be written as:
\begin{equation} \label{app:eq:bnntensor}
f_{B}(x_1, \ldots, x_{n_{in}}) = \sum\limits_{(i_1, \ldots, i_{W_{1}}) \in [2]^{\otimes W_1}} \mathcal{W}_{x_1, \ldots, x_{n_{in}}, i_1, \ldots i_{w_1}} \cdot \mathcal{T}_{i_1, \ldots, i_{W_{1}}}
\end{equation}

Equation \eqref{app:eq:bnntensor} expresses the function computed by the BNN $B$ as a contraction of two tensors: $\mathcal{W}$ and $\mathcal{T}$. The tensor $\mathcal{T}$ is nothing but a rearrangement of the lookup table constructed in the first step in a tensorized format. The construction of $\mathcal{T}$ requires $\mathcal{O}(2^{W_1})$ running time. On the other hand, even when the network's width is small, a naive construction of the tensor $\mathcal{W}$ (Equation \eqref{app:eq:TensorW}) may scale exponentially with the dimension of the input space. What we will show next is how, in bounded width and sparsity regime, this tensor admits a TT representation of size $\mathcal{O}(R^{W_1} \cdot \texttt{poly}(n_{in},N))$.   

\paragraph{Construction of the tensor $\mathcal{W}$.} A first observation about the tensor $\mathcal{W}$ is that it encodes a CNF formula whose clauses correspond to activations of neurons in $\mathbf{n}$. On the other hand, in the context of BNNs, clauses are represented using reified cardinality constraints. 

The construction procedure of a TT equivalent to the tensor $\mathcal{W}$ runs as follows:

 \begin{enumerate}
     \item For each neuron $n_{j}$, we construct its equivalent LDFA $\mathcal{A}^{n_{j}}$ using the procedure outlined as a warm-up at the beginning of this section. The size of the layer's state space of each of these LDFAs is bounded by $\mathcal{O}(R)$ ($R$ refers to the network's reified cardinality parameter).
     \item Construct the resulting TT corresponding to the product of the LDFAs $\{\mathcal{A}^{n_{j}}\}_{j \in [W_1]}$ (subsection \ref{app:subsec:ldfa}), while keeping the rightmost core tensor with legs open, thereby ignore the final states reached by the TT. 
 \end{enumerate}

 Leveraging the Kronecker product operation over Finite State Automata  \cite{marzouk2024tractability} to perform product operations, the TT equivalent to the product of $W_1$ LDFAs (whose state space is bounded by a constant $R$) can be constructed in $\mathcal{O}(R^{W_1})$ time. Consequently, the entire construction produces a TT whose size is bounded by $\mathcal{O}(R^{W_1} \cdot n_{in} \cdot N)$. The running time complexity of the procedure is also bounded by $\mathcal{O}(R^{W_1} \cdot n_{in} \cdot N)$.

\end{document}